\documentclass[11pt]{article}

\usepackage{pratik} 

\newcommand{\jasa}{0}
\usepackage[left=1in,top=1in,right=1in,bottom=1in,head=0in]{geometry}

\renewcommand{\mathbf}[1]{#1}
\setcounter{footnote}{1}

\usepackage{pgfplots}
\pgfplotsset{compat=newest}
\usepackage{silence}
\WarningFilter*{microtype}{Unable to apply patch}
\WarningFilter{latexfont}{Font shape}
\WarningFilter{latexfont}{Some font}
\WarningFilter{latex}{Float too large for page}

\if1\jasa
\usepackage[nodisplayskipstretch]{setspace}
\setlength{\abovedisplayskip}{3pt}
\setlength{\belowdisplayskip}{3pt}

\usepackage{titlesec}
\titlespacing{\section}{0pt}{\parskip}{0pt plus 2pt minus 2pt}
\titlespacing{\subsection}{0pt}{\parskip}{0pt plus 2pt minus 2pt}
\titlespacing{\subsubsection}{0pt}{\parskip}{-\parskip}
\fi

\def\spacingset#1{\renewcommand{\baselinestretch}
  {#1}\small\normalsize} 

\newcommand{\titletext}{Revisiting Optimism and Model Complexity in the Wake of
  Overparameterized Machine Learning}  

\newcommand{\authortext}{
  Pratik Patil\footremember{berkeleystats}{Department of Statistics,
    University of California, Berkeley.} \\ {\small 
    \texttt{pratikpatil@berkeley.edu}} \and
  Jin-Hong Du\footremember{cmustats}{Department of Statistics and Data Science,
    Carnegie Mellon University.}\footremember{cmumld}{Machine Learning
    Department, Carnegie Mellon University.} \\ {\small
    \texttt{jinhongd@andrew.cmu.edu}} \and 
  Ryan {J.} Tibshirani\footrecall{berkeleystats} \\ {\small
    \texttt{ryantibs@berkeley.edu}}
}

\newcommand{\abstracttext}{
Common practice in modern machine learning involves fitting a large number of 
parameters relative to the number of observations. These overparameterized
models can exhibit surprising generalization behavior, e.g., ``double descent''
in the prediction error curve when plotted against the raw number of model
parameters, or another simplistic notion of complexity. In this paper, we
revisit model complexity from first principles, by first reinterpreting and then
extending the classical statistical concept of (effective) \emph{degrees of
  freedom}. Whereas the classical definition is connected to fixed-X prediction
error (in which prediction error is defined by averaging over the same,
nonrandom covariate points as those used during training), our extension of
degrees of freedom is connected to random-X prediction error (in which
prediction error is averaged over a new, random sample from the covariate
distribution). The random-X setting more naturally embodies modern machine
learning problems, where highly complex models, even those complex enough to
interpolate the training data, can still lead to desirable generalization
performance under appropriate conditions. We demonstrate the utility of our
proposed complexity measures through a mix of conceptual arguments, theory,
and experiments, and illustrate how they can be used to interpret and compare 
arbitrary prediction models.
}

\begin{document}

\if1\jasa
\title{\titletext}
\author{}
\date{}
\maketitle
\vspace{-30pt}

\begin{abstract}
  \abstracttext
\end{abstract}

\clearpage
\spacingset{1.8} 
\fi

\if0\jasa
\title{\titletext}
\author{\authortext}
\date{\vspace{-10pt}}
\maketitle

\begin{abstract}
  \abstracttext
\end{abstract}
\fi

\section{Introduction}
\label{sec:introduction}

Model complexity is a key concept in statistics and machine learning, and is a
core consideration in prediction problems---a higher complexity allows for a
better fit to the training data, but may result in overfitting, whereas a lower
complexity may lack the ability to capture sufficiently rich behavior, and hence
lead to underfitting. There are numerous different ways to quantify the
complexity of a prediction model. One such way is called the (effective)
\emph{degrees of freedom} \citep{efron_1983, efron_1986, hastie_tibshirani_1987}
of a model, which is a classical concept in statistics, and will play a central
role in our paper. This is often interpreted as the number of ``free
parameters'' in the fitted model. 

Meanwhile, driven by the enormous practical successes of neural networks and 
deep learning, there has recently been great interest in the community in
studying \emph{overparameterized models}, where the number of parameters is
large relative to the number of observations. Overparameterized models can
exhibit surprising generalization behavior, in that they can generalize well
even if they perfectly (or nearly) interpolate noisy training data
\citep{zhang_bengio_hardt_recht_vinyals_2016, belkin_hsu_ma_mandal_2019}. 
As we will explain later (\cref{subsec:df-limitations}), classical degrees of
freedom fails to adequately explain this phenomenon. For example, it is not
able to distinguish between interpolating models: the degrees of freedom of any 
interpolator is exactly $n$, the number of training observations. 

The underlying limitation of degrees of freedom, as classically defined, is that
it is tied to a measure of prediction error which we refer to (following
\citealt{rosset_tibshirani_2020}) as \emph{fixed-X} prediction error. In this
measure, prediction error is defined by averaging over the same fixed set of
covariate points as those used during training. In certain problem
settings---that is, low-dimensional, smooth prediction problems---this measure
is a good proxy for \emph{random-X} prediction error, which is given by
averaging over a new random sample from the covariate distribution. Yet, in
high-dimensional and/or nonsmooth prediction problems, fixed-X and random-X
errors can behave quite differently. A generalizing interpolator epitomizes this
difference (\cref{subsec:fixed-random-x}): as $n \to \infty$, it has fixed-X
excess error converging to the noise level but random-X excess error converging
to zero. 

In nearly all modern machine learning prediction problems, random-X error is the
perspective of interest. Given its connection to fixed-X error, it should not be
surprising that classical degrees of freedom can break down for prediction
models such as interpolators, where random-X and fixed-X errors diverge. In this
paper, we propose a new measure of degrees of freedom that connects directly to
random-X prediction error, and allows us to reason about complexity in a
nontrivial way for \emph{any} predictive model, including interpolators. We
provide a simple illustration in \cref{fig:ridgeless-intro}.

\begin{figure}[t]
\includegraphics[width=\textwidth]{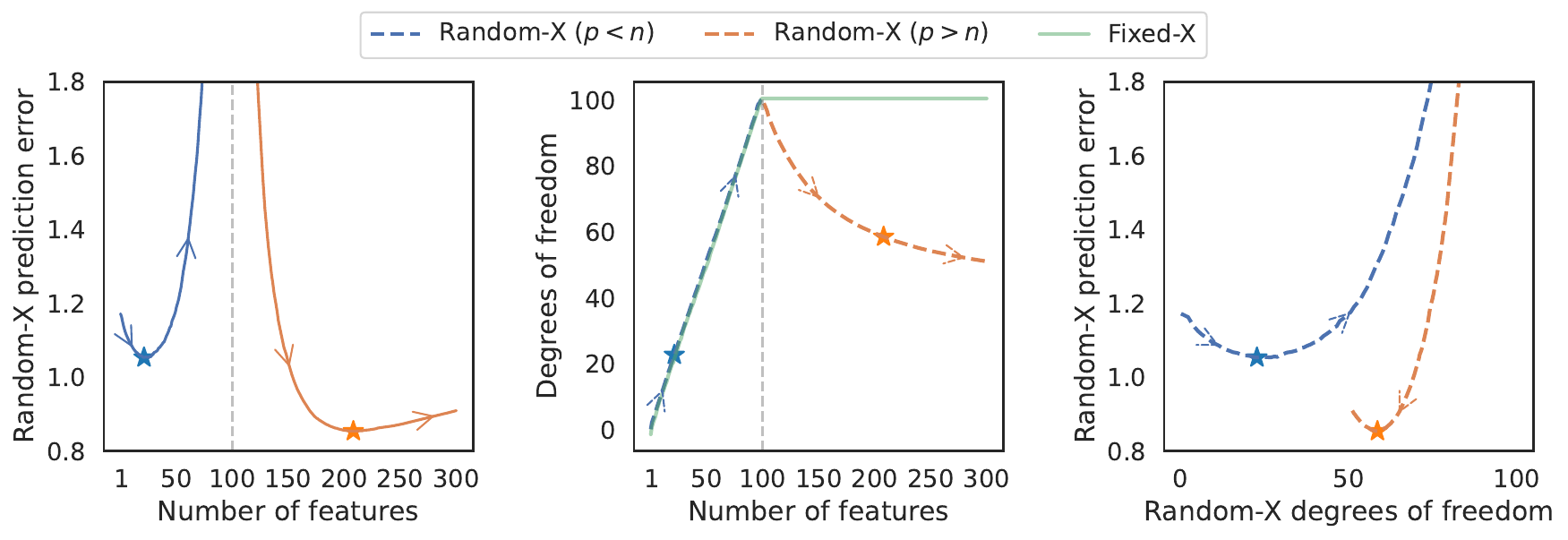}
\caption{An illustration using ridgeless least squares regression as the
  prediction model, trained on $n=100$ samples and $p$ features, where $p$
  ranges from 1 to 300. The true conditional mean is a nonlinear function in the 
  features, and hence adding more features to the working linear model helps its 
  approximation capacity (The precise details are given in
  \cref{app:data-models}). In the left panel, we can see that the random-X
  prediction error curve exhibits ``double descent'' in $p$. In the middle
  panel, the classical (fixed-X) definition of degrees of freedom increases
  linearly for $p \leq n$, but then it flattens out at the trivial answer of $n$
  degrees of freedom for all $p > n$. The ``intrinsic'' random-X degrees of
  freedom, one of two basic versions of random-X degrees of freedom to be
  defined later in \cref{sec:proposal}, is \emph{decreasing} when $p > n$,
  indicating that the ridgeless interpolator is becoming \emph{less} complex as
  the dimensionality grows. In the right panel, we plot the random-X prediction
  error as a function of random-X degrees of freedom. The interpretation: our
  proposed complexity measure maps every overparameterized model onto an
  equivalent underparameterized model, and the best-predicting model (which lies 
  in the overparameterized regime) actually has relatively low complexity.}       
\label{fig:ridgeless-intro}
\end{figure}

\subsection{Summary and outline}

We provide a summary of our contributions and outline the structure of the paper
below. 
    
\paragraph{New random-X measures of degrees of freedom.}

After we review preliminary materials in \cref{sec:preliminaries}, we present
new measures of model complexity in \cref{sec:proposal}. In particular, we
extend the classical notion of degrees of freedom to the setting of random-X.
We do so by first reinterpreting the classical construction of degrees of
freedom in a new light, then translating this to random-X prediction error. We
propose two basic versions of random-X degrees of freedom: one to capture both
bias and variance components of the error, and another based on variance alone. 

\paragraph{Basic properties and theory for random-X degrees of freedom.} 

In \cref{sec:properties}, we describe basic properties of the proposed random-X
degrees of freedom measures, and draw connections to related ideas in the
literature. \cref{sec:theory} derives theory for a few standard prediction
models, such as ridge regression and the lasso, and demonstrates that degrees of
freedom typically decreases as the regularization strength increases, and
typically increases as the number of features increases.     

\paragraph{Numerical experiments for a diverse set of prediction models.}  

In \cref{sec:experiments}, we illustrate the versatility of our complexity
measures by presenting results from numerical experiments using the lasso,
$k$-nearest neighbors regression, and random forests.

\paragraph{Decomposing degrees of freedom under distribution shift.}

In \cref{sec:decomposition}, we discuss how to decompose the random-X degrees of
freedom of a prediction model into constituent parts, so as to quantify the
contribution of various components---such as bias, variance, and covariate
shift---to the final measure of model complexity. This is based on borrowing
ideas from Shapley values.

\subsection{Related work}
\label{sec:related-work}

There is a lot of literature related to the topic of our paper, which we discuss
in two groups.

\paragraph{Model optimism and degrees of freedom.}

Optimism and (effective) degrees of freedom are classical concepts and
well-studied in statistics, with important references being
\citet{efron_1983,efron_1986, efron_2004}. Degrees of freedom for linear 
regression and linear smoothers have a particular simple form, as the trace of
the smoother matrix, and have a long history of study, for example,
\citet{mallows1973some,craven_wahba_1978, hastie_tibshirani_1987,
hastie_tibshirani_1990}. Broadly related to this is the topic of estimating
risk for model selection, which is widely studied and itself carries quite a
rich literature, for example, \citet{sclove_1969,
hocking_1976,akaike_1973,schwarz_1978,thompson_1978_1,thompson_1978_2,
golub_heath_wahba_1979, breiman_freedman_1983,breiman_spector_1992}, and many
others.

A landmark contribution in the study of degrees of freedom and unbiased risk
estimation is known as \emph{Stein's unbiased risk estimator} (SURE), due to
\citet{stein_1981}. This has enabled the development of numerous closed-form
unbiased estimators of degrees of freedom (and fixed-X prediction error) for
methods such as wavelet denoising, shape-constrained regression, quantile
regression, lasso and various generalizations, and low-rank matrix
factorization; see, for example, \citet{donoho1995adapting,
cai1999adaptive,meyer_2000, zou_hastie_tibshirani_2007,
zou2008regularized,tibshirani_taylor_2012,
candes2013unbiased,tibshirani2015degrees,mikkelsen2018degrees, chen2020degrees},
among others. For an alternative perspective based on auxiliary randomization
(which reduces to SURE in a limiting case), see \citet{oliveira2021unbiased,
oliveira2022unbiased}.

The above literature is all rooted in the fixed-X setting, which (as we will
explain precisely in the next section) measures prediction error at the same
fixed covariate points as those used in training. \citet{rosset_tibshirani_2020}
compare and contrast the bias-variance tradeoff, prediction error, and other
core concepts in statistical decision theory in the fixed-X and random-X
settings. Our work builds on theirs and introduces a notion of random-X degrees
of freedom. Though we believe that this should be of general interest, it is of
particular interest for interpolators.  

Closely related to our proposed complexity measure is the recent work of
\citet{luan2021predictive, luan2022measuring, curth2023u}. They propose a
measure of random-X degrees of freedom that is suitable for linear smoothers.
It is related to our approach in this special case, and \cref{subsec:luan}
provides details. Broadly speaking, our approach is more general (accommodates
arbitrary prediction models), and also, allows for both bias and variance
components of the random-X optimism to enter into the complexity measure,
whereas the previous proposals focus on variance alone.

\paragraph{Other complexity measures.}

There are many other criteria for measuring the complexity of a model or an
object. Broadly, this includes ideas from information theory and theoretical
computer science, such as Kolmogorov complexity \citep{kolmogorov1963tables},
minimum message length \citep{wallace1968information}, and minimum description
length \citep{rissanen1978modeling}. Closer to our study, coming from machine
learning theory, are Vapnik-Chervonenkis (VC) dimension
\citep{vapnik1971uniform} and Rademacher complexity
\citep{bartlett2002rademacher}. For a discussion of these concepts and their
role in generalization theory, see, for example,
\citet{shwartz2014understanding} or \citet{mohri2018foundations}. An important
point to clarify is that VC dimension and Rademacher complexity differ from
degrees of freedom in the following sense: the former measures apply to a
\emph{class} of prediction models, whereas the latter applies to a particular
\emph{fitted} prediction model. In other words, degrees of freedom as
complexity measure is more finely-tuned to the \emph{way} in which a given model
is trained, incorporating the action of the fitting algorithm, and the
distribution of the underlying data. As an example, a linear model trained via
least squares and ridge regression (using strong regularization) will have the
same Rademacher complexity, but different degrees of freedom.

\section{Preliminaries}
\label{sec:preliminaries}

We start with a review of fixed-X and random-X prediction error, and classical
(fixed-X) optimism and degrees of freedom. Then we discuss the limitations of 
classical degrees of freedom with respect to understanding overparameterized  
models.  

\subsection{Fixed-X and random-X prediction error}
\label{subsec:fixed-random-x}

Consider a standard regression setup, with independent and identically
distributed (i.i.d.) training samples $(x_i, y_i) \in \RR^p \times \RR$, which
follow the relationship
\begin{equation}
\label{eq:data_model}
y_i = f(x_i) + \eps_i, \quad i \in [n],
\end{equation}
for $f(x) = \EE[y_i | x_i = x]$, and i.i.d.\ mean zero stochastic errors
$\eps_i$, $i \in [n]$. We assume that each $\eps_i$ is independent of
$x_i$. Here and throughout, we abbreviate $[n] = \{1,\dots,n\}$. Also, let
$\sigma^2 = \Var[\eps_i] > 0$ denote the error variance, let $X \in \RR^{n
  \times p}$ denote the feature matrix (with $i\th$ row $x_i$), and let $y \in 
\RR^n$ denote the response vector (with $i\th$ entry $y_i$). 

Suppose that we have a model fitting procedure \smash{$\hf$} which produces the 
predictor \smash{$\hf(\cdot; X,y) : \RR^p \to \RR$} when trained on the data
$(X,y)$. Thus, \smash{$\hf(x; X,y)$} is an estimate of $f(x)$. When the training
data is clear from the context, we will simply write this as \smash{$\hf(x)$}.  

In \emph{fixed-X} prediction error, we measure the error of \smash{$\hf$} at a
set of new response values $y_i^*$, $i \in [n]$, where each $y_i^*$ and $y_i$
are i.i.d.\ conditional on $x_i$. Formally, this is  
\begin{equation}
\label{eq:err-F}
\err\f(\hf) = \EE \bigg[ \frac{1}{n} \sum_{i=1}^n \big( y^*_i - \hf(x_i) \big)^2
\, \Big| \, X \bigg],
\end{equation}
In \emph{random-X} prediction error, we measure the error of \smash{$\hf$} at a
new sample $(x_0,y_0) \in \RR^p \times \RR$, which is i.i.d.\ to the training
samples $(x_i,y_i)$, $i \in [n]$. Formally, this is  
\begin{equation}
\label{eq:err-R}
\err\r(\hf) = \EE \big[\big( y_0 - \hf(x_0) \big)^2\big].
\end{equation}
To be clear, the expectation in \eqref{eq:err-F} is taken with respect to
$y,y^*$, and is conditional on $X$, whereas that in \eqref{eq:err-R} is taken
with respect to $X, y, x_0, y_0$. 

While random-X prediction error is the central object of interest in machine
learning theory and in many modern statistics problems, fixed-X prediction error
has a long history of study in statistics; we refer to
\citet{rosset_tibshirani_2020} (and references therein) for an in-depth
discussion. For our purposes, to motivate our study, it suffices to make only
high-level comments to compare them. For smooth functions \smash{$f,\hf$} in
low dimensions (i.e., $n$ large compared to $p$), one can generally expect
\smash{$\err\f(\hf)$} and \smash{$\err\r(\hf)$} to behave similarly. For
example, empirical process theory offers uniform control on the deviation
between the $L^2$ norms based on taking a sample average over i.i.d.\ draws
$x_i$, $i \in [n]$, and taking an expectation with respect to $x_0 \sim P_x$.
Such results can be used to derive an asymptotic equivalence (and nonasymptotic
bounds) between \smash{$\err\f(\hf)$} and \smash{$\err\r(\hf)$} in certain
settings.

However, for nonsmooth functions and/or high-dimensional problem settings, the
two metrics can behave quite differently. Consider, as an example, a
generalizing interpolator: here, we would have random-X excess error
\smash{$\err\r(\hf) - \err\r(f) \to 0$} as $n \to \infty$, but fixed-X excess
error
\[
\err\f(\hf) - \err\f(f) = \EE \bigg[ \frac{1}{n} \sum_{i=1}^n (y^*_i -y_i)^2 \,
\Big| \, X \bigg] - \sigma^2 = \sigma^2,
\]
where recall $\sigma^2 = \Var[\eps_i]$ in the data model \eqref{eq:data_model}.
This represents a huge difference between the two metrics: one vanishing, and
the other pinned at the noise level.

\subsection{Fixed-X optimism and degrees of freedom} 
\label{subsec:fixed-x-df}

The (effective) \emph{degrees of freedom} of \smash{$\hf$} is defined as  
\begin{equation}
\label{eq:df-F}
\df\f(\hf) = \frac{1}{\sigma^2} \sum_{i=1}^n \Cov[y_i, \hf(x_i) \,|\, X]. 
\end{equation}
This is often motivated intuitively as follows: the more complex the fitting
procedure \smash{$\hf$}, the more ``self-influence'' each response $y_i$ will
have on the corresponding fitted value \smash{$\hf(x_i)$} (and hence the higher
the degrees of freedom in total). An important property of degrees of freedom
is its intimate connection to \emph{fixed-X} optimism, which is defined as
\begin{equation}
\label{eq:opt-F}
\opt\f(\hf) = \err\f(\hf) - \EE \bigg[ \frac{1}{n} \sum_{i=1}^n \big( y_i -
  \hf(x_i) \big)^2 \, \Big| \, X \bigg].
\end{equation}
The second quantity on the right-hand side above is simply the training error
(conditional on $X$). The precise connection between \eqref{eq:df-F} and
\eqref{eq:opt-F} is given by what is sometimes called \emph{Efron's optimism
theorem}, attributed to \citet{efron_1986, efron_2004}:
\begin{equation}
\label{eq:opt-df-F}
\opt\f(\hf) = \frac{2 \sigma^2}{n} \df\f(\hf). 
\end{equation}
This holds without any assumptions on \smash{$\hf$}, and can be checked via 
simple algebra (add and subtract $y^*_i$ within the square in each summand in
\smash{$\err\f(\hf)$} in \eqref{eq:err-F}, then expand and simplify).

The rest of this subsection can be skipped without interrupting the flow of main
ideas. We use it as an opportunity to provide general context about classical
interest in degrees of freedom, as alluded to in the related work subsection.
\emph{Stein's lemma} \citep{stein_1981} says if \smash{$\hf$} is weakly
differentiable as a function of $y$, and we assume Gaussian errors $\eps_i$, $i
\in [n]$ in \eqref{eq:data_model}, then
\begin{equation}
\label{eq:df-F-stein}
\df\f(\hf) = \EE \bigg[ \sum_{i=1}^n \frac{\partial \hf(x_i)}{\partial y_i} \,
\Big| \, X \bigg].     
\end{equation}
Based on \eqref{eq:df-F-stein}, we are able to form an unbiased estimate of
\smash{$\df\f(\hf$)}, namely, \smash{$\hdf\f = \sum_{i=1}^n \partial \hf(x_i)
  /\partial y_i$} (if we are able to compute it). From \eqref{eq:opt-F} and
\eqref{eq:opt-df-F}, we see that this in turn provides an unbiased estimate of
fixed-X prediction error, namely, \smash{$\frac{1}{n} \sum_{i=1}^n (y_i -
  \hf(x_i))^2 + 2 \sigma^2 \hdf\f$}.

Thus we can see that there is a clear interest in estimating degrees of freedom,
and utilizing Stein's formula, in order to estimate fixed-X prediction error.
However, this is not really aligned with the general focus of our paper
henceforth, and our paper actually proceeds in the opposite direction: we will
presume an estimate of prediction error in order to estimate degrees of freedom.
As we will see in \cref{sec:proposal}, this is a fruitful way to extend degrees
of freedom past the fixed-X setting.

\subsection{Limitations of classical degrees of freedom}
\label{subsec:df-limitations}

A critical limitation of classical (fixed-X) degrees of freedom, as defined in
\eqref{eq:df-F}, is straightforward to state. For any interpolator, satisfying
\smash{$\hf(x_i) = y_i$}, $i \in [n]$, we have the trivial answer: 
\begin{equation}
\label{eq:df-F-interpolator}
\df\f(\hf) = \frac{1}{\sigma^2} \sum_{i=1}^n \Cov[y_i, y_i \,|\, X] = n. 
\end{equation}
If characterizing fixed-X optimism is truly the end goal of degrees of freedom,
then we should not be bothered by this (seemingly) obvious fact since any
interpolator has zero training error and the same fixed-X prediction error.
Yet, if we are to think of degrees of freedom as a general measure of model
complexity, then \eqref{eq:df-F-interpolator} leaves a lot to be desired. As we
know from the recent wave of work in machine learning and statistics (for
example, see the review articles \citet{belkin2021fit,bartlett2021deep} and
references therein), some interpolators---in particular, implicitly regularized
ones---are actually quite well-behaved and can generalize well to unseen data.
In classical degrees of freedom, thus, we are lacking a complexity measure that
can distinguish between well-behaved interpolators, which are smooth in between
the covariate points, and wild ones, which are arbitrarily nonsmooth.

The next section develops an extension of the classical notion of degrees of
freedom which connects to random-X (rather than fixed-X) prediction error. As
we will see, the extension will overcome the limitation just described---the new
notion will assign a meaningful complexity measure to every prediction model,
including interpolators.

\section{Random-X degrees of freedom}
\label{sec:proposal}

In this section, we first present a fresh reinterpretation of fixed-X degrees of
freedom. Then we show how this leads to a generalization of degrees of freedom
in the random-X setting.

\subsection{Reinterpreting fixed-X degrees of freedom}
\label{subsec:df-reinterpretation}

We first recall a standard fact about fixed-X degrees of freedom: if the feature
matrix $X \in \RR^{n \times p}$ has linearly independent columns, then least
squares regression of $y$ on $X$, given by \smash{$\hf\ls(x) = x^\top\hbeta\ls$}
where \smash{$\hbeta\ls = (X^\top X)^{-1} X^\top y$}, has degrees of freedom
exactly $p$. This is simply the number of parameters in \smash{$\hbeta\ls$}.
This fact is easily verified from \eqref{eq:df-F}, abbreviating $P_X = X (X^\top
X)^{-1} X^\top$:
\begin{align}
\nonumber
\df\f(\hf\ls) 
&= \frac{1}{\sigma^2} \tr( \Cov[X \hbeta\ls, y \,|\, X] ) \\   
\nonumber
&= \frac{1}{\sigma^2} \tr( \Cov[P_X \hspace{1pt} y, y \,|\, X] ) \\ 
\label{eq:df-F-ls}
&= \tr(P_X) \\
&= p,
\end{align}
where we used $\Cov[P_X \hspace{1pt} y, y \,| X] = P_X \Cov[y | X] =
\sigma^2 P_X$ in the second-to-last line, and we used the cyclic property 
$\tr(P_X) = \tr(X^\top X (X^\top X)^{-1}) = p$ in the last line. 

Now we show that the fact about least squares in \eqref{eq:df-F-ls}, which is
well-known in the literature, can be used to reinterpret fixed-X degrees of
freedom in a new light. Recalling Efron's optimism formula \eqref{eq:opt-df-F},
the least squares regression predictor \smash{$\hf\ls$} has fixed-X optimism
\[
\opt\f(\hf\ls) = \frac{2 \sigma^2}{n} p.
\]
Given an arbitrary predictor \smash{$\hf$}, we know that it still satisfies
(copying \eqref{eq:opt-df-F} here for convenience)
\[
\opt\f(\hf) = \frac{2 \sigma^2}{n} \df\f(\hf).
\]
Comparing the last two displays, we see that we may hence interpret the degrees
of freedom of \smash{$\hf$} as the value of $d \in [0, \infty]$ for which least
squares predictor on $d$ linearly independent features has the same fixed-X
optimism as \smash{$\opt\f(\hf)$}. This is simply a reformulation of the
original definition \eqref{eq:df-F}, and the next proposition records this idea
precisely.

\begin{proposition}
\label{prop:df-reinterpretation}
For each fixed $d \leq n$, let \smash{$\tX_d \in \RR^{n \times d}$} be an
arbitrary feature matrix having linearly independent columns, and consider
\smash{$\hf\ls(\cdot; \tX_d, y)$}, the predictor from least squares regression 
of $y$ on \smash{$\tX_d$}, which we call our ``reference'' model, and abbreviate
as \smash{$\hf\rf_d$}.
This satisfies       
\begin{equation}
\label{eq:opt-F-ls}
\opt\f(\hf\rf_d) = \frac{2 \sigma^2}{n} d, \quad d = 1,\dots,n.
\end{equation}
Let us extend these reference values so that we may write for all nonnegative $d$,  
\begin{equation}
\label{eq:opt-F-ls-extended}
\bopt\f(\hf\rf_d) = \frac{2 \sigma^2}{n} d, \quad d \in [0, \infty].
\end{equation}
Given an arbitrary predictor \smash{$\hf = \hf(\cdot; X, y)$}, define $d$ to be
the unique nonnegative number for which 
\begin{equation}
\label{eq:opt-F-match}
\opt\f(\hf) = \bopt\f(\hf\rf_d).
\end{equation}
Then \smash{$\df\f(\hf) = d$}.
\end{proposition}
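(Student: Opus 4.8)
The plan is to observe that this statement is a direct reformulation assembled from two facts already in hand: the least-squares degrees-of-freedom computation \eqref{eq:df-F-ls}, and Efron's optimism theorem \eqref{eq:opt-df-F}. Since both hold with no assumptions on the fitting procedure, the proof reduces to stringing them together and then solving a single scalar equation.

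First I would verify \eqref{eq:opt-F-ls}. For each integer $d \in \{1,\dots,n\}$, the reference predictor \smash{$\hf\rf_d = \hf\ls(\cdot;\tX_d,y)$} is least squares on a feature matrix \smash{$\tX_d$} with linearly independent columns, so \eqref{eq:df-F-ls} gives \smash{$\df\f(\hf\rf_d) = d$}. Substituting into Efron's theorem \eqref{eq:opt-df-F} yields \smash{$\opt\f(\hf\rf_d) = \tfrac{2\sigma^2}{n}\,\df\f(\hf\rf_d) = \tfrac{2\sigma^2}{n}\,d$}, which is exactly \eqref{eq:opt-F-ls}. The extension \eqref{eq:opt-F-ls-extended} requires no argument of its own: it simply \emph{defines} the reference optimism \smash{$\bopt\f(\hf\rf_d)$} by the same linear formula on all of $[0,\infty]$, coinciding with \eqref{eq:opt-F-ls} at the integer points where genuine reference models exist.

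Next I would carry out the matching. Applying Efron's theorem \eqref{eq:opt-df-F} once more, now to the arbitrary predictor \smash{$\hf$}, gives \smash{$\opt\f(\hf) = \tfrac{2\sigma^2}{n}\,\df\f(\hf)$}. The defining relation \eqref{eq:opt-F-match} then reads
\[
\frac{2\sigma^2}{n}\,\df\f(\hf) = \bopt\f(\hf\rf_d) = \frac{2\sigma^2}{n}\,d,
\]
and since $\sigma^2 > 0$ with $n$ fixed, cancelling the common factor $\tfrac{2\sigma^2}{n}$ forces $d = \df\f(\hf)$, as claimed.

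The only point that genuinely needs care—and the closest thing here to an obstacle—is justifying that \eqref{eq:opt-F-match} pins down a \emph{unique} value of $d$, since otherwise the definition of $d$ would be ambiguous. This follows because the extended reference map \smash{$d \mapsto \bopt\f(\hf\rf_d) = \tfrac{2\sigma^2}{n}\,d$} is strictly increasing on $[0,\infty]$ (again using $\sigma^2 > 0$), hence injective, so at most one $d$ can satisfy \eqref{eq:opt-F-match}; existence is immediate, since $d = \df\f(\hf)$ itself solves it. The matching value is therefore well-defined and equals \smash{$\df\f(\hf)$}.
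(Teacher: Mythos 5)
Your proof is correct and follows essentially the same route as the paper's: apply Efron's optimism theorem \eqref{eq:opt-df-F} to both sides of \eqref{eq:opt-F-match} and cancel the common factor $2\sigma^2/n$, with the reference values \eqref{eq:opt-F-ls} obtained from the least-squares computation \eqref{eq:df-F-ls}. Your added remarks on uniqueness (strict monotonicity of $d \mapsto \tfrac{2\sigma^2}{n}d$) and on the extension being a definition are sound, and merely make explicit what the paper's one-line proof leaves implicit.
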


\begin{proof}
The proof is immediate. The left-hand side in \eqref{eq:opt-F-match} equals
\smash{$(2 \sigma^2 / n) \df\f(\hf)$} and the right-hand side equals \smash{$(2 
  \sigma^2 / n) d$}. Cancelling the common factor of $2 \sigma^2 / n$ gives the 
result.
\end{proof}

Next we show how to lift this idea to the random-X setting.

\subsection{Defining random-X degrees of freedom}

The idea behind \cref{prop:df-reinterpretation} is both fairly natural and
fairly general. To cast the core idea at a high level, in order to define the
complexity of a given prediction model \smash{$\hf$}, we require two things:   

\begin{enumerate}[label=\roman*.]
\item a \emph{metric} $\met$, which we assume (without loss of generality) is
  negatively-oriented: the lower the value of \smash{$\met(\hf)$}, the less
  complex we deem \smash{$\hf$}; 

\item a \emph{reference class} \smash{$\{ \hf\rf_d : d \in D \}$}, which is a
  class of models indexed by a number of parameters $d$, assumed to be
  ``canonical'' in some sense to the prediction task at hand.  
\end{enumerate}

We then assign to \smash{$\hf$} a complexity of $d$ where $d$ is smallest value
in $D$ for which \smash{$\met(\hf) \leq \met(\hf\rf_d)$}. In other words, it is
defined to be the number of parameters in the smallest reference model whose
metric value is at least that of \smash{$\hf$}.

Fixed-X degrees of freedom is a special case of this general recipe, in which
the metric is implicitly taken to be fixed-X optimism---but suitably extended so
that this metric ranges over the full set of nonnegative reals, and we can
always achieve equality: \smash{$\met(\hf) = \met(\hf\rf_d)$} for some $d \geq
0$. The reference class is taken to be least squares regression on an arbitrary 
full rank feature matrix.

Towards a random-X extension, a natural inclination would be to maintain least
squares regression as the reference class, and simply replace fixed-X optimism
\eqref{eq:opt-F} with random-X optimism, defined as
\begin{equation}
\label{eq:opt-R}
\opt\r(\hf) = \err\r(\hf) - \EE \bigg[ \frac{1}{n} \sum_{i=1}^n \big( y_i -
  \hf(x_i) \big)^2 \bigg].
\end{equation}
This is now the random-X prediction error (rather than the fixed-X error) minus
the training error. Before we pursue a random-X extension, it is important to
note that the classical definition, which uses least squares and fixed-X
optimism in the equivalent characterization given in
\cref{prop:df-reinterpretation}, is special for two reasons. The metric
assigned to the reference model here, i.e., the fixed-X optimism
\eqref{eq:opt-F-ls} of least squares, depends neither on $X$ nor on the law of
$y | X$, beyond assuming isotropic errors (as we have done throughout, i.e.,
$\Cov[y | X] = \sigma^2 I$, with $I$ being the $n \times n$ identity matrix).

In comparison, the random-X optimism \eqref{eq:opt-R} of least squares
regression of $y$ on $X$ depends on both the distribution of $X$ and of $y |
X$. This means that we will have to be more precise in defining the distribution
of the data on which we measure the random-X optimism of least squares, so that 
this quantity becomes well-defined. The next definition provides details.

\begin{definition}
\label{def:df-R}
Assume that $n \geq 2$. For each fixed $d \leq n-1$, let \smash{$\tX_d \in
  \RR^{n \times d}$} have i.i.d.\ rows from $\cN(0,\Sigma)$, with $\Sigma \in
\RR^{d \times d}$ an arbitrary deterministic positive definite covariance
matrix. Let     
\begin{equation}
\label{eq:df-R-response}
\ty | \tX_d \sim \cN(\tX_d \hspace{1pt} \beta, \sigma^2 I),  
\end{equation}
with $\beta \in \RR^d$ an arbitrary deterministic coefficient vector. Consider
\smash{$\hf\ls(\cdot; \tX_d, \ty)$}, the predictor from least squares regression
of \smash{$\ty$} on \smash{$\tX_d$}, as our reference model, which we abbreviate
as \smash{$\hf\rf_d$}. We have 
\begin{equation}
\label{eq:opt-R-ls}
\opt\r(\hf\rf_d) = \sigma^2 \bigg( \frac{d}{n} + \frac{d}{n-d-1} \bigg), \quad d =
1,\dots,n-1. 
\end{equation}
Let us extend these reference values so that we may write 
\begin{equation}
\label{eq:opt-R-ls-extended}
\bopt\r(\hf\rf_d) = \sigma^2 \bigg( \frac{d}{n} + \frac{d}{n-d-1} \bigg), \quad 
d \in [0,n-1]. 
\end{equation}
Then, given an arbitrary predictor \smash{$\hf = \hf(\cdot; X, y)$}, we define
\smash{$\df\r(\hf) = d$} as the unique $d \in [0,n-1]$ for which    
\begin{equation}
\label{eq:opt-R-match}
\opt\r(\hf) = \bopt\r(\hf\rf_d).
\end{equation}
\end{definition}

The result in \eqref{eq:opt-R-ls} is driven by the random-X prediction error of
least squares regression for jointly Gaussian data, which is well-known, and can
be found in, e.g., \citet{stein1960multiple, tukey1967discussion,hocking_1976,
thompson_1978_1, thompson_1978_2,dicker_2013,rosset_tibshirani_2020}, among
others. We give a derivation in \cref{app:opt-R-ls} for completeness.

Several remarks are in order, to discuss random-X degrees of freedom as defined
in \cref{def:df-R} and compare it to the classical notion of fixed-X degrees of
freedom.

\begin{itemize}
\item Fixed-X degrees of freedom ranges from 0 to $\infty$.\footnote{In fact,
    negative values are also allowed, but we implicitly rule this out in
    \cref{prop:df-reinterpretation}.}  That is, we cannot rule out arbitrarily
  large values of fixed-X degrees of freedom, a property that has been
  criticized by some authors (e.g., \citet{janson_fithian_hastie_2015}). In
  contrast, random-X degrees of freedom ranges from 0 to $n-1$. The reason for
  this is that the random-X optimism of least squares diverges at $d = n-1$,
  whereas the fixed-X optimism does not (and only diverges as $d \to\infty$). In
  other words, the random-X optimism of least squares sweeps the entire range of
  possible optimism values as we vary the number of features from 0 to $n-1$,
  and this places a finite upper limit on random-X degrees of freedom of $n-1$,
  achieved when the given predictor has infinite random-X optimism.    

\item The two metrics used in defining fixed-X and random-X degrees of freedom,
  namely, fixed-X and random-X optimism, scale differently with the number of
  parameters $d$ in the underlying reference model, least squares regression. 
  As we can see, \eqref{eq:opt-F-ls-extended} scales linearly with $d$, whereas 
  \eqref{eq:opt-R-ls-extended} scales nonlinearly. For large $d$ (close to $n$),
  the latter demonstrates ``diminishing returns'': large increases in random-X
  optimism only contribute small increases in random-X degrees of freedom. 
  \cref{fig:ruler} gives an illustration.  

\begin{figure}[htb]
\centering
\includegraphics[width=0.85\textwidth]{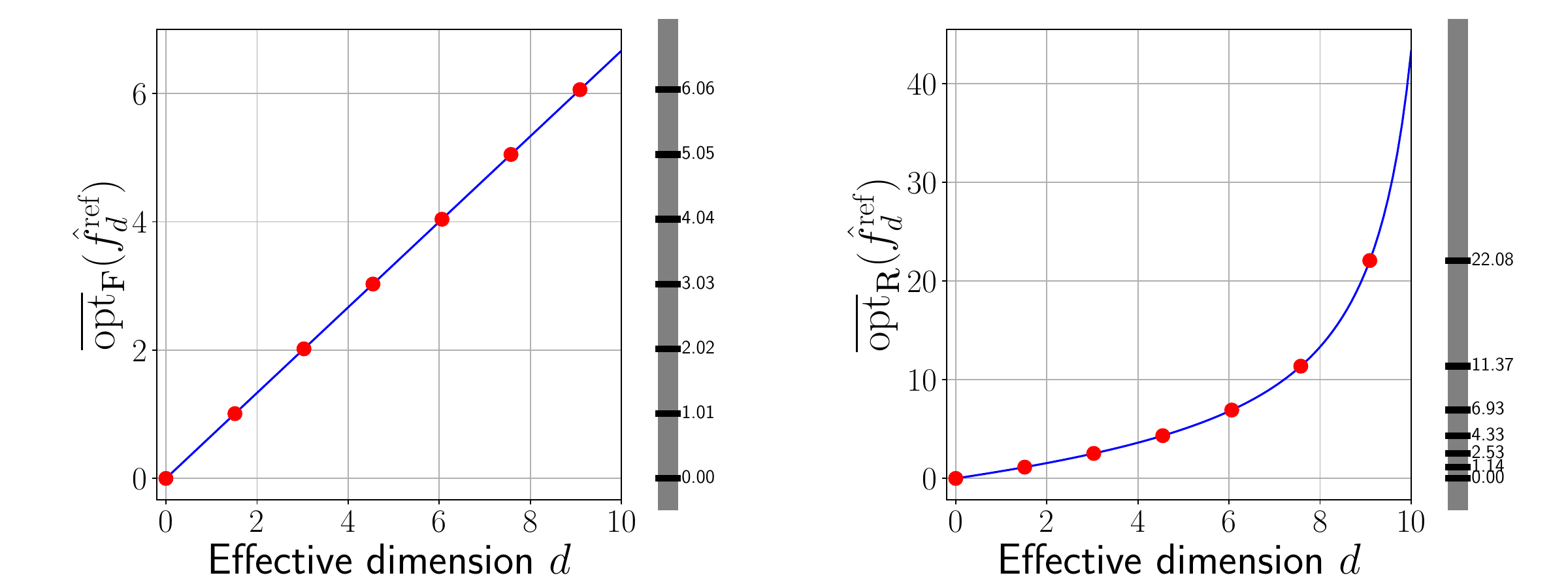}
\caption{An illustration of the metrics that underlie fixed-X and random-X
  degrees of freedom: fixed-X and random-X optimism of least squares regression
  on $d$ features.}  
\label{fig:ruler}
\end{figure}

\item The choice of Gaussian features \smash{$\tX_d$} in \cref{def:df-R}
  facilitates the calculation of the random-X optimism of least squares
  regression \eqref{eq:opt-R-ls}, since we can leverage well-known properties of
  the (inverse) Wishart distribution. Interestingly, we can see that the result
  \eqref{eq:opt-R-ls} does not depend on the feature covariance $\Sigma$. By
  standard arguments in random matrix theory, as explained in
  \cref{subsec:opt-universality}, the formula \eqref{eq:opt-R-ls} remains
  asymptotically valid (as $d/n \to \xi < 1$) for a broad class of feature
  models. 

\item The linear mean \smash{$\EE[\ty | \tX_d] = \tX_d \hspace{1pt}\beta$} in
  \cref{def:df-R} is important, but the assumption of Gaussian errors in
  \eqref{eq:df-R-response} is not. The calculations in \cref{app:opt-R-ls}
  actually only assume isotropic errors (i.e., \smash{$\ty = \tX_d \hspace{1pt}
    \beta + v$}, where \smash{$v | \tX_d$} has mean zero and covariance
  $\sigma^2 I$). Moreover, the random-X optimism \eqref{eq:opt-R-ls} does not
  depend on the underlying signal vector $\beta$ (due to the unbiasedness of
  underparameterized least squares regression), and only depends on the noise
  level $\sigma^2$. 
\end{itemize}

\subsection{An intrinsic version of model complexity}

The reference model we use in \cref{def:df-R} is least squares regression on 
\emph{well-specified} data, where the mean is linear in the covariates, as
can be seen in \eqref{eq:df-R-response}. As previously commented (and verified
in \cref{app:opt-R-ls}), the least squares predictor is unbiased in this case,
and its random-X prediction error and thus random-X optimism is comprised of
pure variance. 

Therefore, when we match the observed optimism to the reference one in
\eqref{eq:opt-R-match}, we are comparing \smash{$\opt\r(\hf)$}---which is
generically comprised of both bias and variance, to
\smash{$\bopt\r(\hf\rf_d)$}---which is made up of variance alone. This is
intentional. The notion of random-X degrees of freedom from \cref{def:df-R}
determines the complexity of the given predictor \smash{$\hf$} by incorporating
the ``full effect'' of the data at hand, allowing for potential model
misspecification to enter into the calculation of optimism. To emphasize, we
will sometimes refer to this as the \emph{emergent} random-X degrees of freedom.

Alternatively, we might want to match variance to variance in determining
degrees of freedom, i.e., we might want to exclude bias effects in calculating
the random-X optimism of the given model \smash{$\hf$}. This gives rise to a
different notion of model complexity, which we define next.

\begin{definition}
\label{def:df-R-intrinsic}
Under the exact same setup as in \cref{def:df-R}, draw $v \sim \cN(0, \sigma^2  
I)$, independent of everything else. We define \smash{$\df\r\i(\hf) = d$} to 
be the unique $d \in [0,n-1]$ for which 
\begin{equation}
\label{eq:opt-R-intrinsic-match}
\opt\r(\hf(\cdot; X, v)) = \bopt\r(\hf\rf_d). 
\end{equation}
\end{definition}

The difference between \eqref{eq:opt-R-match}, \eqref{eq:opt-R-intrinsic-match}
is that the latter measures the random-X optimism of \smash{$\hf$} when it is
being trained and tested on ``pure noise'' $v \sim \cN(0, \sigma^2 I)$. Because
the random-X optimism of least squares does not depend on $\beta$ in
\eqref{eq:df-R-response}, note that we may set $\beta = 0$ and write
\eqref{eq:opt-R-intrinsic-match} equivalently as     
\[
\opt\r(\hf(\cdot; X, v)) = \bopt\r(\hf\ls(\cdot; \tX_d, v)).
\]
We call the quantity \smash{$\df\r\i(\hf)$} in \cref{def:df-R-intrinsic} the
\emph{intrinsic} random-X degrees of freedom of \smash{$\hf$}. It can be
interpreted as the model complexity that is intrinsic or inherent to the model
\smash{$\hf$}, a reflection of its ability to overfit to pure noise (calibrated
to that of least squares). 

In what follows, we will further examine the relationship between emergent and
intrinsic random-X degrees of freedom, and learn through theory and experiments
that the emergent notion is generally larger than the intrinsic one. In short,
the presence of bias generally ``adds complexity''.

\subsection{Universality of random-X optimism for least squares}
\label{subsec:opt-universality}

As is well-known to those versed in random matrix theory, the random-X
prediction error of least squares regression, for well-specified,
underparameterized data models, displays a remarkable degree of universality. 
This is studied in, e.g., \citet{girko1990theory, girko1995statistical,
  verdu1997multiuser, verdu1998multiuser, tse1999linear, tse2000linear,
  serdobolskii2001solution, serdobolskii2002unimprovable}, among others. Thus,
the random-X optimism also has a universal limit under proportional asymptotics,
as noted in \citet{rosset_tibshirani_2020}. For completeness, we relay this
precisely below. 

\begin{theorem}
\label{thm:opt-R-ls-limit}
Assume \smash{$\tX_d = Z \Sigma^{1/2}$} where $Z \in \RR^{n \times d}$ has
i.i.d.\ entries with zero mean, unit variance, and bounded moments up to order
$4 + \delta$ for some $\delta > 0$, and $\Sigma \in \RR^{d \times d}$ is an
arbitrary deterministic positive definite covariance matrix. Also assume for an
arbitrary deterministic signal vector $\beta \in \RR^d$, 
\[
\ty = \tX_d \hspace{1pt} \beta + v,
\quad \text{where $\EE[v | \tX_d] = 0$ and $\,\Cov[v | \tX_d] = \sigma^2 I$}. 
\]
Then as $n, d \to \infty$ such that $d / n \to \xi \in (0, 1)$, we have, almost
surely with respect to \smash{$\tX_d$},   
\[
\opt\r(\hf\ls(\cdot; \tX_d, \ty) \,|\, \tX_d) \to \sigma^2 \bigg( \xi +  
\frac{\xi}{1-\xi} \bigg),
\]
where \smash{$\opt\r(\hf\ls(\cdot; \tX_d, \ty) \,|\, \tX_d) = \EE[ (\ty_0 - 
  \tx_0^\top \hbeta\ls)^2 - \| \ty - \tX_d \hspace{1pt} \hbeta\ls \|_2^2 / n
  \,|\, \tX_d ]$} denotes the random-X optimism conditional on \smash{$\tX_d$}
(and \smash{$(\tx_0, \ty_0)$} is a test point that is i.i.d.\ to the training
data \smash{$(\tX_d, \ty)$}). 
\end{theorem}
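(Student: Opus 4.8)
The plan is to evaluate the conditional random-X optimism exactly at finite $n,d$, thereby reducing the theorem to a single almost-sure limit for a trace functional of a sample covariance matrix, and then to invoke standard random matrix theory. First I would write $\hbeta\ls = (\tX_d^\top \tX_d)^{-1}\tX_d^\top \ty$ and substitute $\ty = \tX_d\beta + v$, which gives $\hbeta\ls - \beta = (\tX_d^\top\tX_d)^{-1}\tX_d^\top v$; conditionally on $\tX_d$ this has mean zero and covariance $\sigma^2(\tX_d^\top\tX_d)^{-1}$, using only $\EE[v \,|\, \tX_d]=0$ and $\Cov[v \,|\, \tX_d]=\sigma^2 I$ (Gaussianity of the noise is not needed). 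For the test term, integrating over the independent test point $(\tx_0,v_0)$ and the training noise $v$, and using $\EE[\tx_0\tx_0^\top]=\Sigma$, the cross term in $v_0$ vanishes and
\[
\EE\big[(\ty_0 - \tx_0^\top\hbeta\ls)^2 \,\big|\, \tX_d\big] = \sigma^2 + \sigma^2\tr\big(\Sigma(\tX_d^\top\tX_d)^{-1}\big).
\]
For the training term, letting $P_{\tX_d}$ be the orthogonal projection onto the column space of $\tX_d$, the well-specified mean cancels since $(I-P_{\tX_d})\tX_d\beta=0$, so the residual is $\ty - \tX_d\hbeta\ls = (I-P_{\tX_d})v$ and $\EE[\|\ty-\tX_d\hbeta\ls\|_2^2/n \,|\, \tX_d] = \sigma^2(n-d)/n$. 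Subtracting yields the exact identity
\[
\opt\r\big(\hf\ls(\cdot;\tX_d,\ty)\,\big|\,\tX_d\big) = \sigma^2\frac{d}{n} + \sigma^2\tr\big(\Sigma(\tX_d^\top\tX_d)^{-1}\big).
\]

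Next, using $\tX_d = Z\Sigma^{1/2}$ we have $\tX_d^\top\tX_d = \Sigma^{1/2}Z^\top Z\,\Sigma^{1/2}$, and the cyclic property of the trace makes the covariance cancel: $\tr(\Sigma(\tX_d^\top\tX_d)^{-1}) = \tr((Z^\top Z)^{-1})$ (this also explains, at finite $n$, why the reference optimism does not depend on $\Sigma$). Since the first term tends to $\sigma^2\xi$ trivially, the theorem reduces to $\tr((Z^\top Z)^{-1}) \to \xi/(1-\xi)$ almost surely. Writing $\hSigma_Z = n^{-1}Z^\top Z$, this equals $\tfrac{d}{n}\cdot\tfrac1d\tr(\hSigma_Z^{-1})$, and $\tfrac1d\tr(\hSigma_Z^{-1}) = \int x^{-1}\,dF_d(x)$ where $F_d$ is the empirical spectral distribution of $\hSigma_Z$. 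By the Marchenko--Pastur theorem, $F_d$ converges weakly almost surely to the Marchenko--Pastur law $F_\xi$ on $[(1-\sqrt\xi)^2,(1+\sqrt\xi)^2]$, whose negative-first moment is $\int x^{-1}\,dF_\xi(x) = 1/(1-\xi)$; combining gives $\xi\cdot\tfrac1{1-\xi} = \xi/(1-\xi)$, as required.

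The main obstacle will be passing from weak convergence of $F_d$ to convergence of the integral $\int x^{-1}\,dF_d(x)$, because $x\mapsto x^{-1}$ is unbounded near the lower spectral edge. I would close this gap by controlling the smallest eigenvalue: since $\xi<1$ the lower edge $(1-\sqrt\xi)^2$ is strictly positive, and by the Bai--Yin theorem the smallest eigenvalue of $\hSigma_Z$ converges almost surely to $(1-\sqrt\xi)^2>0$. This is precisely where the finite fourth moment (guaranteed by the $4+\delta$ assumption) is used. It follows that, for any fixed small $\epsilon>0$, almost surely all eigenvalues eventually lie in $[(1-\sqrt\xi)^2-\epsilon,\,(1+\sqrt\xi)^2+\epsilon]$, a compact interval bounded away from $0$ on which $x^{-1}$ is bounded and continuous, so weak convergence upgrades to convergence of the integral. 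The remaining pieces---the exact optimism identity, the cancellation of $\Sigma$, and the evaluation of the Marchenko--Pastur inverse moment (which in the Gaussian case can be cross-checked against the inverse-Wishart identity $\EE[(Z^\top Z)^{-1}] = (n-d-1)^{-1}I$)---are routine.
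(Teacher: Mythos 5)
Your proposal is correct, and its first half---the exact finite-sample identity $\opt\r(\hf\ls \,|\, \tX_d) = \sigma^2 d/n + \sigma^2 \tr[\Sigma(\tX_d^\top \tX_d)^{-1}]$, followed by the cyclic-trace cancellation of $\Sigma$---is exactly the reduction the paper performs (via the calculations in \cref{app:opt-R-ls}). Where you diverge is in how the almost sure limit of $\tr[(Z^\top Z)^{-1}]$ is established. The paper cites Theorem 3.10 of \citet{bai_silverstein_2010} to assert that this trace has a \emph{universal} almost sure limit, and then pins down the constant by calibrating against the Gaussian special case, where the inverse-Wishart identity gives $\EE[\tr[(Z^\top Z)^{-1}]] = d/(n-d-1) \to \xi/(1-\xi)$. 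You instead compute the limit directly: weak convergence of the empirical spectral distribution of $Z^\top Z/n$ to the Marchenko--Pastur law, the inverse moment $\int x^{-1}\,dF_\xi(x) = 1/(1-\xi)$, and---crucially---the Bai--Yin theorem to push the smallest eigenvalue away from zero so that the unbounded integrand $x \mapsto x^{-1}$ can be handled. This last step is the genuine analytic content that the paper's citation absorbs wholesale, and you correctly identify it as the place where the finite fourth moment (guaranteed by the $4+\delta$ assumption) is actually used; Bai--Yin also supplies eventual almost sure invertibility of $\tX_d^\top \tX_d$, which both arguments implicitly need. The trade-off: the paper's route is shorter and makes the universality phenomenon explicit as a takeaway, while yours is more self-contained, avoids the Gaussian calibration trick (your inverse-Wishart cross-check plays the role of a sanity check rather than a load-bearing step), and makes visible exactly which moment assumptions drive which conclusions.
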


\begin{proof}
Following the calculations in \cref{app:opt-R-ls} leads to
\begin{align*}
\opt\r(\hf\ls(\cdot; \tX_d, \ty) \,|\, \tX_d) 
&= \sigma^2 \big( d/n + \tr[ \Sigma (\tX_d^\top \tX_d)^{-1} ] \big) \\ 
&= \sigma^2 \big( d/n + \tr[ (Z^\top Z)^{-1} ] \big).
\end{align*}
Under the assumptions in the theorem, the quantity
\[
\tr[ (Z^\top Z)^{-1} ] = \frac{d}{n} \cdot \frac{1}{d} \tr\bigg[ \bigg(
\frac{Z^\top Z}{n} \bigg)^{-1} \bigg]
\]
has a universal limit, almost surely with respect to $Z$; see, e.g., Theorem
3.10 of \citet{bai_silverstein_2010}. Again from the calculations in
\cref{app:opt-R-ls}, if the entries of $Z$ are i.i.d.\ standard Gaussian, then 
\[
\EE\big[ \tr[ (Z^\top Z)^{-1} ] \big] = \frac{d}{n-d-1}. 
\]
This converges to $\xi / (1-\xi)$ as $d /n \to \xi$, which must thus also be the
universal almost sure limit in the general case, regardless of the distribution
of entries of $Z$. This yields the almost sure limit of the conditional optimism    
\[
\opt\r(\hf\ls(\cdot; \tX_d, \ty) \,|\, \tX_d) \to \sigma^2 \bigg( \xi + 
\frac{\xi}{1-\xi} \bigg),
\]
as claimed.
\end{proof}

\cref{thm:opt-R-ls-limit} reveals that the choice of Gaussian features in the 
reference optimism calculation, for either \cref{def:df-R} or
\cref{def:df-R-intrinsic}, is in a certain sense unimportant, because all
feature models of the form described in the theorem lead to the same asymptotic 
answer anyway.   

\subsection{Practical calculation of random-X degrees of freedom}
\label{subsec:practical}

The concept of random-X degrees of freedom, from \cref{def:df-R}, is a
population-level quantity---it depends on the random-X optimism
\smash{$\opt\r(\hf)$}, which of course itself depends on the (unknown) joint
distribution of the features and response. To estimate \smash{$\df\r(\hf)$} in
practice, we need to first estimate \smash{$\opt\r(\hf)$}, which we can do by
estimating random-X prediction error using (say) cross-validation and then
subtracting off the observed training error. We also need to estimate the noise
level $\sigma^2$, which is an equally (if not more) difficult task, but as a
proxy we can use the random-X prediction error of the best-predicting model we
have for the task at hand. Given such estimates \smash{$\hopt\r(\hf)$} and 
\smash{$\hsigma^2$}, we set up the sample analog of the matching equation
\eqref{eq:opt-R-match},      
\begin{equation}
\label{eq:opt-R-estimated-match}
\hopt\r(\hf) = \hsigma^2 \bigg( \frac{d}{n} + \frac{d}{n-d-1} \bigg),
\end{equation}
solve for $d$, and set \smash{$\hdf\r(\hf) = d$}. 

To estimate intrinsic random-X degrees of freedom, from
\cref{def:df-R-intrinsic}, we can follow the analogous steps. The only
difference is that we train the predictor \smash{$\hf$} on pure noise \smash{$v
  \sim \cN(0, \hsigma^2 I)$} (instead of the original response $y$) which alters 
our estimates of both random-X prediction error and training error. We set up
the sample analog of the matching equation \eqref{eq:opt-R-intrinsic-match},  
\begin{equation}
\label{eq:opt-R-intrinsic-estimated-match}
\hopt\r(\hf(\cdot; X, v)) = \hsigma^2 \bigg( \frac{d}{n} + \frac{d}{n-d-1}
\bigg), 
\end{equation}
solve for $d$, and set \smash{$\hdf\r\i(\hf) = d$}. 

Lastly, just to emphasize, we do not require the (estimated) random-X degrees of
freedom to be an integer in any of \eqref{eq:opt-R-match},
\eqref{eq:opt-R-intrinsic-match}, \eqref{eq:opt-R-estimated-match},
\eqref{eq:opt-R-intrinsic-estimated-match}. If desired, then one could of course
achieve this taking the integer ceiling \smash{$\lceil d \rceil$} of the
solution $d$ to the given matching equation. We find this unnecessary; note
that fixed-X degrees of freedom as originally defined in \eqref{eq:df-F} is also
not restricted to be an integer.

\section{Properties and connections}
\label{sec:properties}

We develop some basic properties of the random-X degrees of freedom proposals
from the previous section, and make connections to related ideas in the
literature.   

\subsection{Mapping optimism to degrees of freedom}
\label{subsec:mapping}

Reflecting on the matching equations \eqref{eq:opt-R-match},
\eqref{eq:opt-R-intrinsic-match}, \eqref{eq:opt-R-estimated-match},
\eqref{eq:opt-R-intrinsic-estimated-match}, each one is an equation of the form   
\[
x = \frac{d}{n} + \frac{d}{n-d-1}.
\]
The above is a quadratic equation in $d$. It is straightforward to check that it
has a unique solution in $[0, n-1]$ which we can write as $d = \omega_n(x)$,
where 
\begin{equation}
\label{eq:omega_n}
\omega_n(x) = \frac{2n-1 + nx - \sqrt{(2n-1 + nx)^2 - 4(n-1)nx}}{2}. 
\end{equation}
The function $\omega_n(x)$ is a map from normalized optimism $x$ to degrees of
freedom $d$. It is increasing, concave, and ranges from $0$ (at $x=0$) to $n-1$
(as $x \to \infty$). Each of the definitions of (estimated) random-X degrees of
freedom from the last section, given by solving \eqref{eq:opt-R-match},
\eqref{eq:opt-R-intrinsic-match}, \eqref{eq:opt-R-estimated-match}, or
\eqref{eq:opt-R-intrinsic-estimated-match}, can be written concisely in terms of
$\omega_n$, and differ only in the form of normalized optimism that they use:
\begin{alignat*}{2}
\df\r(\hf) &= \omega_n\big( \opt\r(\hf) / \sigma^2 \big), \qquad
\df\r\i(\hf) &&= \omega_n\big( \opt\r\i(\hf) / \sigma^2 \big), \\
\hdf\r(\hf) &= \omega_n\big( \hopt\r(\hf) / \hsigma^2 \big), \qquad
\hdf\r\i(\hf) &&= \omega_n\big( \hopt\r\i(\hf) / \hsigma^2 \big). 
\end{alignat*}
Here and henceforth we write \smash{$\opt\r\i(\hf) = \opt\r(\hf(; \cdot,\tX_d,
v))$} for convenience, and will refer to this as intrinsic random-X optimism
(and similarly for the estimated version).

For large $n$, the function $\omega_n$ in \eqref{eq:omega} is well-approximated
by $\omega_n(x) \approx n \cdot \omega(x)$, where
\begin{equation}
\label{eq:omega}
\omega(x) = 1 + \frac{x}{2} - \sqrt{1 + \frac{x^2}{4}}.
\end{equation}
This function is increasing, concave, and ranges from $0$ (at $x=0$) to $1$ (as
$x \to \infty$). See \cref{fig:omega} for a visualization. The precise
relationship between $\omega_n$ and $\omega$ is that, for any fixed $x$, 
\begin{equation}
\label{eq:omega_approx}
|\omega_n(x) / n - \omega(x)| \to 0, \quad \text{as $n \to \infty$}, 
\end{equation}
which is verified in \cref{app:omega_approx}.

Finally, a calculation involving L'H{\^o}pital's rule can be used to show
$\omega(x) / (x/2) \to 1$ as $x \to 0^+$. In other words, for small values of
normalized optimism $x$ and large $n$ we have $d = \omega_n(x) \approx n
\omega(x) \approx n x/2$, which mirrors the relationship in the fixed-X setting
\eqref{eq:opt-df-F}.

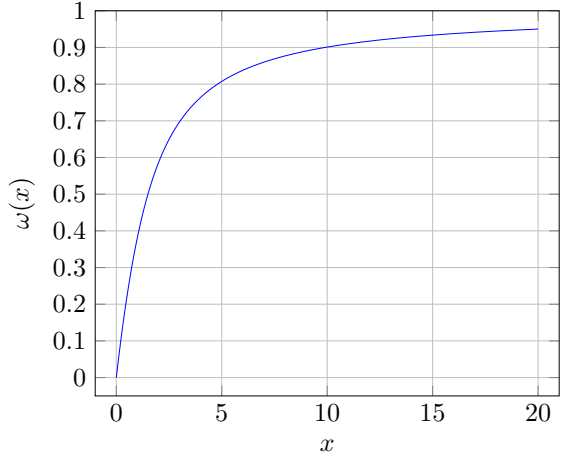
\begin{figure}[tb]
\centering
\begin{tikzpicture}[scale=0.9]
  \begin{axis}[
    xlabel = $x$,
    ylabel = $\omega(x)$,
    ymin = -0.05,
    ymax = 1,
    xmin = -1,
    xmax = 21,
    ytick = {0,0.1,...,1},
    grid=both,
    ]
    \addplot[blue, samples=500, domain=0:20] {1 + x/2 - sqrt(1 + (x^2)/4)};
  \end{axis}
\end{tikzpicture}
\caption{Plot of $\omega$ in \eqref{eq:omega}, which maps from normalized
  optimism (optimism divided by $\sigma^2$) to normalized degrees of freedom
  (degrees of freedom divided by $n-1$).} 
\label{fig:omega}
\end{figure}

\subsection{Linear smoothers}
\label{subsec:smoother}

Let \smash{$\hf$} be a linear smoother, which means that we can write
\begin{equation}
\label{eq:smoother}
\hf(x; X, y) = L_X(x)^\top y,
\end{equation}
for a weight function $L_X : \RR^p \to \RR^n$ that is allowed to depend on the
training features $X$, but not the training response $y$. For convenience, we
will write  
\[
L_X(X) = \begin{bmatrix} L_X(x_1)^\top \\ \vdots \\ L_X(x_n)^\top \end{bmatrix}
\in \RR^{n \times n}. 
\]
Similarly, for a function $g : \RR^p \to \RR$, we will write $g(X) =
(g(x_1),\dots, g(x_n)) \in \RR^n$ for the row-wise application of $g$ to $X$.
In this notation, we can rewrite the data model \eqref{eq:data_model} more
compactly as
\begin{equation}
\label{eq:data_model2}
y = f(X) + \eps,
\end{equation}
where $\EE[\eps] = 0$ and $\Cov[\eps] = \sigma^2 I$. 

The following proposition provides closed-form expressions for random-X optimism
and degrees of freedom for linear smoothers. 

\begin{proposition}
\label{prop:smoother}
For the linear smoother \eqref{eq:smoother}, its intrinsic and emergent random-X
optimism are  
\begin{align}
\label{eq:opt-R-smoother-intrinsic}
\opt\r\i(\hf) &= \sigma^2 \EE \bigg[ \frac{2}{n} \tr[L_X(X)] + \EE
[L_X(x_0)^\top L_X(x_0) \, | \, X] - \frac{1}{n} \tr[L_X(X)^\top
L_X(X)] \bigg], \\  
\label{eq:opt-R-smoother-emergent}
\opt\r(\hf) &= \opt\r\i(\hf) + \EE \bigg[ \EE \big[ (f(x_0) - L_X(x_0)^\top
f(X))^2 \, | \, X \big]  - \frac{1}{n} \| (I - L_X(X)) f(X) \|_2^2 \bigg]. 
\end{align}
Consequently, the intrinsic and emergent random-X degrees of freedom are given
by dividing by $\sigma^2$ and applying $\omega_n$ in \eqref{eq:omega_n}. 
\end{proposition}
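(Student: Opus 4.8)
The plan is to compute both optimism expressions directly from the definition \smash{$\opt\r(\hf) = \err\r(\hf) - \EE[\frac{1}{n}\sum_i (y_i - \hf(x_i))^2]$}, exploiting the linearity of the smoother to reduce everything to quadratic forms in the noise and the signal. Writing $S = L_X(X)$ for the $n \times n$ smoother matrix, the vector of fitted values on the training set is $Sy$ and the prediction at a fresh point $x_0$ is \smash{$L_X(x_0)^\top y$}. Throughout I would condition on $X$ first (which fixes $S$ and the weight function $L_X$), then take expectations over the noise and over the test covariate $x_0$. The entire argument is a careful bias--variance bookkeeping, so I expect no genuine obstacle beyond keeping the nested conditional expectations straight.

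For the intrinsic optimism the predictor is trained on pure noise $v \sim \cN(0,\sigma^2 I)$, so the relevant data model has signal $f \equiv 0$, and \emph{both} the training response $v$ and the independent test response $v_0 \sim \cN(0,\sigma^2)$ are pure noise. Expanding \smash{$\EE[(v_0 - L_X(x_0)^\top v)^2]$} by conditioning on $(X,x_0)$, the cross term vanishes since $v_0$ has mean zero and is independent of $v$, leaving \smash{$\sigma^2 + \sigma^2 \EE[L_X(x_0)^\top L_X(x_0)]$}. Similarly the training error is \smash{$\frac{1}{n}\EE[\|(I-S)v\|_2^2] = \sigma^2(1 - \frac{2}{n}\EE[\tr S] + \frac{1}{n}\EE[\tr(S^\top S)])$}, using $\Cov[v \mid X] = \sigma^2 I$ and $\tr S^\top = \tr S$. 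Subtracting and cancelling the matched $\sigma^2$ constants yields exactly \eqref{eq:opt-R-smoother-intrinsic}.

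For the emergent optimism I would repeat the computation with $y = f(X) + \eps$ and $y_0 = f(x_0) + \eps_0$. Conditioning on $(X,x_0)$ and using $\EE[\eps \mid X] = 0$, $\EE[\eps_0] = 0$, and independence of $\eps_0$ from $\eps$, both the prediction error and the training error split into a signal part and a pure-noise part, with every signal--noise cross term vanishing. The prediction error becomes \smash{$\EE[\EE[(f(x_0) - L_X(x_0)^\top f(X))^2 \mid X]] + \sigma^2 + \sigma^2\EE[L_X(x_0)^\top L_X(x_0)]$}, and the training error becomes \smash{$\frac{1}{n}\EE[\|(I-S)f(X)\|_2^2] + \sigma^2(1 - \frac{2}{n}\EE[\tr S] + \frac{1}{n}\EE[\tr(S^\top S)])$}. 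The crucial observation is that the pure-noise parts of these two quantities are \emph{identical} to those in the intrinsic calculation, so their difference reassembles \smash{$\opt\r\i(\hf)$} exactly; the leftover signal terms are precisely the bracketed bias expression in \eqref{eq:opt-R-smoother-emergent}.

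The closing claim that the degrees of freedom follow by dividing by $\sigma^2$ and applying $\omega_n$ is immediate from the matching equations \eqref{eq:opt-R-match} and \eqref{eq:opt-R-intrinsic-match} together with the map recorded in \cref{subsec:mapping}. The only step requiring real care is the correct reading of the intrinsic setup: one must recognize that in \cref{def:df-R-intrinsic} the test response accompanying the pure-noise training response is itself an independent noise draw $v_0$, so that $f$ drops out of the intrinsic calculation entirely and the variance parts of the two optimisms align cleanly.
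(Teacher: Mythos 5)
Your proof is correct, but it takes a different route from the paper's. The paper does not expand the optimism from scratch: it proves \cref{prop:smoother} in \cref{subsec:rosset-tibshirani-2020} by plugging into the \citet{rosset_tibshirani_2020} decomposition \eqref{eq:opt-F-opt-R}, $\opt\r(\hf) = \EE[\opt\f(\hf)] + B^+(\hf) + V^+(\hf)$, using the known fixed-X optimism of a linear smoother, $\opt\f(\hf) = (2\sigma^2/n)\tr[L_X(X)]$ (via Efron's theorem \eqref{eq:opt-df-F}), and identifying $V^+(\hf)$ with the latter two terms of \eqref{eq:opt-R-smoother-intrinsic} (setting $f=0$ kills the excess bias) and $B^+(\hf)$ with the latter two terms of \eqref{eq:opt-R-smoother-emergent}. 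You instead compute $\err\r$ and the training error directly as quadratic forms in the noise and signal, conditioning on $X$ and $x_0$, and subtract. The two arguments have essentially the same computational content---both are the standard bias--variance bookkeeping the paper alludes to---but they buy different things. Your direct expansion is self-contained, avoids any appeal to fixed-X optimism or Stein/Efron machinery, and makes plain that only the first two moments of the noise enter (a point the paper makes separately when discussing \eqref{eq:opt-R-smoother-emergent-intrinsic}); it also produces the identity $\opt\r(\hf) = \opt\r\i(\hf) + B^+(\hf)$ as the ``leftover signal terms,'' exactly as the paper records. The paper's route is shorter given the imported framework and, more importantly, exposes the structural roles of $B^+$ and $V^+$, which it then exploits immediately: \cref{prop:B+} rests on \citet{rosset_tibshirani_2020}'s nonnegativity results for $B^+$, a payoff your from-scratch expansion does not deliver by itself. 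You also correctly flag the one genuinely delicate reading, namely that in \cref{def:df-R-intrinsic} the test response is an independent pure-noise draw, so $f$ drops out of the intrinsic calculation entirely; missing that would have broken the cancellation of the noise-level constants.
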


The calculations to derive \eqref{eq:opt-R-smoother-intrinsic},
\eqref{eq:opt-R-smoother-emergent} are standard; they are based on the
bias-variance decomposition of random-X prediction error for linear smoothers,
which is found in many places in the literature. In the next subsection, we
draw a connection to \citet{rosset_tibshirani_2020}, whose work provides a
framework that allows us to easily verify the optimism results
\eqref{eq:opt-R-smoother-intrinsic}, \eqref{eq:opt-R-smoother-emergent}.

It is worth noting that the intrinsic optimism for a linear smoother
\eqref{eq:opt-R-smoother-intrinsic} is directly proportional to $\sigma^2$. As
a result, the intrinsic random-X degrees of freedom does not depend on
$\sigma^2$. 

It is also worth noting that for an interpolating linear smoother, we have
$L_X(X) = I$. In this case, intrinsic and emergent optimism simplify to 
\begin{align*}
\opt\r\i(\hf) &= \sigma^2 \big( 1 + \EE [L_X(x_0)^\top L_X(x_0)] \big), \\
\opt\r(\hf) &= \opt\r\i(\hf) +\EE \big[ (f(x_0) - L_X(x_0)^\top f(X))^2 \big].
\end{align*}
As a result we can see that intrinsic and emergent random-X degrees of freedom
(given by dividing by $\sigma^2$ and applying $\omega_n$) are each able to
distinguish between interpolating linear smoothers, unlike fixed-X degrees of
freedom, which always equals $n$ for an interpolator, recalling
\eqref{eq:df-F-interpolator}.

\subsection[Connection to Rosset and Tibshirani (2020)]{Connection to
  \citet{rosset_tibshirani_2020}}   
\label{subsec:rosset-tibshirani-2020}

\citet{rosset_tibshirani_2020} proposed the following decomposition of random-X
optimism, for an arbitrary predictor \smash{$\hf$}: 
\begin{equation}
\label{eq:opt-F-opt-R}
\opt\r(\hf) = \EE[\opt\f(\hf)] + B^+(\hf) + V^+(\hf).
\end{equation}
The first expectation on the right-hand side above is with respect to the
training covariates $X$, and the next two terms \smash{$B^+(\hf), V^+(\hf)$}
are called the \emph{excess bias} and \emph{excess variance} of \smash{$\hf$}, 
respectively, defined as:
\begin{align}
\label{eq:B+}
B^+(\hf)  &= \EE\big[ (f(x_0) - \of(x_0))^2 \big] - \EE\bigg[ \frac{1}{n} \|
f(X) - \of(X) \|_2^2 \bigg], \\   
\label{eq:V+}
V^+(\hf) &= \EE\big[ \Var[\hf(x_0) | X,x_0] \big] - \EE\bigg[ \frac{1}{n} 
\tr(\Cov[\hf(X) | X ]) \bigg],
\end{align}
where we abbreviate \smash{$\of(X) = \EE[\hf(X) | X]$} and \smash{$\of(x_0)
=\EE[\hf(x_0) | X,x_0]$}. The relationship \eqref{eq:opt-F-opt-R} follows from
expressing the random-X and fixed-X prediction errors of \smash{$\hf$} into bias
and variance terms, and then comparing the two decompositions:
\smash{$B^+(\hf)$} represents the difference in random-X and fixed-X squared
bias, and \smash{$V^+(\hf)$} the difference in random-X and fixed-X variance.

Though the decomposition \eqref{eq:opt-F-opt-R} is general, we now describe its
implications for linear smoothers in particular. For \smash{$\hf$} as in
\eqref{eq:smoother}, fixed-X degrees of freedom is simple to compute:
\[
\df\f(\hf) = \frac{1}{\sigma^2} \tr( \Cov[L_X(X) \hspace{1pt} y, y \,|\, X] = 
\tr[L_X(X)]. 
\]
Based on \eqref{eq:opt-df-F}, this gives a simple formula for fixed-X optimism:
\smash{$\opt\f(\hf) = (2 \sigma^2/ n) \tr[L_X(X)]$}. We can plug this into
\eqref{eq:opt-F-opt-R} (after integrating over $X$), along with excess bias and
variance calculations, to verify the random-X optimism claims in
\eqref{eq:opt-R-smoother-intrinsic}, \eqref{eq:opt-R-smoother-emergent}:
beginning with the intrinsic case, where we set $f = 0$, it is not hard to see
the excess bias is zero and we only need to compute \smash{$V^+(\hf)$}, which is
given by the latter two terms in \eqref{eq:opt-R-smoother-intrinsic}; as for the
emergent case, we add in \smash{$B^+(\hf)$}, which is given by the latter two
terms in \eqref{eq:opt-R-smoother-emergent}. This completes the proof of
\cref{prop:smoother}.

It is worth emphasizing a result that appears in passing in the arguments from
the last paragraph: for a linear smoother, 
\begin{equation}
\label{eq:opt-R-smoother-emergent-intrinsic}
\opt\r(\hf) = \opt\r\i(\hf) + B^+(\hf).
\end{equation}
This is not true for a general predictor \smash{$\hf$}. For linear smoothers, it
holds for \emph{any} distribution of the error vector $\eps$ in the original
data model \eqref{eq:data_model2} (provided we maintain $\EE[\eps] = 0$ and
$\Cov[\eps] = \sigma^2 I$), even though the pure noise model used for intrinsic
optimism in \cref{def:df-R-intrinsic} specifies $v \sim \cN(0, \sigma^2I)$.
This is because the random-X optimism for a linear smoother depends only on
$\sigma^2$, the noise level, and not the distribution of the error $\eps$
itself.

The fact in \eqref{eq:opt-R-smoother-emergent-intrinsic} is important because,
together with monotonicity of the map $\omega_n$ in \eqref{eq:omega_n}, it tells
us when we should expect emergent degrees of freedom to be larger than intrinsic
degrees of freedom:
\begin{align*}
\df\r(\hf) \geq \df\r\i(\hf) &\iff \opt\r(\hf) \geq \opt\r\i(\hf) \\
&\iff B^+(\hf) \geq 0. 
\end{align*}
\citet{rosset_tibshirani_2020} established nonnegativity of \smash{$B^+(\hf)$}
for various predictors \smash{$\hf$}; the next proposition summarizes these
results and their implications for random-X degrees of freedom. 

\begin{proposition}
\label{prop:B+}
For any linear smoother defined by minimizing a penalized least squares
criterion, excess bias is always nonnegative, and hence emergent random-X
degrees of freedom always larger than intrinsic random-X degrees of
freedom. This includes:   
\begin{itemize}
\item least squares regression (underparameterized case); 
\item ridgeless least squares regression (overparameterized case);
\item ridge regression, for any regularization strength $\lambda \ge 0$; 
\item kernel ridge, smoothing splines, and thin-plate splines, for any
  regularization strength $\lambda \geq 0$.  
\end{itemize}
\end{proposition}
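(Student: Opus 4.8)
The content of the proposition is entirely in the nonnegativity of excess bias: once we have $B^+(\hf) \ge 0$, the statement about degrees of freedom is immediate from the identity \eqref{eq:opt-R-smoother-emergent-intrinsic}, namely $\opt\r(\hf) = \opt\r\i(\hf) + B^+(\hf)$, together with the monotonicity of $\omega_n$ in \eqref{eq:omega_n}, exactly as displayed just before the statement. So the plan is to concentrate the whole argument on proving $B^+(\hf) \ge 0$ for every linear smoother arising from a penalized least squares criterion, and then invoke the preceding display for each bulleted example at once.

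The first step is to identify the mean predictor $\of$ explicitly. Since the smoother is linear in the response and $\EE[y \mid X] = f(X)$, the map $y \mapsto \hf$ is linear, which forces the penalty to be quadratic, and the mean fit $\of(\cdot) = L_X(\cdot)^\top f(X)$ is exactly the estimator obtained by running the \emph{same} procedure on the noiseless data $(X, f(X))$. Writing the procedure as $\hg = \arg\min_{g \in \mathcal{H}} \frac1n \sum_i (y_i - g(x_i))^2 + \lambda\,\mathrm{pen}(g)$ over a function class $\mathcal{H}$ with a quadratic (semi)norm penalty, linearity gives $\of = \bar g := \arg\min_{g \in \mathcal{H}} \frac1n \sum_i (f(x_i) - g(x_i))^2 + \lambda\,\mathrm{pen}(g)$. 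I would verify this first for ridge, where $\bar\beta = (X^\top X + \lambda I)^{-1} X^\top f(X)$ is manifestly the ridge fit to $f(X)$, and then note it holds verbatim for kernel ridge and splines since those are penalized least squares in an RKHS.

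With $\of = \bar g$ in hand, abbreviate $R_n(g) = \frac1n \sum_i (f(x_i) - g(x_i))^2$ and $R(g) = \EE_{x_0}[(f(x_0) - g(x_0))^2]$, so that $B^+(\hf) = \EE_X[R(\bar g) - R_n(\bar g)]$. The key device is the population penalized minimizer $g^\star = \arg\min_{g \in \mathcal{H}} R(g) + \lambda\,\mathrm{pen}(g)$, which crucially does not depend on the data $X$. Comparing $\bar g$ and $g^\star$ through their two defining optimality inequalities, the empirical optimality $R_n(\bar g) + \lambda\,\mathrm{pen}(\bar g) \le R_n(g^\star) + \lambda\,\mathrm{pen}(g^\star)$ and the population optimality $R(g^\star) + \lambda\,\mathrm{pen}(g^\star) \le R(\bar g) + \lambda\,\mathrm{pen}(\bar g)$, and adding them, the penalty terms cancel and I obtain the pointwise-in-$X$ bound $R(\bar g) - R_n(\bar g) \ge R(g^\star) - R_n(g^\star)$. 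Taking $\EE_X$ and using that $g^\star$ is nonrandom, so $\EE_X[R_n(g^\star)] = R(g^\star)$ because the $x_i$ are i.i.d.\ copies of $x_0$, the right-hand side vanishes, giving $B^+(\hf) = \EE_X[R(\bar g) - R_n(\bar g)] \ge 0$.

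The main obstacle is the ridgeless/overparameterized case, where $\lambda = 0$ but the empirical problem is underdetermined and $\of$ is the minimum-norm interpolant of $f(X)$; here the clean ``add the two optimality inequalities'' step does not apply directly, since the minimizer is selected by a second-level (minimum-norm) criterion and the population analog $g^\star$ must be defined by the same lexicographic rule. I would handle this either by taking the $\lambda \to 0^+$ limit of the ridge bound and invoking continuity, or by running the comparison at the level of the two-stage (fit-then-minimum-norm) optimization. A secondary point is that for smoothing and thin-plate splines the penalty is only a seminorm (the polynomial part is unpenalized), but this does not affect the argument: the two optimality inequalities and the cancellation of penalty terms go through unchanged, and one only needs $g^\star$ to exist, which holds under the mild regularity already implicit in the spline setup.
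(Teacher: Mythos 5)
Your proposal is correct, but it does more than the paper does: the paper supplies no self-contained proof of \cref{prop:B+}. It handles the degrees-of-freedom implication exactly as you do---via the identity \eqref{eq:opt-R-smoother-emergent-intrinsic} and monotonicity of $\omega_n$, as laid out in the display preceding the proposition---and then defers the nonnegativity of $B^+(\hf)$ entirely to the cited results of \citet{rosset_tibshirani_2020}. Your argument (identify $\of$ as the same penalized procedure run on the noiseless data $(X, f(X))$, introduce the deterministic population minimizer $g^\star$, add the empirical and population optimality inequalities so the penalty terms cancel, then use $\EE_X[R_n(g^\star)] = R(g^\star)$ since the $x_i$ are i.i.d.\ copies of $x_0$) is a faithful reconstruction of the optimality-comparison proof in that reference, and it is sound, including for seminorm penalties where the cancellation indeed goes through unchanged.

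One substantive correction: you single out the overparameterized ridgeless case as ``the main obstacle,'' but it is in fact the trivial case. When $\rank(X) = n$, the ridgeless smoother interpolates, so $L_X(X) = I$ and the training term in \eqref{eq:B+} vanishes, giving $B^+(\hf) = \EE\big[(f(x_0) - L_X(x_0)^\top f(X))^2\big] \geq 0$ outright---the paper records exactly this simplification for interpolating linear smoothers in \cref{subsec:smoother}, and it applies to any interpolator, with no lexicographic population problem or minimum-norm comparison needed. Your alternative $\lambda \to 0^+$ route also works, but note it requires a domination argument to pass the limit through the expectation over $X$ (e.g., using that $\|\hbeta^\ridge_\lambda\|_2$ is nonincreasing in $\lambda$, so the ridgeless fit dominates), which the direct interpolation observation avoids. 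A minor point: your aside that linearity in $y$ ``forces the penalty to be quadratic'' is both unnecessary and not quite a theorem as stated; all the argument needs is that $L_X$ does not depend on $y$, so that $\of(\cdot) = L_X(\cdot)^\top f(X)$ is the same procedure applied to $(X, f(X))$, which your ridge and RKHS verifications already cover for every bulleted example.
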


For nonlinear smoothers, such as the lasso, direct analysis of
\smash{$\df\r(\hf) - \df\r\i(\hf)$} (or its sign) does not appear to be as
generally tractable. However, as we will see later in
\cref{subsec:lasso-theory,subsec:lassoless-theory,subsec:convex-theory}, it is
possible to prove the excess bias is nonnegative asymptotically, under certain
assumptions on the feature matrix and response model.

\subsection[Connection to Luan et al.\ (2021)]{Connection to
  \citet{luan2021predictive}}  
\label{subsec:luan}

\citet{luan2021predictive} proposed an extension of classical fixed-X degrees of
freedom to the random-X setting, which they called ``predictive model'' degrees
of freedom. Their proposal is limited to linear smoothers. In the notation of
the \cref{subsec:smoother} above, it can be expressed as:
\begin{equation}
\label{eq:luan}
\df^{\mathrm{pm}}_X(\hf) = \tr[L_X(X)] + \frac{n}{2} \bigg( \EE[L_X(x_0)^\top
L_X(x_0) \, | \, X] - \frac{1}{n} \tr[L_X(X)^\top L_X(X)] \bigg). 
\end{equation}
Comparing this to \eqref{eq:opt-R-smoother-intrinsic}, we note that 
\[
\opt\r\i(\hf) = \frac{2 \sigma^2}{n} \EE[\df^{\mathrm{pm}}_X(\hf)],
\]
where the expectation on the right-hand side is with respect to $X$. Thus we
can see that, for linear smoothers, \citet{luan2021predictive} define a notion
of model complexity in terms of intrinsic random-X optimism by reusing the same
functional form that connects fixed-X degrees of freedom to fixed-X optimism
\eqref{eq:opt-df-F}. (Their follow-up work \citet{luan2022measuring} considers a
weighted version of \eqref{eq:luan} which allows for heteroscedastic noise.)

There are three differences worth pointing out, to the ideas in the current
paper. First, restricting our attention to intrinsic optimism for linear
smoothers, \citet{luan2021predictive} transform normalized intrinsic optimism
\smash{$x = \opt\r\i(\hf) / \sigma^2$} to degrees of freedom via the linear map
$x \mapsto nx / 2$, whereas we use the nonlinear map $x \mapsto \omega_n(x)$,
with $\omega_n$ as defined in \eqref{eq:omega_n}, for what we call intrinsic
random-X degrees of freedom. Recalling the discussion in \cref{subsec:mapping},
we have $\omega_n(x) \approx nx / 2$ for small values of $x$, but for large
values of $\omega_n$ behaves quite differently, and it saturates at $n-1$.

Second, still restricting our attention to linear smoothers, we also consider
another (usually larger) notion of model complexity that stems from
incorporating bias into random-X optimism, which we call emergent random-X
degrees of freedom.

Third, the concepts of emergent and intrinsic random-X degrees of freedom in
\cref{def:df-R,def:df-R-intrinsic} do not require \smash{$\hf$} to be a linear
smoother and allow it to be arbitrary. This is possible because the core
motivation for these proposals is to match random-X optimism between the given
model and a reference model, which we take to be least squares. Being able to
carry out this matching does not require special knowledge of any sort about the
given predictor \smash{$\hf$} (beyond being able to estimate its random-X
optimism, in practice).

\section{Case studies: theory}
\label{sec:theory}

In this section, we pass through various standard prediction models, and develop
some theory on random-X degrees of freedom in each case. 

\subsection{Ridge regression}
\label{subsec:ridge-theory}

Recall the ridge regression predictor, given a response vector $y$ and feature
matrix $X$, is defined as \smash{$\hf^\ridge_\lambda(x) = x^\top
  \hbeta^\ridge_\lambda$}, where \smash{$\hbeta^\ridge_\lambda = (X^\top X / n
  + \lambda I)^{-1} X^\top y / n$} and $\lambda > 0$ is a tuning parameter.  
The coefficient vector \smash{$\hbeta^\ridge_\lambda$} equivalently solves the
following $\ell_2$-regularized least squares problem: 
\begin{equation}
\label{eq:ridge-opt}
\hbeta^\ridge_\lambda = \argmin_{b \in \RR^p} \, \frac{1}{n} \| y - X b \|_2^2 +
\lambda \|b\|_2^2. 
\end{equation}
The ridge predictor is a linear smoother, with $L_X(x) = X (X^\top X / n +
\lambda I)^{-1} x / n$. Hence, the results in \cref{prop:smoother} and
\cref{prop:B+} apply. Recall, these results explicitly characterize its
random-X degrees of freedom, and assert the nonnegativity of the amount of
degrees of freedom ``due to bias'' \smash{$\df\r(\hf^\ridge_\lambda) -
\df\r\i(\hf^\ridge_\lambda)$}, respectively. 

In this subsection, we derive two further characterizations, one finite-sample
and one asymptotic. The first, finite-sample property concerns the behavior of
intrinsic random-X degrees of freedom as a function of the regularization
parameter $\lambda$. 

\begin{proposition}
\label{prop:ridge-monotonicity}
For the ridge predictor \smash{$\hf^\ridge_\lambda$} with tuning parameter
$\lambda > 0$, its intrinsic random-X degrees of freedom
\smash{$\df\r\i(\hf^\ridge_\lambda)$} is monotonically decreasing in $\lambda$,
and \smash{$\df\r\i(\hf^\ridge_\lambda) \to 0$} as $\lambda \to \infty$. 
\end{proposition}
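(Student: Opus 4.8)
Since \smash{$\df\r\i(\hf^\ridge_\lambda) = \omega_n(\opt\r\i(\hf^\ridge_\lambda)/\sigma^2)$} and the map $\omega_n$ in \eqref{eq:omega_n} is strictly increasing and continuous with $\omega_n(0) = 0$, the plan is to reduce both claims to statements about the intrinsic optimism: it suffices to show that \smash{$\opt\r\i(\hf^\ridge_\lambda)$} is decreasing in $\lambda$ and converges to $0$ as $\lambda \to \infty$. Monotonicity of the optimism then transfers to the degrees of freedom through $\omega_n$, and the limit \smash{$\opt\r\i/\sigma^2 \to 0$} yields \smash{$\df\r\i \to \omega_n(0) = 0$}.

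To analyze the optimism, I would start from the linear-smoother formula \eqref{eq:opt-R-smoother-intrinsic} in \cref{prop:smoother}, specialized to ridge, where $L_X(x) = X(X^\top X/n + \lambda I)^{-1}x/n$. Writing $M = X^\top X/n$, $A = (M + \lambda I)^{-1}$, and $\Sigma_0 = \EE[x_0 x_0^\top]$, the three terms in \eqref{eq:opt-R-smoother-intrinsic} become, for fixed $X$, the quantities $\frac{2}{n}\tr[AM]$, $\frac{1}{n}\tr[AMA\Sigma_0]$, and $-\frac{1}{n}\tr[(AM)^2]$. I would then diagonalize $M = USU^\top$ with $S = \mathrm{diag}(s_i)$, $s_i \geq 0$, and set $u_i = s_i/(s_i+\lambda) \in [0,1)$, so that each $u_i$ is decreasing in $\lambda$.

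The key step is to group the two trace-only terms, writing $\frac{2}{n}\tr[AM] - \frac{1}{n}\tr[(AM)^2] = \frac{1}{n}\sum_i u_i(2 - u_i)$; since $u \mapsto u(2-u)$ is increasing on $[0,1]$ and each $u_i$ decreases in $\lambda$, this group is decreasing. For the middle term, expressing $\Sigma_0$ in the eigenbasis of $M$ gives $\frac{1}{n}\tr[AMA\Sigma_0] = \frac{1}{n}\sum_i \frac{s_i}{(s_i+\lambda)^2}(U^\top \Sigma_0 U)_{ii}$; each coefficient $s_i/(s_i+\lambda)^2$ is decreasing in $\lambda$ and each weight $(U^\top \Sigma_0 U)_{ii} \geq 0$ because $\Sigma_0$ is positive semidefinite, so this term is decreasing as well. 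Summing and taking the outer expectation over $X$ preserves monotonicity, establishing the first claim (strictly when $M \neq 0$). For the limit, all $u_i \to 0$ and $s_i/(s_i+\lambda)^2 \to 0$ as $\lambda \to \infty$, so the integrand tends to $0$ pointwise; the trace-only group is bounded by $2$, and the middle term is nonnegative and dominated by its value at any fixed $\lambda_0 > 0$, which is integrable since $\EE[\tr(M\Sigma_0)] = \tr(\EE[M]\,\Sigma_0) < \infty$ under a second-moment condition on the features. Hence dominated/monotone convergence gives \smash{$\opt\r\i(\hf^\ridge_\lambda) \to 0$}.

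The main obstacle is the middle term $\EE[L_X(x_0)^\top L_X(x_0)]$, where the population second-moment matrix $\Sigma_0$ does not share an eigenbasis with $M$, so a naive eigenvalue argument does not immediately apply. The resolution is that $AMA$ is diagonal in the eigenbasis of $M$, so only the nonnegative diagonal entries of $\Sigma_0$ in that basis enter the trace and the off-diagonal structure is irrelevant; this is what makes each summand separately monotone. A secondary technical point is the interchange of the $\lambda \to \infty$ limit with the expectation over $X$, which I would justify by bounded convergence for the trace terms and the mild moment bound above for the middle term.
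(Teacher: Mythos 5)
Your proof is correct, and it follows the same overall route as the paper: reduce both claims to the intrinsic optimism via monotonicity of $\omega_n$, then analyze the three terms of \eqref{eq:opt-R-smoother-intrinsic} through the eigenvalues $s_i$ of \smash{$\hSigma$}, grouping the two trace terms exactly as the paper does (your $\tfrac{1}{n}\sum_i u_i(2-u_i)$ is the paper's $\sum_i (1-\lambda^2/(s_i+\lambda)^2)$ up to normalization). The genuine difference is your treatment of the variance term \smash{$\EE[L_X(x_0)^\top L_X(x_0)\,|\,X]$}: the paper disposes of it by citing Proposition 2 of \citet{rosset_tibshirani_2020} (it is $V + V^+$ in their notation), whereas you prove its monotonicity directly by observing that $A\hSigma A$ is diagonal in the eigenbasis $U$ of \smash{$\hSigma$}, so only the nonnegative diagonal entries of $U^\top \Sigma_0 U$ enter the trace and each coefficient $s_i/(s_i+\lambda)^2$ decreases in $\lambda$ --- a valid and pleasantly self-contained replacement for the citation. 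You also explicitly establish the second claim, \smash{$\df\r\i(\hf^\ridge_\lambda) \to 0$} as $\lambda \to \infty$, via pointwise convergence plus bounded/dominated convergence (for the middle term one can even use the uniform bound $\tr[A\hSigma A\Sigma_0] \le \tr[\Sigma_0]/(4\lambda)$, since $\sup_{s\ge 0} s/(s+\lambda)^2 = 1/(4\lambda)$, avoiding any moment condition beyond finite second moments of $x_0$); the paper's proof addresses only monotonicity and leaves the limit implicit, so your argument here fills in a step the paper omits.
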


The proof of \cref{prop:ridge-monotonicity} is elementary, and deferred to
\cref{app:ridge-monotonicity}. Numerical illustrations of the results can be
found in \cref{app:ridge-illustration}.

The second property gives asymptotic equivalents for emergent and intrinsic
random-X degrees of freedom. In preparation for this, we first state our
assumptions on the distribution of the features and response variable. These
assumptions are similar to those used in \cref{thm:opt-R-ls-limit}, and to those
used in the literature on analyzing ridge regression under proportional
asymptotics. %

\begin{assumption}
\label{asm:ridge}~ 
\begin{enumerate}
\item \label{asm:ridge-features} 
  The features satisfy $X = Z \Sigma^{1/2}$, where $Z \in \RR^{n  \times p}$ is
  a random matrix with i.i.d.\ entries having zero mean, unit variance, and
  bounded moments up to order $4 + \delta$ for some $\delta > 0$, and where
  $\Sigma \in \RR^{p \times p}$ is a deterministic positive definite covariance
  matrix whose eigenvalues are bounded above and below by $r_{\max} < \infty$
  and $r_{\min} > 0$, respectively. 

\item \label{asm:ridge-response}
  The response vector satisfies $y = f(X) + \eps$, where $f$ is centered
  (which means $\EE[f(x)] = 0$ for a draw $x$ from the feature distribution) 
  with bounded $L^{4+\delta}$ norm (which means $\EE[|f(x)|^q]^{1/q}$ is
  bounded for $q = 4 + \delta$) for some $\delta > 0$, and the noise vector
  $\eps \in \RR^n$ has i.i.d.\ entries with zero mean, variance $\sigma^2$, and
  bounded moments up to order $4 + \eta$ for some $\eta > 0$.    
\end{enumerate}
\end{assumption}

Note that we can always decompose the regression function as
\begin{equation}
\label{eq:f-decomposition}
f(x) = x^\top \beta + f\nl(x).
\end{equation}
Here $x^\top \beta$ is the projection of $f$ onto the space of functions linear
in $x$, i.e., it minimizes $\EE[(f(x) - x^\top b)^2]$ over $b \in \RR^p$, where 
recall we use $x$ for a draw from the feature distribution. By construction, the
components $x^\top \beta$ and $f\nl(x)$ are uncorrelated, though in general they 
are dependent. We denote the variance of the nonlinear component by
\smash{$\sigma^2\nl = \EE[|f\nl(x)|^2]$}.        

To introduce some additional notation, let $\gamma_n = p / n$, and for given
$\lambda, \gamma_n > 0$, let $\mu_n = \mu(\lambda; \gamma_n)$ be the unique 
solution to the fixed point equation: 
\begin{equation}
\label{eq:ridge-fixed-point-mu}
\mu_n = \lambda + \gamma_n \mu_n \otr[\Sigma (\Sigma + \mu_n I)^{-1}], 
\end{equation}
where here and in what follows, we abbreviate \smash{$\otr(A) = \tr(A) / p$} for  
$A \in \RR^{p \times p}$. We are now ready to state our asymptotic results. 

\begin{theorem}
\label{thm:ridge-asymptotics}
Consider the ridge predictor \smash{$\hf^\ridge_\lambda$} with tuning parameter
$\lambda > 0$, and assume
\[
0 < \liminf_{n \to \infty} \gamma_n \leq \limsup_{n \to \infty} \gamma_n <
\infty,
\]
where recall $\gamma_n = p/n$. Under \cref{asm:ridge}\ref{asm:ridge-features}
for fixed-X degrees of freedom and intrinsic random-X degrees of freedom, and
additionally \cref{asm:ridge}\ref{asm:ridge-response} for emergent random-X
degrees of freedom, we have the following asymptotic equivalences, where recall
$\omega$ is the function in \eqref{eq:omega}:
\begin{align}
\label{eq:ridge-df-F-asympequi}
\df\f(\hf^\ridge_\lambda) / n &\asympequi 1 - \lambda / \mu_n, \\  
\label{eq:ridge-df-R-i-asympequi} 
\df\r\i(\hf^\ridge_\lambda) / n &\asympequi \omega\big( (1 - \lambda^2 /
\mu_n^2) (V_n / D_n + 1) \big), \\  
\label{eq:ridge-df-R-asympequi}
\df\r(\hf^\ridge_\lambda) / n &\asympequi \omega\big( (1 - \lambda^2 / \mu_n^2)
(B_n /  D_n + (V_n / D_n + 1) (1 + \sigma^2\nl/\sigma^2)) \big).  
\end{align}
Here we use $a_n \asympequi b_n$ to mean $|a_n - b_n| \to 0$ as $n \to \infty$
(almost surely, if $a_n,b_n$ are random). Also,   
\begin{align}
\label{eq:ridge-Vn} 
V_n &= \gamma_n \otr[\Sigma^2 (\Sigma + \mu_n I)^{-2}], \\   
\label{eq:ridge-Bn} 
B_n &= \mu_n^2 \beta^{\top} (\Sigma + \mu_n I)^{-1}  \Sigma (\Sigma + \mu_n
  I)^{-1} \beta / \sigma^2, \\   
\label{eq:ridge-Dn} 
D_n &= 1 - \gamma_n \otr[\Sigma^2 (\Sigma + \mu_n I)^{-2}]. 
\end{align}
\end{theorem}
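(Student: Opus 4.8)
The plan is to reduce all three quantities to traces of rational functions of the sample covariance matrix $\widehat\Sigma = X^\top X / n$, evaluate their proportional-asymptotic limits via deterministic equivalents from random matrix theory, and finally pass through the map $\omega_n$. Writing the ridge smoother matrix as $L_X(X) = X(\widehat\Sigma + \lambda I)^{-1} X^\top / n$, the ingredients of \cref{prop:smoother} become $\tr[L_X(X)] = \tr[(\widehat\Sigma + \lambda I)^{-1}\widehat\Sigma]$, together with the two quadratic terms $\EE[L_X(x_0)^\top L_X(x_0)\mid X] = \frac{1}{n}\tr[\Sigma(\widehat\Sigma+\lambda I)^{-1}\widehat\Sigma(\widehat\Sigma+\lambda I)^{-1}]$ and $\frac{1}{n}\tr[L_X(X)^\top L_X(X)] = \frac{1}{n}\tr[(\widehat\Sigma+\lambda I)^{-1}\widehat\Sigma(\widehat\Sigma+\lambda I)^{-1}\widehat\Sigma]$ (using $\EE[x_0 x_0^\top]=\Sigma$). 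So the entire theorem is a matter of computing proportional limits of these resolvent traces and of one bilinear form in $\beta$.

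The fixed-X statement is the warm-up and fixes notation. Writing $\df\f(\hf^\ridge_\lambda)/n = \gamma_n - (\lambda/n)\tr[(\widehat\Sigma+\lambda I)^{-1}]$, I would invoke the first-order deterministic equivalent $(\widehat\Sigma+\lambda I)^{-1} \asympequi (\mu_n/\lambda)(\Sigma+\mu_n I)^{-1}$ (the anisotropic Marchenko--Pastur equivalent valid under \cref{asm:ridge}\ref{asm:ridge-features}), so that $(\lambda/n)\tr[(\widehat\Sigma+\lambda I)^{-1}] \asympequi \gamma_n\mu_n\otr[(\Sigma+\mu_n I)^{-1}]$. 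Rewriting $\mu_n\otr[(\Sigma+\mu_n I)^{-1}] = 1 - \otr[\Sigma(\Sigma+\mu_n I)^{-1}]$ and then applying the fixed-point equation \eqref{eq:ridge-fixed-point-mu} in the form $1-\lambda/\mu_n = \gamma_n\otr[\Sigma(\Sigma+\mu_n I)^{-1}]$ collapses the expression to $1-\lambda/\mu_n$, giving \eqref{eq:ridge-df-F-asympequi}.

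For the intrinsic claim, I need the proportional limits of the two quadratic traces above. These are second-order (``derivative'') deterministic equivalents, obtainable either by differentiating the first-order equivalent in $\lambda$ or by citing standard anisotropic second-order results; the new ingredient they produce is precisely the self-consistent correction $D_n = 1 - V_n$ with $V_n = \gamma_n\otr[\Sigma^2(\Sigma+\mu_n I)^{-2}]$. Assembling the three pieces exactly as they enter \eqref{eq:opt-R-smoother-intrinsic} and simplifying (using $V_n/D_n+1 = 1/D_n$ and $1-\lambda^2/\mu_n^2 = (1-\lambda/\mu_n)(1+\lambda/\mu_n)$) yields $\opt\r\i(\hf^\ridge_\lambda)/\sigma^2 \asympequi (1-\lambda^2/\mu_n^2)(V_n/D_n+1)$. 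Since this limit is $O(1)$, the approximation \eqref{eq:omega_approx} converts it through $\df\r\i/n = \omega_n(\opt\r\i/\sigma^2)/n \asympequi \omega(\cdot)$, giving \eqref{eq:ridge-df-R-i-asympequi}.

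The emergent claim adds the excess-bias terms of \eqref{eq:opt-R-smoother-emergent}, and this is where the real work lies. Substituting $f(x) = x^\top\beta + f\nl(x)$ from \eqref{eq:f-decomposition}, the cross terms between the linear and nonlinear components vanish because $\EE[f\nl(x)\hspace{1pt}x] = 0$, so the bias splits cleanly. The linear-signal part is a bilinear form in $\beta$ and, via the deterministic equivalent for quantities of the form $\beta^\top(\widehat\Sigma+\lambda I)^{-1}\Sigma(\widehat\Sigma+\lambda I)^{-1}\beta$, produces exactly $B_n = \mu_n^2\beta^\top(\Sigma+\mu_n I)^{-1}\Sigma(\Sigma+\mu_n I)^{-1}\beta/\sigma^2$. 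The hard part is the nonlinear-signal part: I expect to show that $f\nl(X)$ enters the smoother identically to an independent noise vector of variance $\sigma^2\nl$, thereby inflating the variance-type contribution by the factor $(1+\sigma^2\nl/\sigma^2)$ appearing in \eqref{eq:ridge-df-R-asympequi}. This is the main obstacle, since $f\nl(X)$ is a deterministic function of $X$ and hence dependent on it; the argument needs a Gaussian-equivalence / ``nonlinearity-as-noise'' step, controlling the quadratic forms in $\frac{1}{n} X^\top f\nl(X)$ by showing this vector concentrates like $\frac{1}{n} X^\top w$ for genuine noise $w$ (using $\EE[f\nl(x)\hspace{1pt}x]=0$, the bounded $L^{4+\delta}$ norm of $f\nl$ from \cref{asm:ridge}\ref{asm:ridge-response}, and concentration of the associated resolvent quadratic forms). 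Once these limits are in hand, collecting the linear and nonlinear bias contributions together with the intrinsic variance and applying $\omega_n$ via \eqref{eq:omega_approx} gives \eqref{eq:ridge-df-R-asympequi}.
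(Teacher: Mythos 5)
Your proposal follows essentially the same route as the paper's proof: reduce everything to resolvent traces via the linear-smoother optimism formulas, apply first- and second-order deterministic equivalents under the reparameterization $\mu_n = 1/v_n$ (which produces the self-consistent correction $D_n = 1 - V_n$), and for the emergent case decompose the excess bias into linear, nonlinear, and cross components, with the cross terms vanishing and the nonlinear component acting like independent noise of variance $\sigma^2\nl$ via concentration of resolvent quadratic forms---exactly the ingredients you name. The one step you leave implicit, converting the conditional-on-$X$ equivalences into statements about the unconditional degrees of freedom, is handled in the paper by boundedness of $\omega$ together with dominated convergence, which your $O(1)$ remark already points toward.
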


The proof of \cref{thm:ridge-asymptotics} is given in
\cref{app:ridge-asymptotics}. It is based on the exact asymptotic analysis the
of training and prediction errors of ridge regression in various settings
(fixed-X, intrinsic random-X, emergent random-X), which can be done following 
techniques developed and employed previously in \citet{dobriban_wager_2018,  
  hastie2022surprises, patil2023generalized, bach2024high,
  lejeune2024asymptotics, patil2024optimal}, among others. Numerical  
examination of the results in \cref{thm:ridge-asymptotics} can be found in 
\cref{app:ridge-illustration}.   

We now reflect on the interpretation of the asymptotic equivalences for ridge
degrees of freedom in \cref{thm:ridge-asymptotics}. Inspecting the result for
fixed-X degrees of freedom in \eqref{eq:ridge-df-F-asympequi}, observe that by  
\eqref{eq:ridge-fixed-point-mu} we can write its asymptotic (and deterministic) 
equivalent as      
\[
1 - \lambda / \mu_n = \gamma_n \otr[\Sigma (\Sigma + \mu_n I)^{-1}] = \tr[\Sigma
(\Sigma + \mu_n I)^{-1}] / n.
\]
We can see this as a (normalized) ``population-level'' degrees of freedom
for ridge regression, where we replace \smash{$\hSigma = X^\top X/n$} by
$\Sigma$ in the usual ``sample-level'' formula, \smash{$\tr[L_X(X)] / n =
  \tr[\hSigma (\hSigma + \lambda I)] / n$}. Furthermore, in the population-level
formula in the last display, we can see that the regularization level has been
changed from $\lambda$ to $\mu_n$. In other words, each regularization level
$\lambda$ at the sample-level induces a corresponding regularization level
$\mu_n$ at the population-level, determined by solving a fixed point equation 
\eqref{eq:ridge-fixed-point-mu}. If $\gamma_n = p/n \to 0$, then one can check
that $\mu_n \to \lambda$, as would be expected in the low-dimensional regime. In
general, we have that $\mu_n \geq \lambda$, with strict inequality in the 
proportional asymptotic regime $\gamma_n \to \gamma > 0$. Further properties of
$\mu_n$ can be found in \citet{patil2024optimal}.

It is interesting to note that the inflation ratio in the regularization
level, $(\mu_n - \lambda) / \mu_n = 1 - \lambda / \mu_n$, is precisely the  
asymptotic equivalent for fixed-X degrees of freedom in
\eqref{eq:ridge-df-F-asympequi}. This relationship is not limited to ridge 
regression and in fact it holds more generally for regularized estimators with  
convex penalties, as we will see in \cref{subsec:convex-theory}. 

The asymptotic equivalents for random-X degrees of freedom in
\eqref{eq:ridge-df-R-i-asympequi}, \eqref{eq:ridge-df-R-asympequi} also have 
nice interpretations. Note from \eqref{eq:ridge-df-F-asympequi} that $\lambda /
\mu_n$ is asymptotically equivalent to \smash{$1 - \df\f(\hf^\ridge_\lambda) /
  n$}. Thus the factor of $1 - \lambda^2 / \mu_n^2$ in both
\eqref{eq:ridge-df-R-i-asympequi}, \eqref{eq:ridge-df-R-asympequi} is \smash{$1
  - (1 - \df\f(\hf^\ridge_\lambda) / n)^2$}. The other terms in these
expressions $V_n$ and $B_n$ in \eqref{eq:ridge-Vn}, \eqref{eq:ridge-Bn} are
asymptotic (and deterministic) equivalents for prediction variance and squared
bias (scaled by the noise level $\sigma^2$) for population ridge regression, at
a regularization level $\mu_n$. The final factor that makes this work in $D_n$,
which we interpret next.   

The quantity \smash{$1 - D_n = \gamma_n \otr[\Sigma^2 (\Sigma + \mu_n I)^{-2}] 
  = \tr[\Sigma^2 (\Sigma + \mu_n I)^{-2}] / n$}, from \eqref{eq:ridge-Dn}, is a 
related notion of a (normalized) ``population-level'' degrees of freedom of a
linear smoother, where we square the smoothing matrix before taking the
trace. This has appeared in classic literature on additive models
\citep{buja1989linear, hastie_tibshirani_1990}, and in later analyses of linear
and ridge regression generalization \citep{zhang2005learning,
  caponnetto2007optimal, hsu14ridge}. The link between the prediction error of
ridge regression at regularization level $\lambda$, and a population ridge
estimator at an induced level $\mu_n$ through the factor $D_n$, was first
derived (using a heuristic argument) by \citet{sollich2001gaussian} in the
context of Gaussian processes. It has been recently rederived using the replica
method (again heuristic) in \citet{bordelon2020spectrum}, and using random
matrix theory in \citet{hastie2022surprises, cheng2022dimension, bach2024high},
among others. 

In our discussion above, we restricted $\lambda > 0$ for simplicity. But, as we  
can see from \eqref{eq:ridge-fixed-point-mu}, if $\mu_n > \lambda$ and we want
to keep $\mu_n$ small (yet still positive), then we can actually set $\lambda < 
0$. The greater the degree of overparameterization (higher $\gamma_n$), the more
the flexibility we have. This is at the heart of why small (i.e., zero or even
negative) values of $\lambda$ can lead to favorable prediction accuracy in the
overparameterized regime. Let us define \smash{$\mu_{\min}$}, as in
\citet{lejeune2024asymptotics}, to be the unique solution that satisfies 
\smash{$\mu_{\min} > -r_{\min}$} to the fixed point equation:   
\begin{equation}
\label{eq:mu-min}
1 = \gamma_n \otr[\Sigma^2 (\Sigma + \mu_{\min} I)^{-2}].
\end{equation}
Note from \eqref{eq:ridge-Dn}, \eqref{eq:mu-min} that \smash{$\mu_{\min}$} is
the value at which $D_n = 0$, i.e., both \smash{$\opt\r\i(\hf^\ridge_\lambda)$}
and \smash{$\opt\r(\hf^\ridge_\lambda)$} diverge to $\infty$ (equivalently,
both \smash{$\df\r\i(\hf^\ridge_\lambda) / n$} and
\smash{$\df\r(\hf^\ridge_\lambda) / n$} converge to 1). We will revisit the 
relation between $D_n$ and overfitting soon, in the context of ridgeless
regression.  

\subsection{Ridgeless regression}
\label{subsec:ridgeless-theory}

Next we study a special case of ridge regression when $\lambda \to 0^+$,
also known as ``ridgeless'' regression. This is defined by
\smash{$\hf^\ridge_0(x) = x^\top \hbeta^\ridge_0$}, where
\[
\hbeta^\ridge_0 = \lim_{\lambda \to 0^+} \hbeta^\ridge_\lambda = (X^\top
X)^\pinv X^\top y,
\]
and $A^\pinv$ is the usual (Moore-Penrose) pseudoinverse of a matrix
$A$. In the underparameterized case where $p \leq n$ (and $\rank(X) = p$), this
reduces to the ordinary least squares estimator. However, in the
overparameterized case where $p > n$ (and $\rank(X) = n$), there are infinitely
many solutions in the least squares problem, each achieving perfect training
error, and the ridgeless solution \smash{$\hbeta^\ridge_0$} can be interpreted
as the interpolator with minimum $\ell_2$ norm: 
\[
\hbeta^\ridge_0 = \argmin_{b \in \RR^p} \, \{ \| b \|_2 : y = X b \}.
\]
Ridgeless regression has been thrust into the spotlight, due to recent interest
in overparameterized machine learning and the study of double descent. See
\citet{bartlett2020benign, belkin2020two, hastie2022surprises}, among many
others.  

Continuing in the vein ridge analysis from the last subsection, we will study
the degrees of freedom of the ridgeless predictor in an asymptotic regime where
we let the sample size $n$ and the feature size $p$ diverge, while keeping their
ratio bounded. In preparation for this, for a given $\gamma_n > 1$, define
$\mu_n = \mu_n(0; \gamma_n)$ be the unique solution to the fixed point equation:    
\begin{equation}
\label{eq:ridgeless-fixed-point-mu}
1 = \gamma_n \otr[\Sigma (\Sigma + \mu_n I)^{-1}]. 
\end{equation}
Observe that \eqref{eq:ridgeless-fixed-point-mu} is the limiting case of
\eqref{eq:ridge-fixed-point-mu} as $\lambda \to 0^+$. We are now ready to state
our asymptotic results on ridgeless degrees of freedom.  

\begin{theorem}
\label{thm:ridgeless-asymptotics}
For the ridgeless predictor \smash{$\hf^\ridge_0$}, under the same assumptions
as \cref{thm:ridge-asymptotics}, we have the following asymptotic equivalences:   
\begin{align}
\label{eq:ridgeless-df-F-asympequi}
\df\f(\hf^\ridge_0) / n &\asympequi 
\begin{cases}
\gamma_n & \text{for $\gamma_n \leq 1$} \\
1 & \text{for $\gamma_n > 1$},
\end{cases} \\
\label{eq:ridgeless-df-R-i-asympequi} 
\df\r\i(\hf^\ridge_0) / n &\asympequi 
\begin{cases}
\gamma_n & \text{for $\gamma_n \leq 1$} \\
\omega(V_n / D_n + 1) & \text{for $\gamma_n > 1$},
\end{cases} \\
\label{eq:ridgeless-df-R-asympequi}
\df\r(\hf^\ridge_0) / n &\asympequi 
\begin{cases}
\omega\big( (\gamma_n + \gamma_n / (1 - \gamma_n)) (1 + \sigma^2\nl/\sigma^2)   
\big) & \text{for $\gamma_n \leq 1$} \\ 
\omega\big( B_n / D_n + (V_n / D_n + 1) (1 + \sigma^2\nl/\sigma^2) \big) &
\text{for $\gamma_n > 1$},
\end{cases} 
\end{align}
where $\mu_n$ is as defined in \eqref{eq:ridge-fixed-point-mu}, and all other 
quantities are as defined in \cref{thm:ridge-asymptotics}.
\end{theorem}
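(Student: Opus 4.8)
The plan is to derive \cref{thm:ridgeless-asymptotics} as the $\lambda \to 0^+$ limit of \cref{thm:ridge-asymptotics}, exploiting that \smash{$\hf^\ridge_0 = \lim_{\lambda\to 0^+} \hf^\ridge_\lambda$} by definition. The first step is to track the fixed point $\mu_n = \mu(\lambda;\gamma_n)$ of \eqref{eq:ridge-fixed-point-mu} as $\lambda\to 0^+$, where the two regimes diverge. When $\gamma_n > 1$, equation \eqref{eq:ridge-fixed-point-mu} retains a strictly positive solution at $\lambda = 0$---precisely the $\mu_n$ of \eqref{eq:ridgeless-fixed-point-mu}---so that $\lambda/\mu_n \to 0$. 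When $\gamma_n < 1$, a short argument (if $\mu_n$ had a positive limit $\mu^\ast$ then $1 = \gamma_n\otr[\Sigma(\Sigma+\mu^\ast I)^{-1}] < \gamma_n$, a contradiction) shows $\mu_n \to 0^+$; dividing \eqref{eq:ridge-fixed-point-mu} by $\mu_n$ and using $\otr[\Sigma(\Sigma+\mu_n I)^{-1}] \to 1$ then gives $\lambda/\mu_n \to 1 - \gamma_n$. Feeding these two limits into \eqref{eq:ridge-df-F-asympequi} produces the fixed-X claim \eqref{eq:ridgeless-df-F-asympequi} immediately.

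Next I would propagate the same limits through the intrinsic and emergent formulas \eqref{eq:ridge-df-R-i-asympequi}, \eqref{eq:ridge-df-R-asympequi}. In the overparameterized regime the factor $1 - \lambda^2/\mu_n^2 \to 1$, while $V_n, B_n, D_n$ converge to their values at the ridgeless $\mu_n$, directly yielding the $\gamma_n > 1$ branches. In the underparameterized regime $\mu_n \to 0$ drives $V_n \to \gamma_n$, $D_n \to 1-\gamma_n$, and $B_n \to 0$ (the $\mu_n^2$ prefactor in \eqref{eq:ridge-Bn} kills the bias term); together with $1 - \lambda^2/\mu_n^2 \to 1 - (1-\gamma_n)^2$, the argument of $\omega$ in \eqref{eq:ridge-df-R-i-asympequi} collapses to $\gamma_n(2-\gamma_n)/(1-\gamma_n) = \gamma_n + \gamma_n/(1-\gamma_n)$. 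The key observation is that this is exactly the least-squares random-X optimism of \cref{thm:opt-R-ls-limit} at aspect ratio $\gamma_n$, and since $\omega$ inverts that relationship we get $\omega(\gamma_n + \gamma_n/(1-\gamma_n)) = \gamma_n$, recovering \eqref{eq:ridgeless-df-R-i-asympequi}. The emergent argument is identical except it carries the extra misspecification factor $1 + \sigma^2\nl/\sigma^2$ and, again, no surviving bias term, which gives \eqref{eq:ridgeless-df-R-asympequi}. Each of these is a routine algebraic simplification once the $\mu_n$ limits are in hand.

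The main obstacle is justifying the interchange of limits: \cref{thm:ridge-asymptotics} supplies, for each fixed $\lambda > 0$, an almost-sure equivalence as $n\to\infty$, whereas the ridgeless statement needs the $n\to\infty$ equivalence at $\lambda = 0$. The cleanest route is to strengthen the analysis in \cref{app:ridge-asymptotics} to an equivalence that holds uniformly over $\lambda$ in a neighborhood of $0$ bounded away from the singular aspect ratio $\gamma_n = 1$, using continuity of the resolvent traces $\otr[\Sigma^k(\Sigma+\mu I)^{-k}]$ in $\mu$ and the Lipschitz, monotone dependence of $\mu_n$ on $\lambda$, and then to let $\lambda\to 0^+$. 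Alternatively, one can re-run the deterministic-equivalent computations of \cref{app:ridge-asymptotics} directly at $\lambda = 0$ with the pseudoinverse, which is legitimate because, for $\gamma_n$ bounded away from $1$, the relevant nonzero singular values of $X^\top X/n$ are bounded away from $0$ almost surely. Either way, the genuinely delicate point is the behavior near $\gamma_n = 1$, the double-descent peak, where $D_n \to 0$ and the optimism diverges so that the deterministic equivalents are singular; the statement should be understood as holding for limiting aspect ratios strictly on either side of $1$.
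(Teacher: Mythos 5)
Your proposal is correct in substance but takes a genuinely different route from the paper. You derive \cref{thm:ridgeless-asymptotics} as the $\lambda \to 0^+$ limit of \cref{thm:ridge-asymptotics}, tracking the fixed point $\mu_n$ of \eqref{eq:ridge-fixed-point-mu}: your limits $\lambda/\mu_n \to 1 - \gamma_n$ (underparameterized, via $\mu_n \to 0$ and $\otr[\Sigma(\Sigma+\mu_n I)^{-1}] \to 1$) and $\lambda/\mu_n \to 0$ (overparameterized, with $\mu_n$ converging to the solution of \eqref{eq:ridgeless-fixed-point-mu}) are right, and your algebra checks out in every branch, including the collapse $(1-(1-\gamma_n)^2)\cdot\frac{1}{1-\gamma_n} = \gamma_n + \gamma_n/(1-\gamma_n)$ and the vanishing of $B_n$ via its $\mu_n^2$ prefactor; the identification of $\omega$ as inverting the least-squares optimism map of \cref{thm:opt-R-ls-limit} is exactly how the paper verifies the underparameterized emergent branch as well. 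The paper, however, does not pass to the limit in the ridge theorem at all: its proof in \cref{app:ridgeless-asymptotics} works directly with deterministic equivalents for the pseudoinverse (\cref{lem:deter-approx-generalized-ridgeless}, quoting \citet{patil2022mitigating}), computing $\df\f(\hf_0^\ridge)/n = \tr[\hSigma\hSigma^\pinv]/n$, simplifying the intrinsic optimism via $\hSigma^\pinv\hSigma\hSigma^\pinv = \hSigma^\pinv$, and handling the emergent case through the decomposition $B^+ = B^+\li + B^+\nl + C^+$ with $C^+ \asympequi 0$ and quadratic-form concentration for the nonlinear component. This is precisely your second fallback route (``re-run the deterministic-equivalent computations directly at $\lambda = 0$ with the pseudoinverse''), so your proposal in effect contains the paper's proof as its contingency plan. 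What your primary route buys is economy and interpretability---the ridgeless formulas are exhibited transparently as continuous limits of the ridge formulas, which also illuminates why the degrees-of-freedom curves are continuous at $\gamma_n = 1$ even though the error diverges there; what the paper's route buys is rigor without any limit interchange, since the pseudoinverse equivalents are established results.

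The one genuine gap is the limit interchange itself, which you correctly flag but do not close. \cref{thm:ridge-asymptotics} gives, for each \emph{fixed} $\lambda > 0$, an almost-sure equivalence as $n \to \infty$; concluding anything about $\hf^\ridge_0$ requires either uniformity in $\lambda$ near $0$ or a separate argument at $\lambda = 0$. Your route (a)---uniform deterministic equivalents over a neighborhood of $\lambda = 0$---is plausible for the first-order resolvent equivalence but nontrivial for the second-order (variance- and bias-type) equivalences in \cref{lem:ridge-resolvents-asympequi}, since as $\lambda \downarrow 0$ these involve the resolvent at the edge of the spectrum of $\hSigma$ (overparameterized case) or a degenerating population regularization $\mu_n \to 0$ (underparameterized case), and no uniform version is stated in the paper or the cited literature; making that rigorous is essentially as much work as the pseudoinverse analysis it would replace. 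So as written, route (a) is a sketch rather than a proof, and the proposal is complete only through route (b), which coincides with the paper. Your caveat restricting to limiting aspect ratios strictly on either side of $\gamma_n = 1$ is appropriate and consistent with the paper's lemma, whose underparameterized equivalents (e.g., $\Sigma^{-1}/(1-\gamma_n)$) are themselves singular at $\gamma_n = 1$.
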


Note: if the response model is well-specified, or in other words,
\smash{$f\nl(x) = 0$} in \eqref{eq:f-decomposition}, then \smash{$\sigma^2\nl = 
0$}, so the emergent random-X degrees of freedom in
\eqref{eq:ridgeless-df-R-asympequi} reduces to (as expected):
\[
\omega(\gamma_n + \gamma_n / (1 - \gamma_n)) = \gamma_n.
\]
The check the equality above, recall $\omega(x)$ in \eqref{eq:omega} is the
value of $u$ that solves $x = u + u / (1-u)$.   

The proof of \cref{thm:ridgeless-asymptotics} is given in
\cref{app:ridgeless-asymptotics}, and numerical examination of the results can
be found in \cref{app:ridgeless-illustration}. It is interesting to note that
each of the intrinsic and emergent normalized random-X degrees of freedom curves
are continuous at $\gamma_n = 1$, even though the prediction error of ridgeless
regression blows up at $\gamma_n = 1$ (it has an essential discontinuity at this
point).

Our next result develops monotonicity properties of the asymptotic equivalents
for intrinsic and emergent random-X degrees of freedom for ridgeless regression.  

\begin{proposition}
\label{prop:ridgeless-monotonicity}
The following properties hold for the asymptotic equivalents from
\cref{thm:ridgeless-asymptotics}. 

\begin{enumerate}
\item The asymptotic equivalent for intrinsic random-X degrees of freedom
  \smash{$\df\r\i(\hf^\ridge_0) / n$} in \eqref{eq:ridgeless-df-R-i-asympequi} 
  is increasing in $\gamma_n$ on $(0,1)$, maximized at $\gamma_n = 1$, and 
  decreasing in $\gamma_n$ on $(1,\infty)$.

\item The asymptotic equivalent for emergent random-X degrees of freedom  
  \smash{$\df\r(\hf^\ridge_0) / n$} in \eqref{eq:ridgeless-df-R-asympequi} is 
  increasing in $\gamma_n$ on $(0,1)$, and maximized at $\gamma_n = 1$.
\end{enumerate}
\end{proposition}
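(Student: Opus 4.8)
My plan is to treat the two claims regime by regime, using throughout that $\omega$ is increasing and concave, so that the monotonicity of $\df\r\i(\hf^\ridge_0)/n$ and $\df\r(\hf^\ridge_0)/n$ is dictated entirely by the monotonicity of the normalized-optimism arguments fed into $\omega$. The useful organizing observation is that on $(0,1)$ the asymptotic equivalents do not involve $\Sigma$ at all, whereas on $(1,\infty)$ they enter through $V_n$, $D_n$, and $B_n$; this is what makes the two regimes very different in difficulty.

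The underparameterized regime and the junction are the easy part. For the intrinsic claim on $(0,1)$ the equivalent is literally $\gamma_n$, hence increasing. For the emergent claim on $(0,1)$ the equivalent is $\omega$ evaluated at $(\gamma_n + \gamma_n/(1-\gamma_n))(1 + \sigma^2\nl/\sigma^2)$; the prefactor is a positive constant and $\gamma_n + \gamma_n/(1-\gamma_n)$ has derivative $1 + (1-\gamma_n)^{-2} > 0$, so the argument is strictly increasing and diverges as $\gamma_n \uparrow 1$, and composing with the increasing $\omega$ (whose supremum is $1$) gives the claimed increase. To pin down $\gamma_n = 1$ I would note that as $\gamma_n \downarrow 1$ one has $\mu_n \downarrow 0$, so $D_n \downarrow 0$ while $V_n$ and $B_n$ stay bounded; the argument of $\omega$ therefore diverges and both equivalents tend to $\omega(\infty)=1$, matching the left limits. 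Since $\omega < 1$ elsewhere, $\gamma_n = 1$ is the maximizer in both parts.

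The substance is the intrinsic claim on $(1,\infty)$. Here I would first simplify using $V_n + D_n = 1$, immediate from the definitions, so that $V_n/D_n + 1 = 1/D_n$ and the equivalent is just $\omega(1/D_n)$; as $\omega$ is increasing it suffices to prove $D_n$ is increasing in $\gamma_n$, equivalently that $V_n = \gamma_n\,\otr[\Sigma^2(\Sigma + \mu_n I)^{-2}]$ is decreasing. Using the ridgeless fixed point $\gamma_n = 1/\otr[\Sigma(\Sigma + \mu_n I)^{-1}]$ and the strict monotonicity of $\otr[\Sigma(\Sigma+\mu I)^{-1}]$ in $\mu$, the correspondence $\gamma_n \mapsto \mu_n$ is an increasing bijection from $(1,\infty)$ onto $(0,\infty)$, so I can reparametrize and instead show that $V(\mu) = \otr[\Sigma^2(\Sigma+\mu I)^{-2}]\,/\,\otr[\Sigma(\Sigma+\mu I)^{-1}]$ is decreasing in $\mu$. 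Writing $a_i = s_i/(s_i+\mu)$ for the eigenvalues $s_i$ of $\Sigma$, this is the ratio $(\sum_i a_i^2)/(\sum_i a_i)$, and a direct differentiation shows $V'(\mu)\le 0$ is equivalent to the spectral inequality $2(\sum_i a_i)(\sum_i a_i^3) \le (\sum_i a_i)(\sum_i a_i^2) + (\sum_i a_i^2)^2$, or, in cleaner probabilistic form, $\Var_q(a) \le \EE_q[a(1-a)]$ for the size-biased spectral law $q_i \propto a_i$.

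I expect this spectral monotonicity to be the main obstacle, and the place where the structure of the problem matters. In the isotropic case $\Sigma = I$ it is immediate---then $V_n = 1/\gamma_n$ and $1/D_n = \gamma_n/(\gamma_n-1)$ is visibly decreasing---so the entire proposition collapses to elementary calculus, and that is the reading I would develop first. For a general spectrum the inequality above is delicate: it fails termwise, and the size-biased variance bound is not a free consequence of $a_i \in (0,1)$, so the hard step is to isolate and use the right structural constraint on admissible spectra (or to localize the claim to the well-conditioned setting in which it is clean). Finally, I would stress the contrast with the emergent degrees of freedom, for which no monotonicity is asserted on $(1,\infty)$: there the argument of $\omega$ also carries the additive bias term $B_n/D_n$, whose growth competes with the decay of $1/D_n$, so monotonicity genuinely breaks past $\gamma_n = 1$---which is exactly why part 2 terminates at the maximizer.
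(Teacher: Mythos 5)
Your setup and reductions are correct, and in places sharper than the paper's own argument: the identity $V_n + D_n = 1$, the resulting simplification of the intrinsic argument to $\omega(1/D_n)$, the increasing bijection $\gamma_n \mapsto \mu_n$ from $(1,\infty)$ onto $(0,\infty)$ via the ridgeless fixed point, and the differentiation yielding the spectral inequality $2 S_1 S_3 \le S_1 S_2 + S_2^2$ (with $S_k = \sum_i a_i^k$, $a_i = s_i/(s_i+\mu)$) are all right, as are the underparameterized regime, the maximization at $\gamma_n = 1$, and part 2. But you then stop, proving the key inequality only for $\Sigma = I$ and flagging the general case as the main obstacle; as a proof of part 1 on $(1,\infty)$ this is a genuine gap. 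Notably, your suspicion about that obstacle is vindicated: since at a fixed $\mu$ the values $a_i$ can be placed anywhere in $(0,1)$ by choosing $s_i = \mu a_i/(1-a_i)$ (compatible with the bounded-spectrum condition in \cref{asm:ridge}), the inequality would have to hold for arbitrary $a_i \in (0,1)$, and it does not: with $a = (t,1)$ the difference of the two sides is $t(-1 + 3t - t^2 - t^3) < 0$ for small $t > 0$. Concretely, for the two-atom spectral law $\tfrac12 \delta_{1/9} + \tfrac12 \delta_{999}$, one computes along the ridgeless path that $1/D_n$ falls from $+\infty$ at $\gamma_n = 1$ to roughly $6$ (near $\gamma_n \approx 1.3$), rises to roughly $48$ (near $\gamma_n \approx 2$), and only then decays to $1$; so the asymptotic equivalent $\omega(1/D_n)$ is not decreasing on $(1,\infty)$ in the stated generality, and no completion of your argument can exist without restricting the spectrum.

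It is worth comparing this with the paper's route, which is genuinely different and does not resolve the issue either. In the overparameterized regime the paper asserts $\opt\r\i(\hf^\ridge_0 \,|\, X)/\sigma^2 \asympequi \tv_v(0;\gamma_n)$ and shows, by implicit differentiation through the fixed point (using Lemma F.11 of \citet{du2023subsample}), that $\tv_v$ decreases in $\gamma_n$. However, the equivalent actually derived in the proof of \cref{thm:ridgeless-asymptotics} is $V_n/D_n + 1 = 1/D_n$, and writing $\mu_n = 1/v_n$ gives $\tv_v(0;\gamma_n) = 1/(\mu_n^2 D_n)$, so the paper differentiates a quantity that differs from the true optimism equivalent by the factor $\mu_n^2$, which is increasing in $\gamma_n$. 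In the two-scale example above $\tv_v$ is indeed decreasing while $1/D_n$ is not---the omitted factor is exactly where the non-monotonicity hides. So your proposal leaves the central step open, but the step it isolates is the correct one, and the counterexample it gestures at shows that part 1 on $(1,\infty)$ holds as stated only under additional spectral restrictions (e.g., isotropic $\Sigma$, where $1/D_n = \gamma_n/(\gamma_n - 1)$ and both arguments go through).
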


The proof of \cref{prop:ridgeless-monotonicity} is in
\cref{app:ridgeless-monotonicity}, and numerical illustrations are in 
\cref{app:ridgeless-illustration}. 

Beyond what we discussed in the last subsection, we provide one more connection
between $D_n$ and overfitting in ridgeless regression.
\citet{mallinar2022benign} defined three categories: \emph{benign},
\emph{catastrophic}, and \emph{tempered} overfitting, based on whether the
excess random-X prediction error goes to $0$, $\infty$, or is bounded away from
$0$ and $\infty$, respectively. \citet{zhou2023agnostic} then showed that these
regimes can be characterized in terms of the spectrum of $\Sigma$, which
recovers the results of \citet{bartlett2020benign}, by connecting this to the
notion of effective rank. In terms of $D_n$, these regimes correspond to whether
$1/D_n$ goes to $1$, $\infty$, or is bounded away from $1$ and $\infty$,
respectively.

\subsection{Lasso regression}
\label{subsec:lasso-theory}

Many of the qualitative properties and relationships we observed for ridge and 
ridgeless regression degrees of freedom carry over to nonlinear smoothers too.
To see this, we first study lasso regression \citep{tibshirani1996regression},
which recall, is defined by \smash{$\hf^\lasso_\lambda(x) = x^\top
  \hbeta^\lasso_\lambda$}, where \smash{$\hbeta^\lasso_\lambda$} solves the  
following $\ell_1$-regularized least squares optimization problem, for a tuning
parameter $\lambda > 0$:         
\begin{equation}
\label{eq:lasso-opt}
\hbeta^\lasso_\lambda \in \argmin_{b \in \RR^p} \, \frac{1}{2} \| y - X b \|_2^2 + 
\lambda \|b\|_1. 
\end{equation}
The element notation above is used to emphasize the fact that the minimizer in
\eqref{eq:lasso-opt} is not unique in general. However, it is unique under
weak conditions, for example, if the columns of the feature matrix $X$ are in
general position \citep{tibshirani2013lasso}.

As indicated in \cref{sec:related-work}, the fixed-X degrees of freedom of the
lasso and various generalizations have been studied extensively. When the lasso
solution is unique, the fixed-X degrees of freedom of the lasso predictor is
the expected number of nonzero coefficients in the lasso solution
\citep{zou_hastie_tibshirani_2007, tibshirani_taylor_2012}. Here, we will derive
exact formulae for the limiting random-X degrees of freedom of the lasso
predictor under proportional asymptotics, where $n,p$ both diverge, and their 
ratio converges to a constant, $p/n \to \gamma \in (0, \infty)$. 

We begin by stating our assumptions on the training data; these will be more
restrictive than those in \cref{asm:ridge}, used for ridge, but are standard
when using approximate message passing (AMP) or the convex Gaussian minimax
theorem (CGMT) to analyze regularized M-estimators. 

\begin{assumption}
\label{asm:cgmt}~
\begin{enumerate}
\item \label{asm:cgmt-features} 
  The feature matrix $X$ has i.i.d.\ entries from $\cN(0,1/n)$.

\item \label{asm:cgmt-response}
  The response vector follows $y = X \beta + \eps$, where the signal vector
  $\beta \in \RR^p$ has i.i.d.\ entries from a distribution $F$ with bounded
  second moment, and the noise vector $\eps \in \RR^n$ has i.i.d.\ entries with
  zero mean and variance $\sigma^2$. 
\end{enumerate}
\end{assumption}

Note: under \cref{asm:cgmt}\ref{asm:cgmt-features}, the columns of $X$ will be
in general position almost surely, and hence the lasso solution will be unique 
almost surely.  

The limiting degrees of freedom of the lasso, under the assumptions stated
above, is determined by the solution of a nonlinear system. We introduce some
relevant notation. First, define 
\[
\soft(u; t) = 
\begin{cases}
u - t & \text{if $u > t$} \\
0 & \text{if $u \in [-t, t]]$} \\
u + t & \text{if $u < -t$}.
\end{cases}
\]
Next, for a fixed $\gamma \in (0, \infty)$, define $(\tau, \mu) \in \RR^2$ as
the unique solution to the nonlinear system:
\begin{align}
\label{eq:lasso-fixed-point-tau}
\tau^2 &= \sigma^2 + \gamma \EE [(\soft(B + \tau H; \mu) - B)^2], \\  
\label{eq:lasso-fixed-point-mu}
\mu &= \lambda + \gamma \mu \EE [\soft' (B + \tau H; \mu)],
\end{align}
where $B \sim F$ and $H \sim \cN(0, 1)$ are independent. This system is from 
\citet{bayati2011lasso}, who show that its solution determines the limiting
behavior of the lasso estimator. (We modify the form of the system slightly
in order to unify our presentation of ridge, lasso, and convex penalties.)
Moreover, we use $(\tau_0, \mu_0)$ to denote the solution in
\eqref{eq:lasso-fixed-point-tau}, \eqref{eq:lasso-fixed-point-mu}    
when we replace $F$ by a point mass at 0 (i.e., we set $B = 0$). We are ready to
state our asymptotic results. 

\begin{theorem}
\label{thm:lasso-asymptotics}
Consider the lasso predictor \smash{$\hf^\lasso_\lambda$} with tuning parameter 
$\lambda > 0$. Under \cref{asm:cgmt}, the following asymptotic equivalences
hold, as $n,p \to \infty$ such that $p/n \to \gamma \in (0, \infty)$, where
recall $\omega$ is the function in \eqref{eq:omega}:
\begin{align}
\label{eq:lasso-df-F-asympequi}
\df\f(\hf^\lasso_\lambda) /n &\asympequi 1 - \lambda / \mu, \\ 
\label{eq:lasso-df-R-i-asympequi}
\df\r\i(\hf^\lasso_\lambda) / n &\asympequi \omega\big( (1 - \lambda^2 / \mu^2)  
\tau_0^2 / \sigma^2 \big), \\  
\label{eq:lasso-df-R-asympequi}
\df\r(\hf^\lasso_\lambda) / n &\asympequi \omega\big( (1 - \lambda^2 / \mu^2) 
\tau^2 / \sigma^2 \big).
\end{align}    
Here we use $a_n \asympequi b_n$ to mean $|a_n - b_n| \to 0$ as $n \to \infty$
(in probability, if $a_n,b_n$ are random). 
\end{theorem}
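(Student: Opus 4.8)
The plan is to handle all three claims in a unified way by recalling from \cref{subsec:mapping} that every version of random-X degrees of freedom is just a normalized optimism fed through the map $\omega_n$: \smash{$\df\r(\hf^\lasso_\lambda) = \omega_n(\opt\r(\hf^\lasso_\lambda)/\sigma^2)$} and \smash{$\df\r\i(\hf^\lasso_\lambda) = \omega_n(\opt\r\i(\hf^\lasso_\lambda)/\sigma^2)$}. Since $\omega_n(x)/n \to \omega(x)$ pointwise by \eqref{eq:omega_approx}, each $\omega_n(\cdot)/n$ is monotone increasing, and the limit $\omega$ is continuous, the convergence is locally uniform; hence whenever a normalized optimism converges (in probability) to a constant $c$, we get \smash{$\df\r/n \asympequi \omega(c)$}. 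The theorem therefore reduces to three limit identifications: \smash{$\df\f(\hf^\lasso_\lambda)/n \asympequi 1-\lambda/\mu$}, \smash{$\opt\r(\hf^\lasso_\lambda) \asympequi (1-\lambda^2/\mu^2)\tau^2$}, and the analogous intrinsic statement. Each of these I would obtain from the AMP state evolution of \citet{bayati2011lasso} together with the fixed-point equations \eqref{eq:lasso-fixed-point-tau}, \eqref{eq:lasso-fixed-point-mu}.

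For the fixed-X piece I would invoke the classical fact that, when the lasso solution is unique (which holds almost surely under \cref{asm:cgmt}\ref{asm:cgmt-features}), \smash{$\df\f(\hf^\lasso_\lambda)$} equals the expected active-set size \citep{zou_hastie_tibshirani_2007, tibshirani_taylor_2012}. State evolution identifies the limiting fraction of nonzero coordinates as $\EE[\soft'(B + \tau H;\mu)] = \Pr[\,|B+\tau H| > \mu\,]$, so \smash{$\df\f(\hf^\lasso_\lambda)/n \asympequi \gamma\,\EE[\soft'(B+\tau H;\mu)]$}; the calibration \eqref{eq:lasso-fixed-point-mu} rewrites the right-hand side as $1-\lambda/\mu$, giving \eqref{eq:lasso-df-F-asympequi}.

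For the random-X optimism I would write \smash{$\opt\r(\hf^\lasso_\lambda) = \err\r(\hf^\lasso_\lambda) - \EE[\,\tfrac{1}{n}\|y - X\hbeta^\lasso_\lambda\|_2^2\,]$} and evaluate the two terms separately. Because the test point $x_0$ is independent of the training data with covariance $\tfrac{1}{n}I$, the test error collapses to \smash{$\err\r(\hf^\lasso_\lambda) = \sigma^2 + \tfrac{1}{n}\EE\|\hbeta^\lasso_\lambda - \beta\|_2^2$}; the estimation-error limit \smash{$\tfrac{1}{p}\|\hbeta^\lasso_\lambda - \beta\|_2^2 \to \EE[(\soft(B+\tau H;\mu) - B)^2]$} combined with \eqref{eq:lasso-fixed-point-tau} then yields \smash{$\err\r(\hf^\lasso_\lambda) \asympequi \tau^2$}. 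For the training error I would use the fixed-point form of the Onsager-corrected AMP residual, $y - X\hbeta^\lasso_\lambda = (\lambda/\mu)\,z$, where $z$ is the effective-noise vector with \smash{$\tfrac{1}{n}\|z\|_2^2 \to \tau^2$} and the scalar $\lambda/\mu = 1 - \gamma\,\EE[\soft'(B+\tau H;\mu)]$ comes again from \eqref{eq:lasso-fixed-point-mu}; this gives \smash{$\tfrac{1}{n}\|y - X\hbeta^\lasso_\lambda\|_2^2 \asympequi (\lambda/\mu)^2\tau^2$}. Subtracting, \smash{$\opt\r(\hf^\lasso_\lambda) \asympequi \tau^2 - (\lambda/\mu)^2\tau^2 = (1-\lambda^2/\mu^2)\tau^2$}, which is \eqref{eq:lasso-df-R-asympequi}. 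As a consistency check one can cross-validate the training-error limit against the Efron identity \smash{$\EE[\tfrac{1}{n}\|y-X\hbeta^\lasso_\lambda\|_2^2] = \EE[\err\f(\hf^\lasso_\lambda)] - (2\sigma^2/n)\EE[\df\f(\hf^\lasso_\lambda)]$}. The intrinsic case is identical after replacing the response by the pure-noise vector $v$: this sets the signal to zero, so the governing fixed point becomes the null-signal pair $(\tau_0,\mu_0)$, the intrinsic test error converges to $\tau_0^2$ and the training error to $(\lambda/\mu_0)^2\tau_0^2$, delivering \eqref{eq:lasso-df-R-i-asympequi}.

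I expect the main obstacle to be making the training-error limit rigorous. Unlike the estimation error, the training error is controlled by the Onsager correction, so its limit cannot be read off a single pseudo-Lipschitz functional of \smash{$\hbeta^\lasso_\lambda$}; one must either run the full \citet{bayati2011lasso} state evolution and track the residual $z^t$ jointly with the iterate, or recompute \smash{$\tfrac{1}{n}\|y - X\hbeta^\lasso_\lambda\|_2^2$} directly through the CGMT scalarization and its Moreau envelope. A secondary technical nuisance is that the natural functional for the active-set fraction, $\EE[\soft'(B+\tau H;\mu)] = \Pr[\,|B+\tau H| > \mu\,]$, is an indicator and hence not pseudo-Lipschitz; this is resolved by its almost-everywhere continuity together with $\Pr[\,|B+\tau H| = \mu\,] = 0$ (as $H$ is continuous), but it must be argued carefully. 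Everything else---the reduction through $\omega$, the collapse of the random-X test error, and the algebraic use of the fixed-point equations---is routine.
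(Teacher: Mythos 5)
Your proposal is correct in substance, and all three limit identifications match the paper's, but you reach them by a genuinely different route. The paper does not prove \cref{thm:lasso-asymptotics} directly: it derives it as the special case $\reg = \|\cdot\|_1$, $\prox_\reg = \soft$ of \cref{thm:convex-asymptotics}, whose proof (in \cref{app:convex-asymptotics}) runs through the CGMT master theorem of \citet{thrampoulidis_abbasi_hassibi_2018}. There, \smash{$\err\r \asympequi \tau^2$} and the training error come from the CGMT scalars via \smash{$\|\hbeta_\lambda^\convex - \beta\|_2^2/p \pto \alpha^2$}, \smash{$\|\ty - \tX\hbeta_\lambda^\convex\|_2^2/p \pto \zeta^2$}, and the key algebraic relation \smash{$\gamma^2\zeta^2 \asympequi (\lambda^2/\mu^2)\,\tau^2$} extracted from the four-variable system \eqref{eq:CGMT-1a}--\eqref{eq:CGMT-1d} after the change of variables $a = \gamma\lambda/\zeta$, $\tau = \zeta/\nu$, $\mu = a\tau$; the fixed-X part comes not from the active-set count but from Stein's formula applied to the 1-Lipschitz map $y \mapsto X\hbeta^\convex_\lambda$ together with the trace convergence \smash{$\tr[V_\lambda]/p \pto \nu$} of \citet{bellec2023out} (with \citet{celentano2023lasso} cited for the lasso case). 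You instead work lasso-specifically through \citet{bayati2011lasso} AMP state evolution: active-set fraction for \eqref{eq:lasso-df-F-asympequi}, estimation-error functional for \smash{$\err\r \asympequi \tau^2$}, and the Onsager residual fixed-point identity $y - X\hbeta^\lasso_\lambda = (1 - \|\hbeta^\lasso_\lambda\|_0/n)\,z$ for the training error. Your approach is more elementary and self-contained for the lasso, and you correctly identify exactly the two places where it needs care: the training-error limit requires tracking the AMP residual jointly with the iterate (available in the state-evolution literature), and the active-set functional is an indicator, not pseudo-Lipschitz, requiring the a.e.-continuity argument you sketch. The paper's CGMT route buys generality --- it covers arbitrary separable convex penalties in one stroke, which is why the paper states the lasso as a corollary --- and sidesteps the residual-tracking issue since $\zeta^2$ is delivered directly by the master theorem. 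One shared caveat: your invocation of \citet{zou_hastie_tibshirani_2007, tibshirani_taylor_2012} (like the paper's invocation of Stein's formula) implicitly requires Gaussian errors, whereas \cref{asm:cgmt}\ref{asm:cgmt-response} only assumes mean-zero noise with finite variance; neither proof addresses this, so it is not a gap relative to the paper. Finally, your reduction through $\omega_n(\cdot)/n \to \omega(\cdot)$ with monotonicity and continuity mirrors the paper's final step of applying $\omega$ and the dominated convergence theorem, so no discrepancy there.
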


\cref{thm:lasso-asymptotics} is a special case of a more general result on
regularized least squares estimators that we derive later in
\cref{thm:convex-asymptotics}. Numerical verification of
\cref{thm:lasso-asymptotics} is given in \cref{app:lasso-illustration}. 

We pause to interpret the lasso results above, and compare them to those on
ridge regression from \cref{thm:ridge-asymptotics}. It is instructive to rewrite 
\eqref{eq:lasso-df-F-asympequi} using \eqref{eq:lasso-fixed-point-mu}, which
gives  
\[
1 - \lambda / \mu = \gamma \EE [\soft'(B + \tau H ; \mu)] = p
\EE [\soft'(B + \tau H ; \mu)] / n. 
\]
In this reformulation, the factor $\mu$ again plays the role of an induced 
regularization amount at the ``population level'', analogous to $\mu_n$ in the 
ridge regression analysis. The right-hand side in the last display can thus be
viewed as the normalized (scaled by $n$) fixed-X degrees of freedom of the
lasso, with regularization parameter $\mu$, when it is fit on a population model
with orthogonal features and responses drawn according to the original linear
model, but with noise variance $\tau^2$.

As with ridge regression, if the aspect ratio diminishes: $p/n \to 0$ (i.e.,
$\gamma = 0$) then we have $\mu = \lambda$ and thus the induced regularization
level $\mu$ matches the original one $\lambda$ in the low-dimensional regime. In
general, however, we have $\mu > \lambda$ when $\gamma \in (0, \infty)$, which
mirrors the inflation of the effective regularization level in ridge regression
in the high-dimensional regime.

Just as with ridge regression, the fixed-X degrees of freedom of the lasso is
asymptotically \eqref{eq:lasso-df-F-asympequi} the inflation ratio in effective
regularization, $(\mu - \lambda) / \mu = 1 - \lambda / \mu$. However, the
following is a notable difference between the ridge and lasso fixed point
equations. For ridge regression, we can solve for $\mu_n$ in
\eqref{eq:ridge-fixed-point-mu} based on knowledge of $\Sigma$ only. In
particular, this means that $\mu$ does not depend on the signal, through either
its linear $\beta$ or nonlinear \smash{$f\nl$} parts. For lasso, we must solve 
for $(\tau, \mu)$ jointly in \eqref{eq:lasso-fixed-point-tau},
\eqref{eq:lasso-fixed-point-mu}, which depends on the signal distribution $F$
(via the draw $B \sim F$). A consequence of this difference is that the fixed-X
degrees of freedom for ridge \eqref{eq:ridge-df-F-asympequi} does not actually
depend on the signal, whereas for lasso \eqref{eq:lasso-df-F-asympequi} it
does. 

The expressions for random-X degrees of freedom in
\eqref{eq:lasso-df-R-i-asympequi}, \eqref{eq:lasso-df-R-asympequi} also have
interesting interpretations, which we leave to \cref{subsec:convex-theory}, when
we cover regularized least squares estimators more generally. 

Unlike ridge regression (which is a linear smoother), it is not possible to
establish monotonicity of intrinsic random-X degrees of freedom as a function of
$\lambda$ (recall \cref{prop:ridge-monotonicity}) or nonnegativity of the
random-X degrees of freedom ``due to bias'' (recall \cref{prop:B+}) for the
lasso, via elementary arguments. However, the results in
\cref{thm:lasso-asymptotics} allow us to infer such properties asymptotically.  

\begin{proposition}
\label{prop:lasso-monotonicity-nonnegativity}
The following properties hold for the asymptotic equivalents from
\cref{thm:lasso-asymptotics}. 

\begin{enumerate}
\item The asymptotic equivalent for intrinsic random-X degrees of freedom 
  \smash{$\df\r\i(\hf^\lasso_\lambda) / n$} in \eqref{eq:lasso-df-R-i-asympequi} 
  is monotonically decreasing in $\lambda$, and converges to 0 as $\lambda \to
  \infty$.    

\item The asymptotic equivalent for emergent random-X degrees of freedom  
  \smash{$\df\r(\hf^\lasso_\lambda) / n$} in \eqref{eq:ridgeless-df-R-asympequi}
  is always larger or equal to that for intrinsic random-X degrees of freedom   
  in \eqref{eq:lasso-df-R-i-asympequi}. 
\end{enumerate}
\end{proposition}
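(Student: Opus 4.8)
The plan is to reduce both claims, via the monotonicity of the map $\omega$ established in \cref{subsec:mapping}, to statements about the \emph{arguments} that are fed into $\omega$ in \eqref{eq:lasso-df-R-i-asympequi} and \eqref{eq:lasso-df-R-asympequi}. Write $A_{\mathrm i}(\lambda) = (1 - \lambda^2/\mu_0^2)\,\tau_0^2/\sigma^2$ for the intrinsic argument, governed by the pure-noise ($B=0$) fixed point $(\tau_0,\mu_0)$, and $A_{\mathrm e}(\lambda) = (1-\lambda^2/\mu^2)\,\tau^2/\sigma^2$ for the emergent one, governed by the signal-on fixed point $(\tau,\mu)$. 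Since $\omega$ is increasing, it suffices to prove (i) $A_{\mathrm i}$ is decreasing in $\lambda$ with $A_{\mathrm i}\to 0$ as $\lambda\to\infty$, and (ii) $A_{\mathrm e}(\lambda)\ge A_{\mathrm i}(\lambda)$ for all $\lambda>0$; these give parts~1 and~2 of the proposition respectively.

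For part (i), the key simplification is that the $B=0$ system collapses to a single scalar. Using the scaling identity $\soft(\tau_0 H;\mu_0)=\tau_0\,\soft(H;\alpha)$ with $\alpha=\mu_0/\tau_0$, equations \eqref{eq:lasso-fixed-point-tau} and \eqref{eq:lasso-fixed-point-mu} with $B=0$ become $\tau_0^2/\sigma^2 = 1/(1-\phi(\alpha))$ and $\lambda/\mu_0 = 1-\psi(\alpha)$, where $\phi(\alpha)=\gamma\,\EE[\soft(H;\alpha)^2]$ and $\psi(\alpha)=\gamma\,\EE[\soft'(H;\alpha)]$ are both strictly decreasing in $\alpha$, with $\psi\in(0,1)$ whenever $\lambda>0$ (since $\mu_0>\lambda$). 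Substituting gives $A_{\mathrm i}=\psi(2-\psi)/(1-\phi)$, a function of $\alpha$ \emph{alone}. Because $\psi\mapsto\psi(2-\psi)$ is increasing on $(0,1)$ while $\psi,\phi$ both decrease in $\alpha$, the numerator decreases and the denominator increases, so $A_{\mathrm i}$ is strictly decreasing in $\alpha$. It then remains to show $\alpha$ is increasing in $\lambda$: from $\lambda = \alpha\sigma(1-\psi(\alpha))/\sqrt{1-\phi(\alpha)}$ I would verify $d\lambda/d\alpha>0$ using the Gaussian soft-threshold (Stein-type) identities $\phi'(\alpha)=-2\gamma\,\EE[(|H|-\alpha)_+]$ and $\psi'(\alpha)=-2\gamma\,\varphi(\alpha)$, with $\varphi$ the standard normal density. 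The limit is then immediate: $\lambda\to\infty$ forces $\alpha\to\infty$, so $\phi,\psi\to 0$, $A_{\mathrm i}\to 0$, and $\df\r\i(\hf^\lasso_\lambda)/n\asympequi\omega(A_{\mathrm i})\to 0$.

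For part (ii), the statement is precisely the asymptotic nonnegativity of excess bias for the lasso (recall from \cref{subsec:rosset-tibshirani-2020} that $\df\r(\hf)\ge\df\r\i(\hf)\iff B^+(\hf)\ge 0$ for linear smoothers; here we want the asymptotic analog for a nonlinear smoother). I would establish $A_{\mathrm e}(\lambda)\ge A_{\mathrm i}(\lambda)$ by comparing the signal-on system to the $B=0$ system through a monotone fixed-point argument. The one-dimensional soft-thresholding risk $\EE[(\soft(B+\tau H;\mu)-B)^2]$ dominates its pure-noise counterpart $\EE[\soft(\tau H;\mu)^2]$ at matched $(\tau,\mu)$, since thresholding signal-plus-noise incurs additional error; feeding this into the (noise-monotone) state-evolution map \eqref{eq:lasso-fixed-point-tau} and iterating yields $\tau\ge\tau_0$, and the parallel comparison of \eqref{eq:lasso-fixed-point-mu} — interpretable as the signal enlarging the fixed-X degrees of freedom $1-\lambda/\mu$, hence $\mu\ge\mu_0$ — controls the regularization coordinate. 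Since $1-\lambda^2/\mu^2$ is increasing in $\mu$ on $(\lambda,\infty)$ and $\mu\ge\mu_0>\lambda$, the two inequalities combine to give $A_{\mathrm e}(\lambda)\ge A_{\mathrm i}(\lambda)$.

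The main obstacle is the monotonicity bookkeeping around the coupled fixed points. In part (i), the sign of $d\lambda/d\alpha$ is not visible by inspection: $\lambda(\alpha)$ is a product of an increasing factor $\alpha(1-\psi)$ and a decreasing factor $1/\sqrt{1-\phi}$, so the argument hinges on the precise Gaussian identities above, and must be handled with care in the overparameterized regime $\gamma>1$, where $\alpha$ is confined to $(\alpha_*,\infty)$ with $\gamma\,\EE[\soft(H;\alpha_*)^2]=1$. In part (ii), the difficulty is that turning on the signal perturbs $\tau$ and $\mu$ simultaneously; I expect the cleanest rigorous route to run the state-evolution recursion as a monotone iteration and track both coordinates jointly, rather than attempting to decouple them.
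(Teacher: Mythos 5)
Your overall architecture coincides with the paper's: reduce to the optimism scale via monotonicity of $\omega$, rescale the fixed-point system with $a = \mu/\tau$, and, for part 2, compare the signal-on and pure-noise systems through a monotone fixed-point iteration. For part 2 your sketch is essentially the paper's proof: the paper runs exactly the alternating iteration you describe, initialized at $(\tau_0, a_0)$, makes the two monotonicity inputs rigorous via Lemmas 6 and 12 of \citet{weng2018overcoming} (the normalized state-evolution map is decreasing in $\tau$, and $x \mapsto \EE[(\soft(H + xB;\, a) - xB)^2]$ is even and increasing in $x$, so turning on $B$ raises its value), and concludes $\tau \ge \tau_0$ and $a \le a_0$; combined with the shift inequality $\EE[\soft'(H + b;\, a)] = \PP(|H+b| > a) \ge \PP(|H| > a)$, this yields the comparison of fixed-X degrees of freedom, hence $\mu \ge \mu_0$ and the claimed inequality between the arguments of $\omega$. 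You have the right plan here; what is missing is only the execution (the two monotonicity lemmas and the termination/convergence argument for the iteration).

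The genuine gap is in part 1, and you have located it yourself: after the (correct) collapse to $A_{\mathrm i} = \psi(2-\psi)/(1-\phi)$ as a decreasing function of $\alpha = \mu_0/\tau_0$, everything hinges on $\alpha$ being monotone in $\lambda$ with $\alpha \to \infty$ as $\lambda \to \infty$, and your proposal stops at ``I would verify $d\lambda/d\alpha > 0$.'' This is not a routine verification: as you concede, $\lambda(\alpha) = \sigma\,\alpha\,(1-\psi(\alpha))\,(1-\phi(\alpha))^{-1/2}$ is a product of increasing and decreasing factors, your Stein-type identities alone do not visibly settle the sign, and the constraint region when $\gamma > 1$ adds a further complication. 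The paper does not attempt this computation at all; it invokes Corollary 1.7 of \citet{bayati2011lasso}, which supplies precisely the calibration facts you need ($a_0$ nondecreasing in $\lambda$, and $a_0 \to \infty$ as $\lambda \to \infty$), and then checks monotonicity in $a_0$ of the two factors $\tau_0^2 = \sigma^2/(1 - \gamma\EE[\soft(H;\, a_0)^2])$ and $1 - \lambda/\mu_0 = \gamma\PP(|H| > a_0)$ separately---slightly simpler bookkeeping than your combined ratio, though your reduction is equally valid once the calibration monotonicity is in hand. Until you either carry out the derivative computation in full or cite the Bayati--Montanari calibration result, part 1 is incomplete at exactly its crux.
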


The proof of \cref{prop:lasso-monotonicity-nonnegativity} is in
\cref{app:lasso-monotonicity-nonnegativity}, and numerical illustrations are in
\cref{app:lasso-illustration}.

\subsection{Lassoless regression}
\label{subsec:lassoless-theory}

We now study a special case of lasso regression when $\lambda \to 0^+$, which we 
term ``lassoless'' regression. The simplest way to define this estimator is to
assume that the lasso solution is unique for each $\lambda$ (which recall, is
implied almost surely under \cref{asm:cgmt}\ref{asm:cgmt-features}), and define
the lassoless solution as 
\[
\hbeta^\lasso_0 = \lim_{\lambda \to 0^+} \hbeta^\lasso_\lambda,
\]
and correspondingly define the predictor \smash{$\hf^\lasso_0(x) = x^\top
  \hbeta^\lasso_0$}. When $p \leq n$ (and $\rank(X) = p$), this is no different
from the ordinary least squares estimator. Meanwhile, when $p > n$ (and
$\rank(X) = n$), the lassoless estimator as defined above is an interpolator
with minimum $\ell_1$ norm: 
\[
\hbeta^\ridge_0 \in \argmin_{b \in \RR^p} \, \{ \| b \|_1 : y = X b \}. 
\]
We note that even when the lasso solution is not unique, it is still possible to
construct a sequence of lasso solutions converging to a minimum $\ell_1$ norm
interpolator; see \citet{tibshirani2013lasso} for details. 

We will derive the asymptotics of random-X degrees of freedom for the lassoless 
predictor, in the proportional asymptotics model from the previous
subsection. In preparation for this, for $\gamma > 1$, let $(\tau, \mu) \in \RR^2$
be the unique solution to the nonlinear system:   
\begin{align}
\label{eq:lassoless-fixed-point-tau}
\tau^2 &= \sigma^2 + \gamma \EE [(\soft(B + \tau H; \mu) - B)^2], \\  
\label{eq:lassoless-fixed-point-mu}
1 &= \gamma \EE [\soft'(B + \tau H; \mu)],
\end{align}
where again $B \sim F$ and $H \sim \cN(0, 1)$ are independent. This system
is studied in \citet{li_wei_2021} (we modify its presentation to suit our
purposes), and is the limit of \eqref{eq:lasso-fixed-point-tau},
\eqref{eq:lasso-fixed-point-mu} as $\lambda \to 0^+$. Similar to our earlier 
convention, let $(\tau_0, \mu_0)$ denote the solution to the above system when
$B = 0$.  

\begin{theorem}
\label{thm:lassoless-asymptotics}
For the lassoless predictor \smash{$\hf^\lasso_0$}, under the same conditions
as \cref{thm:lasso-asymptotics}, we have the following asymptotic equivalences: 
\begin{align}
\label{eq:lassoless-df-F-asympequi}
\df\f(\hf^\lasso_0) / n &\asympequi 
\begin{cases}
\gamma & \text{for $\gamma \leq 1$} \\
1 & \text{for $\gamma > 1$},
\end{cases} \\
\label{eq:lassoless-df-R-i-asympequi}
\df\r\i (\hf^\lasso_0) &\asympequi
\begin{cases}
 \gamma & \text{for $\gamma \leq 1$} \\
\omega(\tau_0^2 / \sigma^2) & \text{for $\gamma > 1$},
\end{cases} \\
\label{eq:lassoless-df-R-asympequi}
\df\r(\hf^\lasso_0) &\asympequi
\begin{cases}
 \gamma & \text{for $\gamma \leq 1$} \\
\omega(\tau^2 / \sigma^2) & \text{for $\gamma > 1$}.
\end{cases}
\end{align}
\end{theorem}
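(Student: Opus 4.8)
The plan is to treat the regimes $\gamma < 1$ and $\gamma > 1$ separately, since the lassoless predictor is ordinary least squares in the former and the minimum-$\ell_1$-norm interpolator in the latter. In both regimes the fixed-X claim \eqref{eq:lassoless-df-F-asympequi} is essentially exact and needs no asymptotics: for $\gamma < 1$ the estimator is least squares on a full-rank $X$, so \smash{$\df\f(\hf^\lasso_0) = p$} by \eqref{eq:df-F-ls}, while for $\gamma > 1$ the estimator interpolates, so \smash{$\df\f(\hf^\lasso_0) = n$} by \eqref{eq:df-F-interpolator}; dividing by $n$ and using $\gamma_n \to \gamma$ gives both branches. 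The substance therefore lies in the random-X degrees of freedom, and the key simplification I would exploit is that both branches make \smash{$\opt\r$} computable from the prediction error alone. For $\gamma > 1$ the interpolation property forces the training error to vanish, so \smash{$\opt\r(\hf^\lasso_0) = \err\r(\hf^\lasso_0)$} and likewise \smash{$\opt\r\i(\hf^\lasso_0) = \err\r(\hf^\lasso_0(\cdot; X, v))$}; for $\gamma < 1$ I would instead route everything through the reference calculation.

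\textbf{Underparameterized branch.} For $\gamma < 1$, the intrinsic degrees of freedom fits least squares to pure noise, and since \eqref{eq:opt-R-ls} is independent of the feature covariance, least squares on the Gaussian design of \cref{asm:cgmt}\ref{asm:cgmt-features} realizes exactly the reference optimism at $d = p$; hence \smash{$\df\r\i(\hf^\lasso_0) = p$} and \smash{$\df\r\i(\hf^\lasso_0)/n = \gamma_n \asympequi \gamma$}. For the emergent version, \cref{asm:cgmt}\ref{asm:cgmt-response} makes the model well specified and least squares unbiased, so the excess bias vanishes, \smash{$B^+(\hf^\lasso_0) = 0$}; invoking the linear-smoother identity \eqref{eq:opt-R-smoother-emergent-intrinsic}, emergent optimism coincides with intrinsic optimism and the two degrees of freedom agree, giving \smash{$\df\r(\hf^\lasso_0)/n \asympequi \gamma$}.

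\textbf{Overparameterized branch.} For $\gamma > 1$, I would import the asymptotic risk characterization of the minimum-$\ell_1$-norm interpolator from \citet{li_wei_2021}, whose fixed-point system is exactly \eqref{eq:lassoless-fixed-point-tau}, \eqref{eq:lassoless-fixed-point-mu}. Under the isotropic design of \cref{asm:cgmt}\ref{asm:cgmt-features}, the random-X error conditional on the training data equals \smash{$\sigma^2 + \frac1n \| \hbeta^\lasso_0 - \beta \|_2^2$}, and AMP state evolution gives \smash{$\frac1n \| \hbeta^\lasso_0 - \beta \|_2^2 \asympequi \gamma\, \EE[(\soft(B + \tau H; \mu) - B)^2] = \tau^2 - \sigma^2$} by \eqref{eq:lassoless-fixed-point-tau}, so \smash{$\opt\r(\hf^\lasso_0)/\sigma^2 = \err\r(\hf^\lasso_0)/\sigma^2 \asympequi \tau^2/\sigma^2$}. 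Specializing the signal to a point mass at zero (training on pure noise, $B = 0$) replaces $(\tau, \mu)$ by $(\tau_0, \mu_0)$ and yields \smash{$\opt\r\i(\hf^\lasso_0)/\sigma^2 \asympequi \tau_0^2/\sigma^2$}. Applying $\omega_n$, together with $\omega_n(x)/n \to \omega(x)$ from \eqref{eq:omega_approx} and the continuity of $\omega$, converts these normalized optimisms into \eqref{eq:lassoless-df-R-i-asympequi}, \eqref{eq:lassoless-df-R-asympequi}. As a consistency check, the finite-$\lambda$ formulas of \cref{thm:lasso-asymptotics} reduce to precisely these as $\lambda \to 0^+$, since $\lambda/\mu \to 0$ there.

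\textbf{Main obstacle.} I expect the crux to be making the $\lambda = 0$ limit rigorous. The clean statements above rely on the interpolator risk being governed by the \emph{limiting} system \eqref{eq:lassoless-fixed-point-tau}, \eqref{eq:lassoless-fixed-point-mu}, which forces two things to justify: that the lasso fixed-point solution $(\tau, \mu)$ converges to a finite limit as $\lambda \downarrow 0$ (here $\gamma > 1$ is essential, since \eqref{eq:lassoless-fixed-point-mu} has no solution with $\mu > 0$ when $\gamma < 1$), and that one may interchange the $\lambda \downarrow 0$ and $n \to \infty$ limits. I would handle this either by invoking \citet{li_wei_2021} directly for the interpolator, thereby analyzing $\lambda = 0$ in one shot rather than as a double limit, or, if routing through \cref{thm:lasso-asymptotics} and \cref{thm:convex-asymptotics}, by establishing a concentration bound for \smash{$\err\r(\hf^\lasso_\lambda)$} uniform in $\lambda$ near $0$. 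A secondary point to verify is that the AMP framework and its normalization (features $\cN(0,1/n)$) indeed cover the degenerate signal $F = \delta_0$ used for the intrinsic version.
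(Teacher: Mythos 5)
Your proof is correct, and it reaches the same destination by a somewhat more self-contained route than the paper. The paper's proof is a two-line reduction: for $\gamma \le 1$ it observes that the lassoless predictor coincides with least squares and cites \cref{thm:ridgeless-asymptotics}, and for $\gamma > 1$ it obtains the result by sending $\lambda \to 0^+$ in \cref{thm:lasso-asymptotics}, citing \citet{li_wei_2021} for exactly the two issues you flag as the crux---existence and uniqueness of the solution to the limiting system \eqref{eq:lassoless-fixed-point-tau}, \eqref{eq:lassoless-fixed-point-mu}, and the validity of the $\lambda \downarrow 0$ limit. Your underparameterized branch is in fact sharper than the paper's: since the design in \cref{asm:cgmt} is itself Gaussian, the reference optimism \eqref{eq:opt-R-ls} is covariance-free and $\beta$-free, so (for $p \le n-1$) the matching equation is solved \emph{exactly} at $d = p$ in finite samples, giving \smash{$\df\r\i(\hf^\lasso_0) = p$} with no asymptotics, and \smash{$B^+(\hf^\lasso_0) = 0$} under the well-specified model handles the emergent case via \eqref{eq:opt-R-smoother-emergent-intrinsic}; the paper instead inherits these conclusions asymptotically from the ridgeless theorem (whose emergent formula reduces to $\gamma_n$ once \smash{$\sigma^2\nl = 0$}). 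In the overparameterized branch you invoke \citet{li_wei_2021} directly at $\lambda = 0$: interpolation kills the training error, so optimism equals the out-of-sample risk \smash{$\sigma^2 + \|\hbeta^\lasso_0 - \beta\|_2^2/n \asympequi \tau^2$} by \eqref{eq:lassoless-fixed-point-tau}, with $B = 0$ giving $\tau_0^2$ for the intrinsic version---a one-shot analysis that sidesteps the double limit, whereas the paper takes the limiting route; both are legitimate, as \citet{li_wei_2021} supports either reading. Your consistency check ($\mu$ stays bounded away from $0$ when $\gamma > 1$, so $\lambda/\mu \to 0$ and $1 - \lambda^2/\mu^2 \to 1$) is precisely the mechanism by which the paper's limit of \eqref{eq:lasso-df-R-i-asympequi}, \eqref{eq:lasso-df-R-asympequi} produces \eqref{eq:lassoless-df-R-i-asympequi}, \eqref{eq:lassoless-df-R-asympequi}, and your observation that \eqref{eq:lassoless-fixed-point-mu} is insolvable for $\gamma < 1$ correctly explains why the two regimes must be split.
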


The proof of \cref{thm:lassoless-asymptotics} is given in
\cref{app:lassoless-asymptotics}, and numerical verification of the results can 
be found in \cref{app:lassoless-illustration}. 

As before, in the ridgeless setting, we can leverage the asymptotics above to
develop monotonicity properties for intrinsic and emergent random-X degrees for
freedom in lassoless regression. 

\begin{proposition}
\label{prop:lassoless-monotonicity}
The following properties hold for the asymptotic limits from
\cref{thm:lassoless-asymptotics}. 

\begin{enumerate}
\item The asymptotic limit of intrinsic random-X degrees of freedom
  \smash{$\df\r\i(\hf^\lasso_0) / n$} in \eqref{eq:lassoless-df-R-i-asympequi} 
  is increas- ing in $\gamma$ on $(0,1)$, maximized at $\gamma = 1$, and 
  decreasing in $\gamma$ on $(1,\infty)$.

\item The asymptotic limit of emergent random-X degrees of freedom  
  \smash{$\df\r(\hf^\lasso_0) / n$} in \eqref{eq:lassoless-df-R-asympequi} is 
  increas- ing in $\gamma$ on $(0,1)$, and maximized at $\gamma = 1$.
\end{enumerate}
\end{proposition}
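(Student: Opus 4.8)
The plan is to work directly with the explicit asymptotic equivalents in \cref{thm:lassoless-asymptotics} and to exploit that $\omega$ is strictly increasing and bounded above by $1$ (recall \cref{subsec:mapping} and \eqref{eq:omega}). Both claims split at the interpolation threshold $\gamma=1$. On $(0,1)$ each normalized degrees of freedom equals $\gamma$, so it is trivially strictly increasing and tends to $1$ as $\gamma\to 1^-$; this settles the ``increasing on $(0,1)$'' parts of both statements. At $\gamma=1$ the common value is $1$, which is the supremum of $\omega$ since $\omega(x)<1$ for every finite $x$ and $\omega(x)\to 1$ as $x\to\infty$. Hence both ``maximized at $\gamma=1$'' assertions reduce to controlling the $\gamma>1$ branch, and the only genuinely new monotonicity to establish is that the intrinsic branch \emph{decreases} on $(1,\infty)$.

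Since $\omega$ is increasing, for $\gamma>1$ it suffices to analyze how the arguments $\tau_0^2/\sigma^2$ (intrinsic) and $\tau^2/\sigma^2$ (emergent) vary with $\gamma$. I would first collapse the $B=0$ system \eqref{eq:lassoless-fixed-point-tau}--\eqref{eq:lassoless-fixed-point-mu} to a scalar equation. Setting $\alpha_0=\mu_0/\tau_0$ and using positive homogeneity $\soft(\tau_0 H;\mu_0)=\tau_0\,\soft(H;\alpha_0)$ together with the Gaussian soft-threshold identities
\[
\EE[\soft'(H;\alpha_0)] = 2\Phi(-\alpha_0), \qquad
\EE[\soft(H;\alpha_0)^2] = 2\big[(1+\alpha_0^2)\Phi(-\alpha_0) - \alpha_0\phi(\alpha_0)\big],
\]
equation \eqref{eq:lassoless-fixed-point-mu} reduces to $2\gamma\,\Phi(-\alpha_0)=1$, which defines a strictly increasing bijection $\gamma\mapsto\alpha_0$ from $(1,\infty)$ onto $(0,\infty)$. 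Substituting into \eqref{eq:lassoless-fixed-point-tau} and simplifying yields the closed form $\tau_0^2/\sigma^2 = 1/\psi(\alpha_0)$, where $\psi(\alpha)=\alpha(\kappa(\alpha)-\alpha)$ and $\kappa(\alpha)=\phi(\alpha)/\Phi(-\alpha)$ is the inverse Mills ratio. Using the standard envelope $\alpha<\kappa(\alpha)<\alpha+1/\alpha$ one checks $\psi(\alpha)\in(0,1)$, with $\psi\to 0$ as $\alpha\to 0^+$ and $\psi\to 1$ as $\alpha\to\infty$; in particular $\tau_0^2/\sigma^2\to\infty$ as $\gamma\to 1^+$, so $\omega(\tau_0^2/\sigma^2)\to 1$ and the intrinsic branch is continuous at $\gamma=1$.

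Because $\alpha_0$ is increasing in $\gamma$, the intrinsic degrees of freedom $\omega(1/\psi(\alpha_0))$ is decreasing in $\gamma$ if and only if $\psi$ is strictly increasing on $(0,\infty)$. This is the crux and the step I expect to be the main obstacle. Differentiating and using the Mills-ratio ODE $\kappa'(\alpha)=\kappa(\alpha)(\kappa(\alpha)-\alpha)$ gives the compact expression
\[
\psi'(\alpha) = \kappa(\alpha)\big(1+\psi(\alpha)\big) - 2\alpha.
\]
Positivity is immediate near $\alpha=0$, where $\psi'(0)=\kappa(0)=\sqrt{2/\pi}>0$, but for large $\alpha$ the two terms nearly cancel: from $\kappa(\alpha)=\alpha+1/\alpha+O(\alpha^{-3})$ one finds $\psi'(\alpha)=1/\alpha+O(\alpha^{-3})$, a thin positive margin, so no crude bound will do. I would close the gap with sharp two-sided bounds on the inverse Mills ratio (e.g.\ the Gordon--Sampford inequalities), or equivalently by verifying the polynomial inequality $\alpha m^2+(1+\alpha^2)m-\alpha>0$ with $m=\kappa(\alpha)-\alpha$, i.e.\ by bounding $\kappa(\alpha)-\alpha$ from below by the larger root of this quadratic. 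This yields the strict decrease on $(1,\infty)$, and combined with the first paragraph gives the global maximum at $\gamma=1$, proving claim (1).

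For the emergent branch, claim (2), no delicate monotonicity is required. The system \eqref{eq:lassoless-fixed-point-tau}--\eqref{eq:lassoless-fixed-point-mu} with $B\sim F$ admits a finite solution $(\tau,\mu)$ for every $\gamma>1$ (existence and finiteness following \citet{li_wei_2021}), so $\tau^2<\infty$ and hence $\omega(\tau^2/\sigma^2)<1$ throughout $(1,\infty)$; together with the value $\gamma<1$ on $(0,1)$ and the value $1$ at $\gamma=1$, this shows $\gamma=1$ is the global maximizer, while the increase on $(0,1)$ is immediate. The overall structure mirrors the ridgeless argument behind \cref{prop:ridgeless-monotonicity}, with the effective-noise fixed point $\tau_0$ (resp.\ $\tau$) playing the role that $V_n/D_n+1$ plays there.
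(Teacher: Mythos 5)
Your reduction is essentially the paper's own: on $(0,1)$ both limits equal $\gamma$, hence increasing; the scale identities for soft-thresholding collapse \eqref{eq:lassoless-fixed-point-mu} to $2\gamma\Phi(-\alpha_0)=1$, so $\alpha_0=\Phi^{-1}\bigl(\tfrac{2\gamma-1}{2\gamma}\bigr)$ increases from $0$ to $\infty$ on $(1,\infty)$, exactly as in \cref{app:lassoless-monotonicity}; and your closed form $\tau_0^2/\sigma^2=1/\psi(\alpha_0)$ with $\psi(\alpha)=\alpha(\kappa(\alpha)-\alpha)$ is correct and is a rewriting of the paper's $\tau_0^2=\sigma^2/(1-\gamma\,\EE[(\soft(H;a_0))^2])$: since $\psi(\alpha)=1-\EE[\soft(H;\alpha)^2]/\EE[\soft'(H;\alpha)]$, your target ``$\psi$ strictly increasing'' is verbatim equivalent to \cref{lem:monotonicity-ratio-prox1}. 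Your handling of the emergent claim (finiteness of $\tau$ for $\gamma>1$ via \citet{li_wei_2021}, plus $\omega(x)<1$ for finite $x$) is sound, and in fact spells out a part the paper's written proof leaves implicit.

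The genuine gap is the crux you yourself flagged: you never prove $\psi'>0$, and the tools you name cannot close it. Your formula $\psi'(\alpha)=\kappa(\alpha)(1+\psi(\alpha))-2\alpha$ is correct, but the margin is thinner than you estimate: from $\kappa(\alpha)=\alpha+\alpha^{-1}-2\alpha^{-3}+10\alpha^{-5}+O(\alpha^{-7})$ one gets $\psi'(\alpha)=4\alpha^{-3}+O(\alpha^{-5})$, not $1/\alpha+O(\alpha^{-3})$; equivalently, $m=\kappa(\alpha)-\alpha$ exceeds the larger root $m_+$ of your quadratic $\alpha m^2+(1+\alpha^2)m-\alpha$ by only $O(\alpha^{-5})$. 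Consequently the bounds you cite fail: Sampford's lower bound $\kappa(\alpha)\ge(\alpha+\sqrt{\alpha^2+8/\pi})/2$ gives $m\gtrsim (2/\pi)\alpha^{-1}$, well below the required $m_+\approx \alpha^{-1}-2\alpha^{-3}$ for moderate $\alpha$, while Gordon's inequality bounds $\kappa$ from the wrong side. You would need a lower bound with matching second-order asymptotics, e.g.\ the Laplace continued-fraction convergent $\kappa(\alpha)\ge\alpha+\alpha(\alpha^2+7)/[(\alpha^2+1)(\alpha^2+8)]$, under which your quadratic evaluates to $\alpha(4\alpha^4+32\alpha^2-8)/[(\alpha^2+1)^2(\alpha^2+8)^2]>0$ for $\alpha^2\ge\sqrt{18}-4\approx 0.243$, the remaining small-$\alpha$ range being covered by $\kappa(\alpha)\ge\kappa(0)+\kappa'(0)\alpha$ (the Gaussian hazard is increasing and convex, $\kappa(0)=\sqrt{2/\pi}$, $\kappa'(0)=2/\pi$). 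As a cautionary note on how knife-edge this step is: the paper's own displayed algebra in \cref{lem:monotonicity-ratio-prox1} drops a term---with the lemma's notation, $f'=4c(y)$ and $g'=-2\varphi(y)$ give $f'g-fg'=4c(y)g(y)+2\varphi(y)f(y)$ rather than $4c(y)[h(y)+f(y)]$---and once the positive term $2\varphi f$ is restored, the sign question is precisely your inequality $\psi'\ge 0$ again (one can check numerically, e.g.\ at $y=1$, that $f'g-fg'\approx-0.033$ while $4cg\approx-0.106$). So your route is viable and is essentially the intended one, but as written it stops exactly at the one step carrying all the difficulty.
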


Note: recall that when $\gamma \leq 1$, the lassoless solution is just least 
squares, as is the ridgeless solution. Therefore the underparameterized
statements in the lassoless results above are duplicates of those in 
\cref{prop:ridgeless-monotonicity}. However, for ease of interpretation, we
leave the underparameterized cases in the presentation of 
\cref{prop:lassoless-monotonicity}. 

The proof of \cref{prop:lassoless-monotonicity} is given in
\cref{app:lassoless-monotonicity}, and numerical illustrations can be found in  
\cref{app:lassoless-illustration}. Our next and last subsection generalizes the
study of ridge and lasso estimators.

\subsection{Convex regularized least squares}
\label{subsec:convex-theory}

Given a proper closed convex function $\reg : \RR \to [0, \infty]$, consider
defining a regularized least squares estimator by  
\begin{equation}
\label{eq:convex-opt}
\hbeta^\convex_\lambda \in \argmin_{b \in \RR^p} \, \frac{1}{2} \sum_{i=1}^{n}
(y_i - x_i^\top b)^2 + \lambda \sum_{i=1}^p \reg(b_i). 
\end{equation}
for a tuning parameter $\lambda > 0$. The corresponding predictor is defined as
\smash{$\hf^\convex_\lambda = x^\top \hbeta^\convex_\lambda$}. Note that the
element notation above emphasizes the fact that the solution in
\eqref{eq:convex-opt} need not be unique; the theory below applies to any one of
its solutions. 

We will extend the degrees of freedom analysis in \cref{subsec:lasso-theory} to
the regularized estimator in \eqref{eq:convex-opt}. To introduce some relevant
notation, recall that the proximal operator $\reg$ is defined as 
\[
\prox_\reg(x; t) = \argmin_{z \in \RR} \, \frac{1}{2t} (x - z)^2 + \reg(z), 
\]
for a parameter $t > 0$. Still working under \cref{asm:cgmt} for our asymptotic
analysis, we will now describe the nonlinear system which generalizes
\eqref{eq:lasso-fixed-point-tau}, \eqref{eq:lasso-fixed-point-mu}: define
$(\tau, \mu) \in \RR^2$ to solve 
\begin{align}
\label{eq:cgmt-fixed-point-tau}
\tau^2 &= \sigma^2 + \gamma \EE [(\prox_\reg(B + \tau H; \mu) - B)^2], \\ 
\label{eq:cgmt-fixed-point-mu}
\mu &= \lambda + \gamma \mu \EE [\prox_\reg'(B + \tau H; \mu)],
\end{align}
where $B \sim F$ and $H \sim \cN(0, 1)$ are independent. This system is adapted
from  \citet{thrampoulidis_abbasi_hassibi_2018}, who show that its solution
determines the limiting behavior of the regularized estimator in
\eqref{eq:convex-opt}. (We modify the form of the system, with the full details 
given in the proof of our next result.) Moreover, we use $(\tau_0, \mu_0)$ to
denote the solution in \eqref{eq:lasso-fixed-point-tau},
\eqref{eq:lasso-fixed-point-mu} when we replace $F$ by a point mass at 0 (i.e.,
set $B = 0$). We are ready to state our asymptotic results.

\begin{theorem}
\label{thm:convex-asymptotics}
Consider the convex regularized predictor \smash{$\hf^\convex_\lambda$} with
tuning parameter $\lambda > 0$. Under \cref{asm:cgmt}, the following asymptotic 
equivalences hold, as $n,p \to \infty$ such that $p/n \to \gamma \in (0,
\infty)$:   
\begin{align}
\label{eq:convex-df-F-asympequi}
\df\f(\hf^\convex_\lambda) /n &\asympequi 1 - \lambda / \mu, \\ 
\label{eq:convex-df-R-i-asympequi}
\df\r\i(\hf^\convex_\lambda) / n &\asympequi \omega\big( (1 - \lambda^2 / \mu^2)  
\tau_0^2 / \sigma^2 \big), \\  
\label{eq:convex-df-R-asympequi}
\df\r(\hf^\convex_\lambda) / n &\asympequi \omega\big( (1 - \lambda^2 / \mu^2) 
\tau^2 / \sigma^2 \big). 
\end{align}    
Here we use $a_n \asympequi b_n$ to mean $|a_n - b_n| \to 0$ as $n \to \infty$
(in probability, if $a_n,b_n$ are random). 
\end{theorem}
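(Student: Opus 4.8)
The plan is to reduce each degrees-of-freedom quantity to an optimism and then read the optimism off from the CGMT characterization of $\hbeta^\convex_\lambda$. By the mapping in \cref{subsec:mapping}, the approximation $|\omega_n(x)/n - \omega(x)|\to 0$ from \eqref{eq:omega_approx}, and continuity of $\omega$, it suffices to establish three asymptotic equivalences: $\df\f(\hf^\convex_\lambda)/n \asympequi 1-\lambda/\mu$ directly; $\opt\r(\hf^\convex_\lambda)/\sigma^2 \asympequi (1-\lambda^2/\mu^2)\tau^2/\sigma^2$, which gives \eqref{eq:convex-df-R-asympequi} after applying $\omega$; and the analogous intrinsic statement, which gives \eqref{eq:convex-df-R-i-asympequi}. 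The workhorse is the CGMT analysis of \citet{thrampoulidis_abbasi_hassibi_2018}: under \cref{asm:cgmt} it scalarizes \eqref{eq:convex-opt} and, with $(\tau,\mu)$ solving \eqref{eq:cgmt-fixed-point-tau}--\eqref{eq:cgmt-fixed-point-mu}, produces the order parameters I would use throughout: the per-coordinate estimation error $\frac1p\|\hbeta^\convex_\lambda-\beta\|_2^2 \asympequi \EE[(\prox_\reg(B+\tau H;\mu)-B)^2]$, the normalized sensitivity $\frac1n\sum_{i=1}^n \partial \hf^\convex_\lambda(x_i)/\partial y_i \asympequi \gamma\,\EE[\prox_\reg'(B+\tau H;\mu)]$, and the training error $\frac1n\|y-X\hbeta^\convex_\lambda\|_2^2 \asympequi \tau^2(\lambda/\mu)^2$.

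Given these, the fixed-X and random-X errors fall out quickly. For fixed-X degrees of freedom I would first note $\Cov[y_i,\hf(x_i)\mid X]=\Cov[\eps_i,\hf(x_i)\mid X]$ in \eqref{eq:df-F} (since $x_i^\top\beta$ is fixed given $X$), and argue via the CGMT sensitivity estimate, together with a moment-universality (or Gaussian--Poincar\'e) argument to cover the non-Gaussian noise permitted by \cref{asm:cgmt}\ref{asm:cgmt-response}, that $\df\f/n$ concentrates on the divergence $\gamma\,\EE[\prox_\reg']$; the fixed-point relation \eqref{eq:cgmt-fixed-point-mu} rewrites this as $1-\lambda/\mu$, giving \eqref{eq:convex-df-F-asympequi}. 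For random-X error, because \cref{asm:cgmt}\ref{asm:cgmt-response} makes the model well specified and $\EE[x_0 x_0^\top]=\frac1n I$, I get $\err\r(\hf^\convex_\lambda)=\sigma^2+\frac1n\EE\|\hbeta^\convex_\lambda-\beta\|_2^2 \asympequi \sigma^2+\gamma\,\EE[(\prox_\reg(B+\tau H;\mu)-B)^2]=\tau^2$, where the last equality is \eqref{eq:cgmt-fixed-point-tau} and the factor $\gamma=p/n$ converts the per-coordinate error into the per-sample one.

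Assembling the pieces, $\opt\r(\hf^\convex_\lambda)=\err\r(\hf^\convex_\lambda)-\frac1n\EE\|y-X\hbeta^\convex_\lambda\|_2^2 \asympequi \tau^2-\tau^2(\lambda/\mu)^2=\tau^2(1-\lambda^2/\mu^2)$, which after normalization and $\omega$ yields \eqref{eq:convex-df-R-asympequi}. The intrinsic statement runs through the identical argument after refitting on pure noise $v\sim\cN(0,\sigma^2 I)$, i.e., with the signal zeroed out; the same CGMT analysis then substitutes the $B=0$ fixed point $(\tau_0,\mu_0)$ for $(\tau,\mu)$, giving intrinsic random-X error $\asympequi\tau_0^2$ and training error $\asympequi\tau_0^2(\lambda/\mu_0)^2$, hence $\opt\r\i\asympequi\tau_0^2(1-\lambda^2/\mu_0^2)$ and \eqref{eq:convex-df-R-i-asympequi}. (For a linear smoother such as ridge the effective regularization is unchanged by refitting on noise, so $\mu_0=\mu$ there; for nonlinear penalties it is the pure-noise value $\mu_0$ that naturally appears.)

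I expect the main obstacle to be the training-error order parameter $\tau^2(\lambda/\mu)^2$ and its coupling to the divergence, since both require extracting the optimal value and the first-order sensitivity of the CGMT auxiliary scalar problem and controlling joint concentration of the in-sample residuals; the secondary difficulty is upgrading the Gaussian-noise divergence identity to the non-Gaussian noise of \cref{asm:cgmt}\ref{asm:cgmt-response} by universality. A convenient cross-check, and an alternative derivation of the training error, is the generalized cross-validation identity $\err\r\asympequi \big(\frac1n\|y-X\hbeta^\convex_\lambda\|_2^2\big)/(1-\df\f/n)^2$, which with $1-\df\f/n\asympequi\lambda/\mu$ reproduces $\frac1n\|y-X\hbeta^\convex_\lambda\|_2^2\asympequi\tau^2(\lambda/\mu)^2$; the OLS limit ($\lambda\to0$, $\gamma<1$), where $\tau^2=\sigma^2/(1-\gamma)$ and $\df\f/n=\gamma$, gives a transparent consistency test of all three formulas.
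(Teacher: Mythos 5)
Your overall route is the same as the paper's: pass to the Thrampoulidis--Abbasi--Hassibi scaling, read off the random-X error $\err\r(\hf^\convex_\lambda) = \sigma^2 + \|\hbeta^\convex_\lambda - \beta\|_2^2/n \asympequi \tau^2$ from the CGMT order parameters and \eqref{eq:cgmt-fixed-point-tau}, couple the training error to the fixed-X degrees of freedom via the GCV-type relation $\err\t \asympequi (\lambda^2/\mu^2)\,\tau^2$, assemble $\opt\r \asympequi \tau^2(1-\lambda^2/\mu^2)$, apply $\omega$ with the $\omega_n$-to-$\omega$ approximation, and substitute the $B=0$ solution for the intrinsic case. (Your insistence on $\mu_0$ rather than $\mu$ in the intrinsic display in fact matches what the paper's proof derives, which substitutes $\mu_0$ throughout the $B=0$ case.) Your OLS sanity check and your remark that Stein's identity needs a universality upgrade for the non-Gaussian noise permitted by \cref{asm:cgmt} are both sound; the paper applies Stein's formula without further comment on the latter.

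There is, however, one concrete gap, and it sits exactly where you flagged your ``main obstacle'' without supplying a mechanism: you present the normalized sensitivity $\frac{1}{n}\sum_{i=1}^n \partial \hf^\convex_\lambda(x_i)/\partial y_i \asympequi \gamma\,\EE[\prox_\reg'(B+\tau H;\mu)]$ as an output of the CGMT analysis, but the master theorem of \citet{thrampoulidis_abbasi_hassibi_2018} only controls (pseudo-)Lipschitz functionals of the estimator --- this is what delivers the $\alpha^2$ and $\zeta^2$ limits --- and the divergence $\tr[(\partial/\partial y)\,X\hbeta^\convex_\lambda]$ is not such a functional, so it cannot be extracted from the auxiliary scalar problem by first-order sensitivity alone. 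The paper closes this by (i) invoking that $y \mapsto X\hbeta^\convex_\lambda$ is 1-Lipschitz with symmetric positive semidefinite Jacobian \citep{bellec2017bounds}, so Stein's formula identifies $\df\f$ with the expected divergence, and (ii) importing the trace concentration $\tr[I - (\partial/\partial \ty)\,\tX \hbeta^\convex_\lambda]/p \pto \nu$ from Corollary 3.2 of \citet{bellec2023out} (with \citet{celentano2023lasso} for the lasso and \citet{koriyama2024precise} in general), after which the reformulated fixed point gives $1 - \gamma\nu = 1 - \lambda/\mu$, i.e.\ \eqref{eq:convex-df-F-asympequi}. Note this same identification $\gamma\nu \asympequi \lambda/\mu$ is what licenses your training-error formula $\err\t \asympequi \tau^2\lambda^2/\mu^2$ in the paper's derivation (via $\gamma^2\zeta^2/(\gamma\alpha^2+\sigma^2) = 1/(1+\kappa)^2 = \gamma^2\nu^2$), so the Bellec-type second-order result is load-bearing for two of your three displays, not just the fixed-X one; without it, your plan proves the random-X error limits but not the optimism formulas in the stated form.
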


Note that \cref{thm:lasso-asymptotics} is a special case of
\cref{thm:convex-asymptotics} when the regularizer is the $\ell_1$ norm, $\reg = 
\|\cdot\|_1$, and the proximal operator is soft-thresholding,
\smash{$\prox_{\|\cdot\|_1} = \soft$}. The proof of
\cref{thm:convex-asymptotics} is given in \cref{app:convex-asymptotics}.     

We now give an interpretation of the asymptotic limits
\eqref{eq:convex-df-R-i-asympequi}, \eqref{eq:convex-df-R-asympequi} for 
random-X degrees of freedom by drawing an analogy to generalized
cross-validation (GCV). Initially designed for linear smoothers, GCV scales
the training error by a factor that involves the trace of the smoothing
matrix. This can be understood more broadly, beyond linear smoothers, as a
fixed-X degrees of freedom adjustment. This leads to the following
approximation, writing \smash{$\err\t(\hf^\convex_\lambda)$} for the training
error of \smash{$\hf^\convex_\lambda$}: 
\begin{equation}
\label{eq:err-R-gcv-approx}
\err\r(\hf^\convex_\lambda) \approx \frac{\err\t(\hf^\convex_\lambda)}{(1 - 
  \df\f(\hf^\convex_\lambda) / n)^2}.
\end{equation}
For ridge regression \citep{patil_wei_rinaldo_tibshirani_2021}, and regularized
least squares with convex penalties \citep{bellec2023out} more broadly, this
approximation is exact under proportional asymptotics. To connect this to the
results in \cref{thm:convex-asymptotics}, consider    
\begin{align*}
\opt\r(\hf^\convex_\lambda) 
&= \err\r(\hf^\convex_\lambda) - \err\t(\hf^\convex_\lambda) \\
&\approx \err\r(\hf^\convex_\lambda) - (1 - \df\f(\hf^\convex_\lambda) / n)^2
\cdot \err\r(\hf^\convex_\lambda) \\ 
&\approx \err\r(\hf^\convex_\lambda) - (\lambda^2 / \mu^2)
\cdot \err\r(\hf^\convex_\lambda), 
\end{align*}
where the second line uses \eqref{eq:err-R-gcv-approx}, and the third line uses
\eqref{eq:convex-df-F-asympequi}. Since the parameters $\tau_0$ and $\tau$ 
are the limiting random-X prediction errors in the intrinsic and emergent
settings, respectively, the last line provides a way to understand
\eqref{eq:convex-df-R-i-asympequi}, \eqref{eq:convex-df-R-asympequi} (after
applying $\omega$ to map optimism to degrees of freedom).

\section{Case studies: experiments}
\label{sec:experiments}

We continue our study of degrees of freedom, now via numerical
experiments. Python code to reproduce our experiments is available at: \url{https://github.com/jaydu1/model-complexity}; additional results for 
$k$-nearest-neighbor (\knn) and random features regression are given in
\cref{app:additional-experiments}.

\subsection{Lasso regression}
\label{subsec:lasso-experiments}

Returning to the lasso, whose degrees of freedom we studied asymptotically in
the previous section, we now empirically compare its random-X degrees of freedom
to the (average) number of nonzero coefficients in the lasso solution. The
latter is known to be its fixed-X degrees of freedom
\citep{zou_hastie_tibshirani_2007, tibshirani_taylor_2012}, in general.    

We simulate data according to a sparse linear model $y_i = x_i^\T \beta +
\eps_i$, $i \in [n]$. The entries of $x_i \in \RR^p$ and $\eps_i$ are all
i.i.d.\ standard normal. The first $s$ entries of $\beta$ are equal to $\alpha$,
while the remaining are equal to zero. We choose $\alpha$ so that the
signal-to-noise ratio (SNR) is 1, and compute all quantities by averaging
over 500 repetitions (500 times drawing the simulated data set; and in each
repetition, we compute prediction errors using an independent test set of 1000 
samples). 

\begin{figure}[tb]
\includegraphics[width=\textwidth]{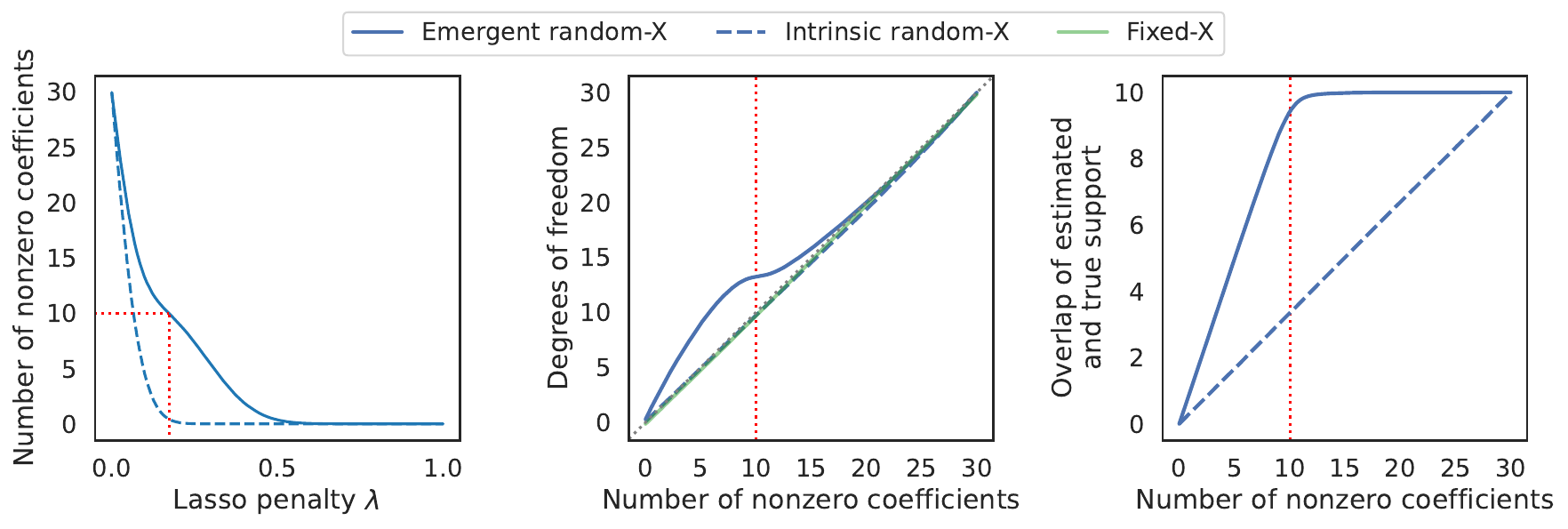}
\caption{Degrees of freedom of lasso predictors, parameterized by the average
  number of nonzero coefficients, in a problem setting with $n=200$, $p=30$, and
  sparsity level $s=10$.} 
\label{fig:lasso-underparam}

\bigskip
\centering
\includegraphics[width=\textwidth]{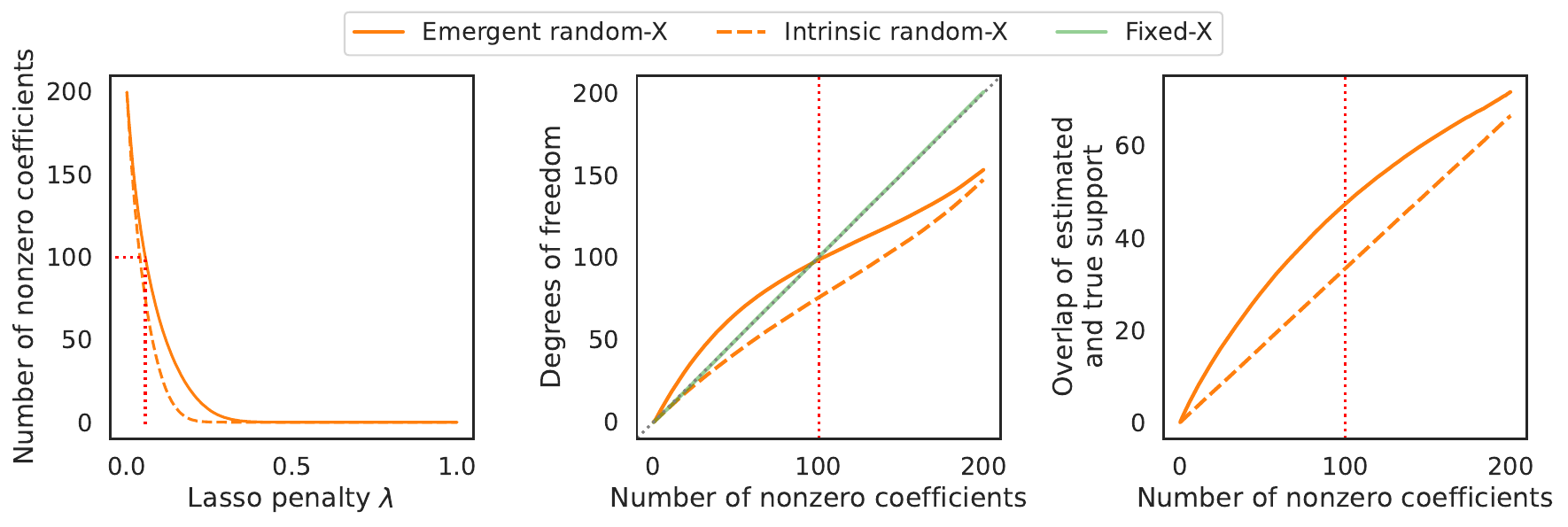} 
\caption{Degrees of freedom of lasso predictors, parameterized by the average
  number of nonzero coefficients, in a problem setting with $n=200$, $p=300$,
  and sparsity level $s=100$.}  
\label{fig:lasso-overparam}
\end{figure}

\cref{fig:lasso-underparam,fig:lasso-overparam} show the results 
for underparameterized and overparameterized cases, respectively. The
underparameterized case uses $n=200$, $p=30$, and $s=10$, while the
overparameterized case uses $n=200$, $p=300$, and $s=100$. As expected by the
theory, the fixed-X degrees of freedom of the lasso (middle panel) equals the
average number of nonzero coefficients in its solution, over the full path of
$\lambda$ values. Interestingly, in the underparameterized case, this also
appears to be true of the intrinsic random-X degrees of freedom: it also
coincides with the average number of nonzero lasso coefficients. 

By comparison, in this same setting, the emergent random-X degrees of freedom is
initially quite a bit larger, and then eventually settles back down to coincide
with the number of nonzero estimated coefficients, at higher levels of estimated
sparsity. As shown in the right panel, once it has estimated a little more than
$s = 10$ nonzero coefficients, it soon achieves near-perfect support recovery in
our simulations. Though this underparameterized problem setup with large $n$,
small $p$, and uncorrelated features is somewhat idealistic (the lasso is able
to identify a near-perfect support), it is nonetheless interesting to observe
the behavior the degrees of freedom ``due to bias'', emergent minus intrinsic
degrees of freedom, here: it is initially quite large, for lower levels of
estimated sparsity, and then it vanishes, at higher levels of estimated
sparsity. 

In the overparameterized case, with $p=300$ features, the lasso is only able to
recover about half of the true support once it has estimated $s=100$ nonzero
coefficients, as the right panel of \cref{fig:lasso-overparam} shows. From the
middle panel of the figure, we see that the intrinsic random-X degrees of
freedom grows increasingly smaller than the fixed-X degrees of freedom, as
number of nonzero coefficients increases. Meanwhile, the emergent random-X
degrees of freedom exceeds fixed-X degrees of freedom up until the point at
which the lasso exhibits roughly 100 nonzero coefficients, when it drops
below fixed-X degrees of freedom. Lastly, the degrees of freedom ``due to
bias'', the different in emergent and intrinsic degrees of freedom, is fairly
large throughout.

\subsection{Random forests}
\label{subsec:rf-experiments}

Next, we study random forests. Following the experimental setup in
\citet{belkin_hsu_ma_mandal_2019}, we use a single tree (rather than an average
of trees over subsamples of training data) up until the point of interpolation, 
increasing the maximum number of leaves \smash{$N_{\text{leaf}}^{\max}$} allowed
in the tree until we reach zero training error. After interpolation, we keep
\smash{$N_{\text{leaf}}^{\max}$} fixed and increase the number of trees
\smash{$N_{\text{tree}}$}.     

We draw data according to a linear model $y_i = x_i^\T \beta + \eps_i$, $i
\in [n]$, where each \smash{$x_i \sim \cN(0, \Sigma_{\textsc{ar1},
    \rho=0.25})$}, $\eps_i \sim \cN(0, 0.5^2)$, and $\beta$ is drawn uniformly
from the unit sphere in $\RR^d$. Here the covariance matrix
\smash{$\Sigma_{\textsc{ar1}, \rho}$} has entries $\rho^{|i-j|}$. The SNR in
this setup is 4. As before, we average all quantities over 500 repetitions (and 
in each one, compute prediction errors over an independent test set of size
1000). 

\cref{fig:random-forests} shows the results for $n=2000$ and $p=50$. As we can 
see in the left panel, the random-X prediction error initially decreases, then
it increases again as the number of leaves approaches the interpolation
threshold. After this point, it decreases as we increase the total number of
leaves by including more trees. As expected, the fixed-X degrees of freedom
increases before the interpolation threshold, while remaining constant beyond
the point, as shown in the middle panel. On the other hand, both the emergent
and intrinsic random-X degrees of freedom decrease after this threshold, and
generally remain much smaller than the trivial saturation value (of $n$ degrees
of freedom). The degrees of freedom ``due to bias'', emergent minus intrinsic,
is also consistently large throughout. 

\begin{figure}[p]
\includegraphics[width=\textwidth]{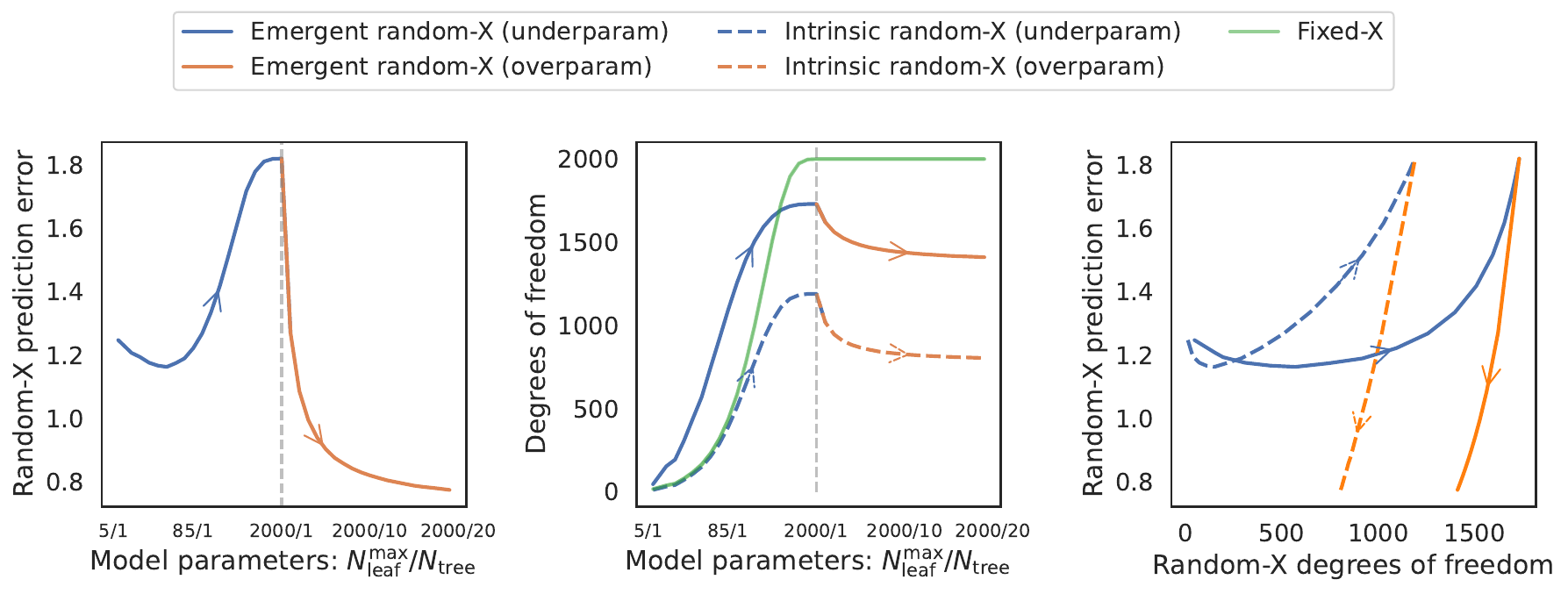}
\caption{Prediction error and degrees of freedom of random forest predictors, as
  we vary the number of trees \smash{$N_{\text{tree}}$} and the maximum number
  of leaves for each tree \smash{$N_{\text{leaf}}^{\max}$}, in a problem with
  $n=2000$, $p=50$.}  
\label{fig:random-forests}

\bigskip
\begin{center}
\includegraphics[width=0.9\textwidth]{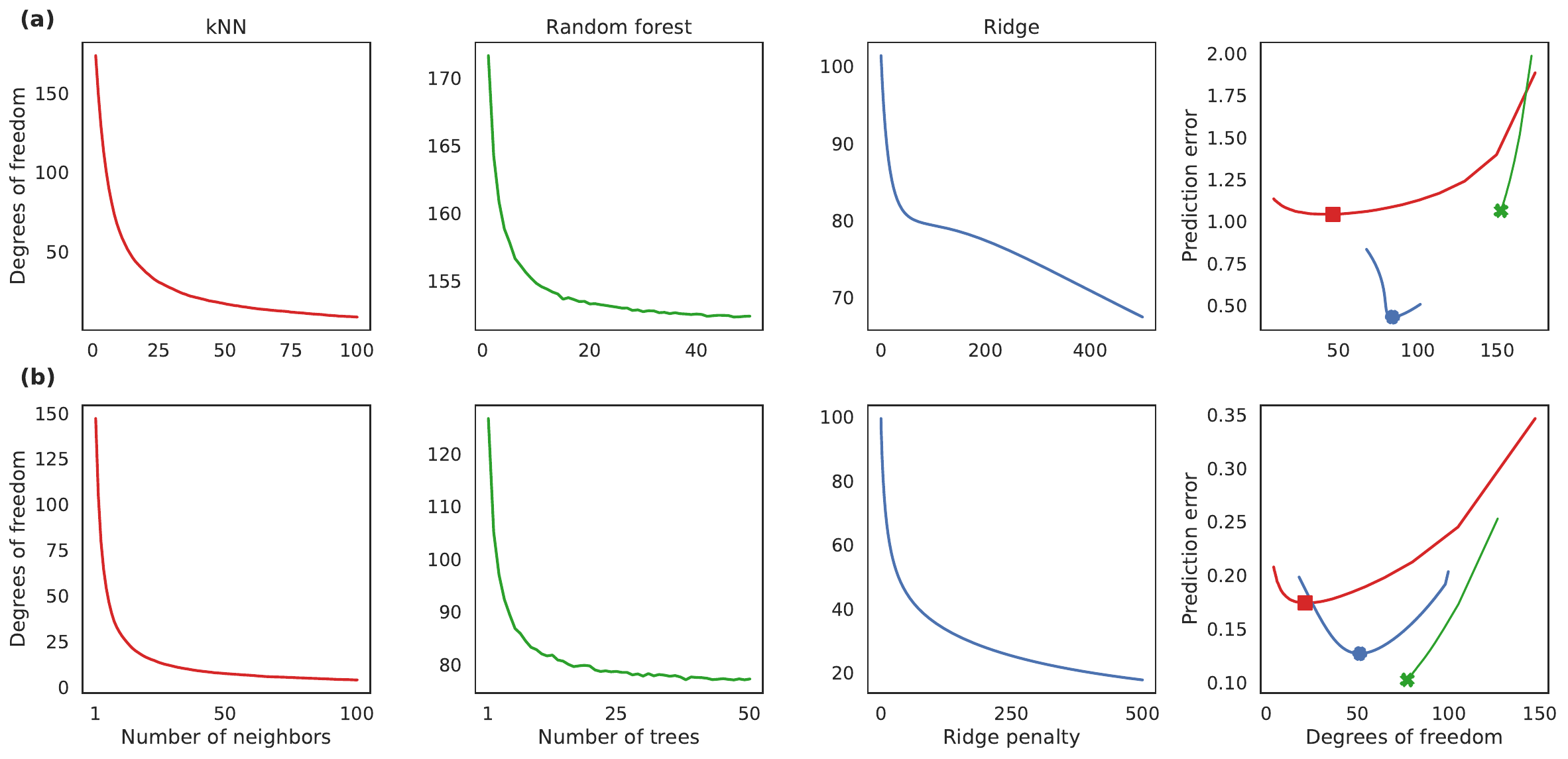}
\caption{Prediction error and degrees of freedom for ridge regression, \knn, and
  random forests. In both rows, $n=200$ and $p=100$. The top row displays data 
  drawn from a linear model, which favors ridge. The bottom displays data drawn 
  from a model that favors random forests.}   
\label{fig:comparisons-single-n}
\end{center}

\bigskip\hspace{-25pt}
\begin{minipage}[c]{0.65\textwidth}
\includegraphics[width=\textwidth]{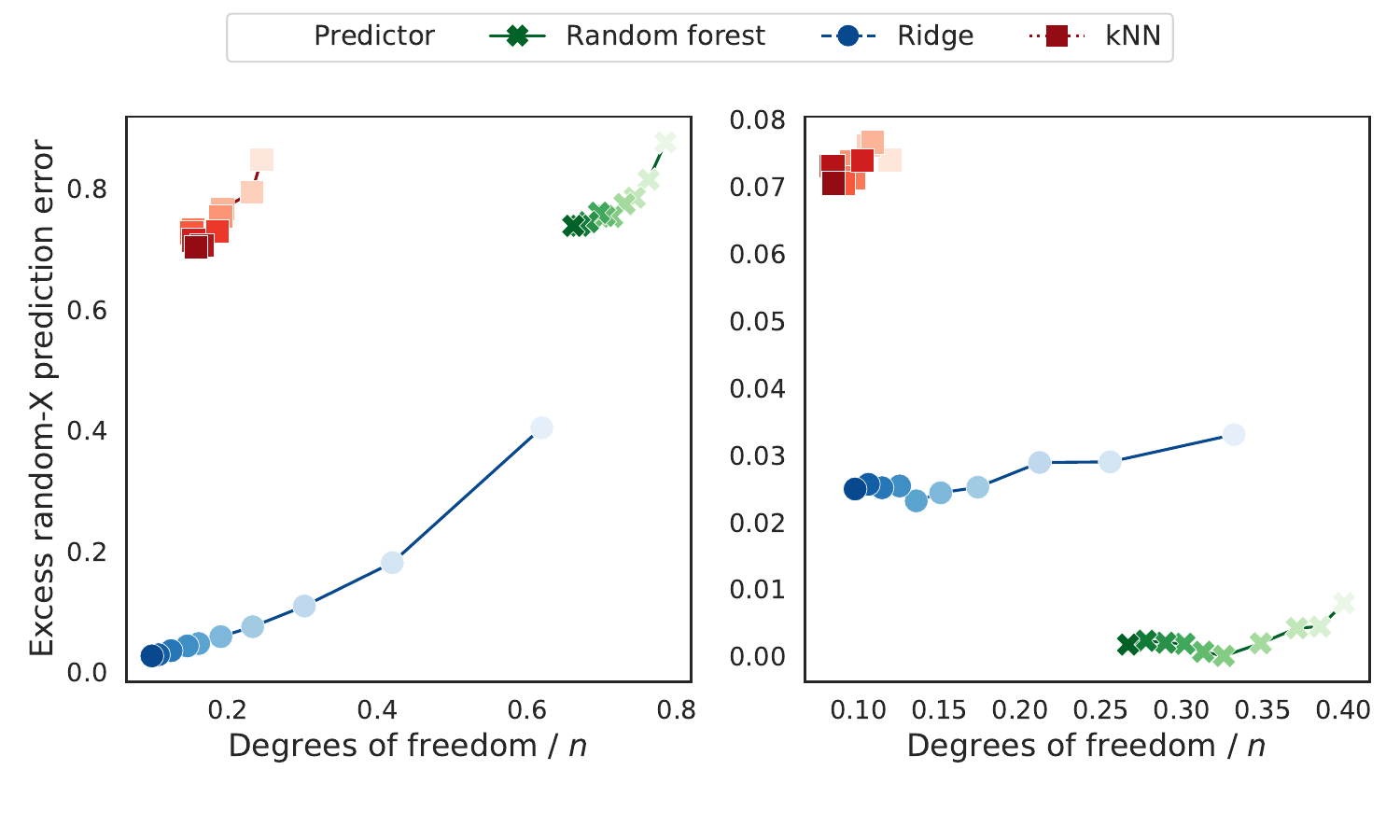}
\end{minipage}
\begin{minipage}[c]{0.4\textwidth}
\caption{Comparisons of excess prediction error and degrees of freedom for
  optimally-tuned ridge regression, \knn, and random forests, as the sample size
  $n$ varies. Lighter, more transparent colors indicate smaller $n$, whereas
  darker, more opaque colors indicate larger $n$. The left panel uses data
  simulated as in the top row of \cref{fig:comparisons-single-n}, and the right
  panel uses data simulated as in the bottom row of
  \cref{fig:comparisons-single-n}.}     
\label{fig:comparisons-varying-n}
\end{minipage}
\end{figure}

\subsection{Degrees of freedom comparisons}

So far we have mostly examined the behavior of degrees of freedom for
individual classes of models, and have drawn comparisons between members of one
class. Now, we shift our focus to comparing \emph{across} model classes.
\cref{fig:comparisons-single-n} studies such comparisons across ridge 
regression, \knn, and random forest predictors. The top row shows results for
data simulated from a linear model with $n=200$ and $p=100$, a setting that
favors ridge regression; the bottom row shows the results for data simulated
with the same $n,p$, but in a way that favors random forests. (This uses the
\texttt{make\_classification} function from \texttt{scikit-learn} v1.2.2; see
\citet{scikit-learn}). In the top row, we can see that optimally-tuned ridge
regression achieves the best random-X prediction error (rightmost panel), but
interestingly, does so at a much larger emergent random-X degrees of freedom
than optimally-tuned \knn. In the bottom row, the optimally-tuned random forest
achieves the best prediction error, and does so at a much larger degrees of
freedom than either ridge regression or \knn.

As a follow-up on the comparisons just discussed, one may naturally ask: can a 
model achieve both the best prediction error and emergent degrees of freedom,
simultaneously? In a sense, this would put the model on the ``Pareto frontier''
traced out by predictive accuracy versus complexity. Both ridge regression and
random forests achieve the best predictive accuracy (in top and bottom rows,
respectively) but fail to do so at the lowest complexity, in
\cref{fig:comparisons-single-n}. Yet, this is only a snapshot of their
performance at a given sample size. In \cref{fig:comparisons-varying-n}, we
examine the optimally-tuned ridge, \knn, and random forest predictors as we vary
the sample size $n$ from 100 to 1000. A each $n$, we measure the excess random-X
prediction error (the random-X prediction error minus the Bayes error) and the
normalized emergent random-X degrees of freedom (scaled by $n$) of each
optimally-tuned predictor. For the linear model simulation (corresponding to the 
left panel of \cref{fig:comparisons-varying-n}), ridge quickly becomes ``Pareto
optimal'' as $n$ increases, eventually demonstrating a lower emergent degrees of
freedom than \knn. For the simulation designed to favor random forests (right 
panel), random forests fail to be ``Pareto optimal'' at any $n$, as ridge and
\knn each offer a nontrivial tradeoff in balancing predictive accuracy versus 
complexity. (As a side note, it is interesting to note that the dynamic range of
the emergent degrees of  freedom of optimally-tuned \knn is very small, in both 
settings.)    

\section{Degrees of freedom decomposition}
\label{sec:decomposition}

In previous sections, we spoke frequently of the degrees of freedom ``due to
bias'', which refers to the difference in emergent and intrinsic random-X
degrees of freedom. Here we describe how this general idea---decomposing degrees
of freedom by attributing complexity to different sources of errors---can be
extended to problems involving distribution shift.  

We demonstrate the idea in the context of covariate shift. Given a predictor
\smash{$\hf$}, we consider four scenarios (\cref{tab:attribution} gives a
summary). In all cases, the reference model remains the least squares predictor
on well-specified data, as in \cref{def:df-R} or \cref{def:df-R-intrinsic}, but
the left-hand side of the matching equations \eqref{eq:opt-R-match} or 
\eqref{eq:opt-R-intrinsic-match} changes. 

\begin{enumerate}
\item The \emph{total emergent model} (both signal and covariate shift): for the
  left-hand side in \eqref{eq:opt-R-match}, the optimism is computed using
  random-X prediction error under covariate shift, and the result is denoted
  \smash{$\df^{11}(\hf)$}, which we simply call emergent degrees of freedom
  \smash{$\df\r(\hf)$}.    

\item The \emph{partial emergent model} (with signal but no covariate shift):
  for the left-hand side in \eqref{eq:opt-R-match}, the optimism is computed
  using random-X prediction error without covariate shift, and the result is
  denoted \smash{$\df^{10}(\hf)$}.    

\item The \emph{partial intrinsic model} (with no signal but with covariate
  shift): for the left-hand side in \eqref{eq:opt-R-intrinsic-match}, the
  optimism is computed using random-X prediction error without signal yet still  
  with covariate shift, and the result is denoted \smash{$\df^{01}(\hf)$}.    

\item The \emph{intrinsic model} (with no signal and no covariate shift): for
  the left-hand side in \eqref{eq:opt-R-intrinsic-match}, the optimism is
  computed using random-X prediction error without signal or covariate shift,
  and the result is denoted \smash{$\df^{00}(\hf)$}, which we simply call the
  intrinsic degrees of freedom \smash{$\df\r\i(\hf)$}.    
\end{enumerate}

\begin{table}[t]
\centering
\begin{tabularx}{0.5\textwidth}{YYY}
\toprule
\multirow{2}{*}{Signal presence} & \multicolumn{2}{c}{Covariate shift}
\\\cmidrule(lr){2-3}  
& \xmark & \cmark \\    
\midrule
\xmark & $\df^{00}(\hf)$ & $\df^{01}(\hf)$ \\ 
\cmark & $\df^{10}(\hf)$ & $\df^{11}(\hf)$ \\ 
\bottomrule
\end{tabularx}
\caption{Scenarios for decomposing degrees of freedom due to bias and covariate 
  shift.} 
\label{tab:attribution}
\end{table}

In order to attribute an amount of degrees of freedom to each source of
error---bias and covariate shift---we use a definition akin to Shapley values 
\citep{shapley1953value}:
\begin{align*}
\phi^\bias(\hf) &= \frac{1}{2} (\df\r(\hf) - \df^{01}(\hf)) + \frac{1}{2}
(\df^{10}(\hf) - \df\r\i(\hf)), \\
\phi^\cov(\hf) &= \frac{1}{2} (\df\r(\hf) - \df^{10}(\hf)) + \frac{1}{2}
(\df^{01}(\hf) - \df\r\i(\hf)). 
\end{align*}
Note that by construction (which is also a Shapley axiom called ``efficiency''),
we have:  
\[
\df\r(\hf) = \df\r\i(\hf) + \phi^\bias(\hf) + \phi^\cov(\hf).
\]
In other words, we have created a bona fide decomposition of the total emergent
degrees of freedom into constituent parts---attributed to variance, bias, and
covariate shift (first three terms above, respectively). The same idea can be
extended to an arbitrary number of sources of error.

As a simple example, we revisit the setup used in the first row of
\cref{fig:comparisons-single-n}, but introduce covariate shift by drawing test
features from a scaled and shifted version of the training feature distribution.
\cref{fig:attribution} displays the degrees of freedom of ridge, \knn, and
random forest predictors broken down into components due to variance, bias, and
covariate shift. \cref{fig:attribution-best} shows the same quantities, but
restricted to the optimally-tuned model within each class (which minimizes
out-of-distribution prediction error). We can see that the \knn predictor
exhibits the smallest intrinsic complexity; the ridge predictor exhibits the
smallest complexity due to bias (recall, the true model here is linear); and
quite interestingly, random forests display by the smallest complexity due to
covariate shift.

\begin{figure}[t]
\includegraphics[width=\textwidth]{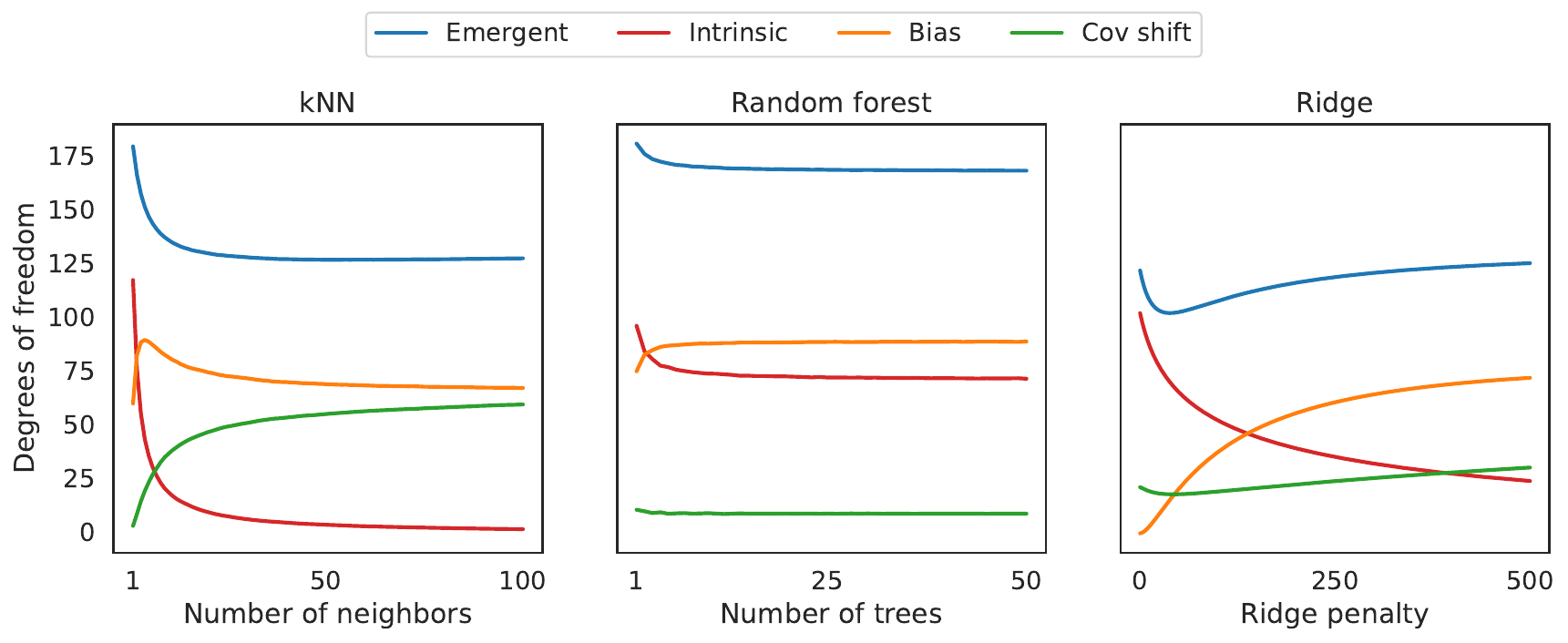}    
\caption{Decomposing degrees of freedom for ridge, \knn, and random forest
  predictors. The setup is as in the top row of \cref{fig:comparisons-single-n}
  but with covariate shift.} 
\label{fig:attribution}

\bigskip
\includegraphics[width=\textwidth]{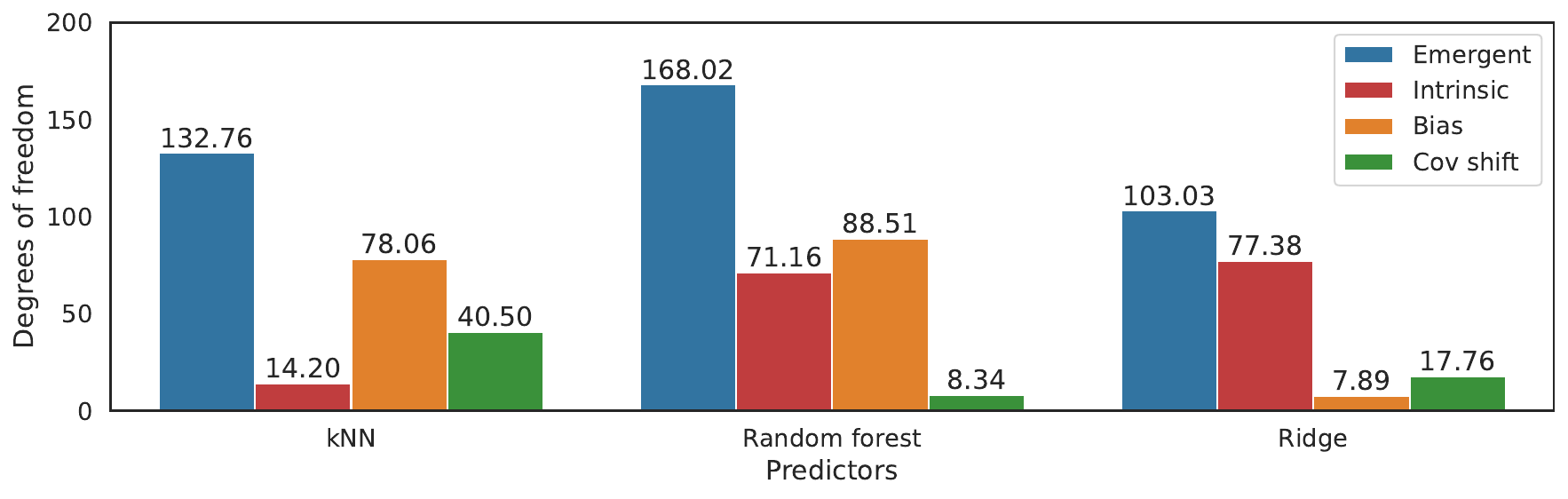}
\caption{Decomposition for the optimally-tuned model within each class in
  \cref{fig:attribution}.}  
\label{fig:attribution-best}
\end{figure}

\section{Discussion}
\label{sec:discussion}

A high-level summary of our proposal is as follows. In order to define the
complexity of an arbitrary prediction model, we consider two critical
components: a metric and a reference model. The metric quantifies complexity,
while the reference model provides context, which is analogous adding units to a
measurement. Precisely, we define the complexity of a given model as the number
of parameters in the reference model we require in order to obtain an identical
value of the metric.

Each choice of a metric and reference model gives rise to a different notion of
model complexity. In fact, if we take the metric to be fixed-X optimism (the
difference between random-X prediction error and training error) as the metric,
and least squares as the reference model, then this formulation reproduces the
classical notion of (effective) degrees of freedom. With this motivation in
mind, we focused on examining random-X optimism (the difference between random-X 
prediction error and training error) as the metric, and least squares as the
reference model, which allowed us to define a new random-X notion of degrees of 
freedom.

By changing the metric---to measure the random-X optimism when the given model
is run on pure noise, we can isolate the degrees of freedom due to variance,
which we call the intrinsic random-X degrees of freedom. Then, taking the
difference between the original random-X degrees of freedom and this intrinsic
version allows us to isolate the degrees of freedom due to bias. A similar idea
can be used to isolate the degrees of freedom due to other sources of error in
settings with distribution shift, such as covariate shift.

Of course, our choice to focus on regression, and metrics based on squared
error, does not reflect a fundamental restriction. An interesting direction for
follow-up work would be to use the framework we proposed in this paper in order
to study model complexity in classification.

\subsection*{Acknowledgements} 

Preliminary versions of our work were presented at a workshop on the Theory of 
Overparameterized Machine Learning (TOPML) in 2022, and the Joint Statistical 
Meetings (JSM) in 2023. We thank the participants for their comments and
feedback. 

PP and JHD thank Pierre Bellec and Takuya Koriyama for useful discussions about  
the degrees of freedom of convex regularized M-estimators. RJT thanks Trevor
Hastie, Saharon Rosset, and Rob Tibshirani for many insightful conversations
about optimism and model complexity over the years.

{
\RaggedRight
\spacingset{1}\small
\bibliographystyle{plainnat}
\bibliography{pratik}      
}

\clearpage
\appendix

\section{Proofs for \cref{sec:proposal,sec:properties}}

\subsection[Derivation of random-X optimism of least squares]{Derivation of
  right-hand side in \eqref{eq:opt-R-ls}}  
\label{app:opt-R-ls}

Recall the least squares regression estimator on \smash{$(\tX_d, \ty)$} is given
by  
\[
\hbeta\ls = (\tX_d^\top \tX_d)^{-1} \tX_d^\top \ty.
\]
The predicted values at the design points are
\[
\hf\ls(\tX_d) = \tX_d \hspace{1pt} \hbeta\ls = L_d \hspace{1pt} \ty,  
\]
where \smash{$L_d = \tX_d (\tX_d^\top \tX_d)^{-1} \tX_d^\top \in \RR^{n \times
    n}$} is the smoothing matrix for the least squares estimator. The training
error is thus    
\begin{align*}
\EE \bigg[ \frac{1}{n} \| \ty - L_d \hspace{1pt} \ty \|_2^2 \bigg] 
&= \frac{\sigma^2}{n} \EE[\tr(I - L_d)^2] \\
&= \frac{\sigma^2}{n} \EE[\tr(I - L_d)] \\
&= \sigma^2 (1-d/n).
\end{align*}
In the second equality above, we used the fact that the matrix $I - L_d$ is
idempotent. Now let \smash{$(\tx_0, \ty_0)$} denote a sample which is i.i.d.\ to
the training data \smash{$(\tx_i, \ty_i)$}, $i \in [n]$. Conditional on
\smash{$\tX_d,\tx_0$}, we can decompose the random-X prediction error of the
least squares predictor \smash{$\hf\ls$} into irreducible error squared bias
plus variance, as usual:   
\begin{align*}
\EE \big[\big( \ty_0 - \hf\ls(\tx_0) \big)^2 \,|\, \tX_d,\tx_0 \big] 
&= \sigma^2 + \big( \EE[ \tx_0^\top \hbeta\ls \,|\, \tX_d,\tx_0 ] - \tx_0^\top
\beta \big)^2 + \Var[ \tx_0^\top \hbeta\ls \,|\, \tX_d,\tx_0 ] \\ 
&= \sigma^2 + \big( \tx_0^\top (\tX_d^\top \tX_d)^{-1} \tX_d^\top \tX_d 
\hspace{1pt} \beta - \tx_0^\top \beta \big)^2 + \sigma^2 \tx_0^\top (\tX_d^\top 
\tX_d)^{-1} \tx_0 \\
&= \sigma^2 \big( 1 + \tx_0^\top (\tX_d^\top \tX_d)^{-1} \tx_0 \big).
\end{align*}
where in the second line we used \smash{$\ty = \tX_d \hspace{1pt} \beta + v$},
where \smash{$\EE[v | \tX_d] = 0$} and \smash{$\Cov[v | \tX_d] = \sigma^2
  I$}. Taking an expectation over \smash{$\tx_0$}, which is independent of
\smash{$\tX_d$}, gives  
\[
\EE \big[\big( \ty_0 - \hf\ls(\tx_0) \big)^2 \,|\, \tX_d \big] 
= \sigma^2 \big(1 + \tr[ \Sigma (\tX_d^\top \tX_d)^{-1} ] \big).
\]
Finally, taking an expectation over \smash{$\tX_d$}, and using the fact that
\smash{$(\tX_d^\top \tX_d)^{-1} \sim W^{-1}(\Sigma^{-1}, n)$} (inverse Wishart
distributed), 
\begin{align*}
\EE \big[\big( \ty_0 - \hf\ls(\tx_0) \big)^2 \big] 
&= \sigma^2 \bigg( 1 + \tr \bigg[ \Sigma \frac{\Sigma^{-1}}{n-d-1} \bigg] 
\bigg) \\ 
& = \sigma^2 \bigg( 1 + \frac{d}{n-d-1} \bigg).
\end{align*}
The random-X optimism is therefore given by
\begin{align*}
\opt\r(\hf\ls) 
&= \sigma^2 \bigg( 1 + \frac{d}{n-d-1} \bigg) - \sigma^2 \bigg( 1 - \frac{d}{n}  
\bigg) \\ 
&= \sigma^2 \bigg( \frac{d}{n} + \frac{d}{n-d-1} \bigg),
\end{align*}
which completes the derivation.

\subsection[Proof of approximation result for omega function]{Proof of
  approximation result in \eqref{eq:omega_approx}}    
\label{app:omega_approx}

Let $z = d/n$, and rewrite 
\[
x = \frac{d}{n} + \frac{d}{n-d-1}
\]
as 
\[
x = z + \frac{z}{1-z-1/n} \iff z^2 - (x+1-1/n) z + (1-1/n) x = 0. 
\]
Note that we can write $\omega_n(x) / n$ as a solution of the above quadratic
equation in $z$,   
\[
\omega_n(x) / n = \frac{b_n - \sqrt{b_n^2 - 4c_n}}{2},
\]
where we define
\[
b_n = x+1-1/n \quad \text{and} \quad c_n =  1-1/n. 
\]
Meanwhile, one can check that $\omega(x)$ solves  
\[
x = z + \frac{z}{1-z} \iff z^2 - (x+1) z + x = 0,
\]
and indeed we can write 
\[
\omega(x) = \frac{b - \sqrt{b^2 - 4c}}{2},
\]
where $b = x+1$ and $c = 1$. The desired fact \eqref{eq:omega_approx} therefore 
follows using $b_n \to b$, $c_n \to c$, and using continuity. 

\section{Proofs for \cref{sec:theory}}

\subsection{Proof of \cref{prop:ridge-monotonicity}}
\label{app:ridge-monotonicity}

Due to the monotonicity of $\omega_n$ in \eqref{eq:omega_n}, it suffices to show
that the intrinsic random-X optimism \smash{$\opt\r\i(\hf_\lambda^\ridge)$} is
decreasing in  $\lambda$. From \eqref{eq:opt-R-smoother-intrinsic}, recall, this  
is
\[
\opt\r\i(\hf_\lambda^\ridge) = \sigma^2 \EE \bigg[ \frac{2}{n} \tr[L_X(X)] + \EE 
[L_X(x_0)^\top L_X(x_0) \, | \, X] - \frac{1}{n} \tr[L_X(X)^\top
L_X(X)] \bigg],
\]
where recall for ridge, we have \smash{$L_X(x) = X (\hSigma + \lambda I)^{-1}
  x$}, with \smash{$\hSigma = X^\top X /  n$}. From Proposition 2 of
\citet{rosset_tibshirani_2020}, we know that the middle term (which is $V + V^+$
in their notation) is decreasing in $\lambda$. For the first and last term,
writing $s_i \geq 0$, $i \in [p]$ for the eigenvalues of \smash{$\hSigma$},
observe
\begin{align*}
\frac{2}{n} \tr[L_X(X)] - \frac{1}{n} \tr[L_X(X)^\top L_X(X)] 
&= \sum_{i=1}^p \bigg( \frac{2s_i}{s_i+\lambda} - \frac{s_i^2}{(s_i+\lambda)^2}
  \bigg) \\
&= \sum_{i=1}^p \frac{2s_i^2 + 2\lambda s_i - s_i^2}{(s_i+\lambda)^2} \\
&= \sum_{i=1}^p \bigg( 1 - \frac{\lambda^2}{(s_i+\lambda)^2} \bigg). 
\end{align*}
Each summand here is decreasing in $\lambda$, which means that their sum is, and 
hence this remains true after taking an expectation with respect to $X$. This
completes the proof. 

\subsection{Proof of \cref{thm:ridge-asymptotics}}
\label{app:ridge-asymptotics}

Throughout the proof, we will use the language of asymptotic equivalents. For
sequences $\{ A_p \}_{p \ge 1}$ and $\{ B_p \}_{p \ge 1}$ of (random or
deterministic) matrices of growing dimension, we say that $A_p$ and $B_p$ are
asymptotically equivalent, and write this as $A_p \asympequi B_p$, provided
$\lim_{p \to \infty} | \tr[C_p (A_p - B_p)] | = 0$ almost surely for any
sequence $\{ C_p \}_{p \ge 1}$ of matrices with bounded trace norm, $\limsup \|
C_p \|_{\mathrm{tr}} < \infty$ as $p \to \infty$. The notion of asymptotic
equivalence satisfies various calculus rules that we will use in our proofs. We
refer readers to Lemma E.3 of \citet{patil2023generalized} for a list of these
rules.  

We collect below three equivalences that we will use in the proofs. These are
standard and we refer readers to Section S.6.5 of \citet{patil2022mitigating}
for more details. 

\begin{lemma}
  \label{lem:ridge-resolvents-asympequi}
    Under \cref{asm:ridge}.\ref{asm:ridge-features}, as $n,p \to \infty$ with $0
    < \liminf_{n \to \infty} \gamma_n \le \limsup_{n \to \infty} \gamma_n <
    \infty$, the following asymptotic equivalences hold for any $\lambda > 0$:    
    \begin{enumerate}
        \item 
        First-order basic equivalence:
        \begin{equation}
            \lambda (\hSigma + \lambda I)^{-1} 
            \asympequi 
            (v(\lambda; \gamma_n) \Sigma + I)^{-1}, 
            \label{eq:basic-asympequi}
        \end{equation}
        where $v_n = v(\lambda; \gamma_n) > 0$ is the unique solution to the
        fixed 
        point equation:   
        \begin{equation}
        \label{eq:ridge-fixed-point-v}
        v_n^{-1} 
        = \lambda + \gamma_n \otr[\Sigma (v_n \Sigma + I)^{-1}], 
        \end{equation}
        \item 
        Second-order variance-type equivalence:
        \begin{equation}
        (\hSigma + \lambda I)^{-1} \hSigma (\hSigma + \lambda I)^{-1}
        \asympequi
        \tv_v(\lambda; \gamma_n)
        (v(\lambda; \gamma_n) \Sigma + I)^{-1} \Sigma (v(\lambda; \gamma_n)
        \Sigma + I)^{-1}, 
        \label{eq:var-asympequi}
        \end{equation}
        where $\tv_v(\lambda; \gamma_n)$ is defined through $v(\lambda;
        \gamma_n)$ via the following equation:   
        \[
            \tv_v(\lambda; \gamma_n)
            = \frac{1}{v(\lambda; \gamma_n)^{-2} - \gamma_n \otr[\Sigma^2
              (v(\lambda; \gamma_n) \Sigma + I)^{-2}]}.
        \]
        \item
        Second-order bias-type equivalence:
        \begin{equation}
            \lambda^2
            (\hSigma + \lambda I)^{-1} A (\hSigma + \lambda I)^{-1}
            \asympequi 
            (v(\lambda; \gamma_n) \Sigma + I)^{-1}
            (\tv_b(\lambda; \gamma_n,A) \Sigma + A) 
            (v(\lambda; \gamma_n) \Sigma + I)^{-1}, 
        \label{eq:bias-asympequi}
        \end{equation}
        for any matrix $A\in\RR^{p\times p}$ with bounded operator norm which is
        independent of $\hSigma$, and where $\tv_b(\lambda; \gamma_n, A)$ is
        defined through $v(\lambda; \gamma_n)$ by the following equation:    
        \[
            \tv_b(\lambda; \gamma_n,A)
            =
            \ddfrac{\gamma_n \otr[A \Sigma(v(\lambda; \gamma_n)\Sigma +
              I)^{-2}]}{v(\lambda; \gamma_n)^{-2} - \gamma_n \otr[\Sigma^2
              (v(\lambda; \gamma_n) \Sigma + I)^{-2}]}. 
          \]
    \end{enumerate}
\end{lemma}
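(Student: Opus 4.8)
The plan is to establish the first-order equivalence \eqref{eq:basic-asympequi} directly, and then to obtain both second-order equivalences \eqref{eq:var-asympequi}, \eqref{eq:bias-asympequi} by differentiating the first-order relation with respect to the appropriate parameter and reading off the scalar coefficients $\tv_v$, $\tv_b$ from the corresponding derivative of the fixed point \eqref{eq:ridge-fixed-point-v}.

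First I would prove \eqref{eq:basic-asympequi} via the leave-one-out (cavity) method. Writing $\hSigma = \frac{1}{n}\sum_{i=1}^n x_i x_i^\top$ with $x_i = \Sigma^{1/2} z_i$, I would form the resolvent $(\hSigma_{-i} + \lambda I)^{-1}$ with the $i$th term deleted and relate it to $(\hSigma + \lambda I)^{-1}$ by the Sherman--Morrison identity. The key step is showing that the quadratic forms $z_i^\top \Sigma^{1/2}(\hSigma_{-i} + \lambda I)^{-1}\Sigma^{1/2} z_i$ concentrate around their trace $\frac{1}{n}\tr[\Sigma(\hSigma_{-i}+\lambda I)^{-1}]$; under the bounded $(4+\delta)$-moment condition in \cref{asm:ridge}\ref{asm:ridge-features} this follows from a Hanson--Wright-type bound combined with the independence of $z_i$ from $\hSigma_{-i}$. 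Testing the resulting identity against an arbitrary matrix $C_p$ of bounded trace norm, and using the calculus rules for asymptotic equivalents (Lemma E.3 of \citet{patil2023generalized}), yields the self-consistent equation \eqref{eq:ridge-fixed-point-v} for $v_n$; a monotonicity/contraction argument gives its unique positive solution. Alternatively one may simply invoke the anisotropic Marchenko--Pastur deterministic-equivalent theorem (e.g., \citet{bai_silverstein_2010}).

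Next I would derive the second-order equivalences by differentiation. For \eqref{eq:var-asympequi}, I would use the algebraic identity $(\hSigma + \lambda I)^{-1}\hSigma(\hSigma+\lambda I)^{-1} = (\hSigma+\lambda I)^{-1} - \lambda(\hSigma+\lambda I)^{-2} = \partial_\lambda[\lambda(\hSigma+\lambda I)^{-1}]$. Differentiating the right-hand side of \eqref{eq:basic-asympequi} in $\lambda$ gives $-(\partial_\lambda v_n)(v_n\Sigma+I)^{-1}\Sigma(v_n\Sigma+I)^{-1}$, and implicitly differentiating \eqref{eq:ridge-fixed-point-v} shows that $-\partial_\lambda v_n$ equals exactly the stated coefficient $\tv_v$. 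For \eqref{eq:bias-asympequi}, I would perturb the spectral parameter, $\hSigma + \lambda I \mapsto \hSigma + \lambda I + tA$, and use $\partial_t(\hSigma+\lambda I + tA)^{-1}|_{t=0} = -(\hSigma+\lambda I)^{-1}A(\hSigma+\lambda I)^{-1}$; differentiating the first-order (matrix-valued) equivalent of the perturbed resolvent at $t=0$ and rescaling by $\lambda^2$ produces the sandwiched form $(v_n\Sigma+I)^{-1}(\tv_b\Sigma + A)(v_n\Sigma+I)^{-1}$, with $\tv_b$ arising from the $t$-derivative of the self-consistent parameter in the direction $A$.

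The main obstacle is justifying that differentiating an asymptotic-equivalence relation produces the asymptotic equivalence of the derivatives. The clean way to handle this is via analyticity: the resolvents extend to analytic matrix-valued functions of $\lambda$ (and of the perturbation parameter $t$) on a complex neighborhood of $(0,\infty)$, the scalar functionals obtained by tracing against any fixed $C_p$ are locally bounded, and the deterministic equivalents are analytic as well, so a Vitali/normal-families argument upgrades locally uniform convergence to convergence of derivatives. The remaining effort---verifying that every error term from the leave-one-out step is $o(1)$ uniformly on compact $\lambda$-sets under only $(4+\delta)$ moments---is routine, and is precisely where the moment assumption enters.
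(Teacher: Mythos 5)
Your outline is correct, and it reconstructs essentially the argument behind the paper's treatment --- with the caveat that the paper does not prove this lemma at all: it declares the equivalences standard and defers to Section S.6.5 of \citet{patil2022mitigating}, together with the calculus rules in Lemma E.3 of \citet{patil2023generalized}. Those references establish the first-order equivalence by exactly your leave-one-out/Sherman--Morrison route, and obtain the second-order equivalences by the differentiation rule for deterministic equivalents, justified as you propose via analyticity plus a Vitali/normal-families argument. Your coefficient bookkeeping checks out: implicit differentiation of \eqref{eq:ridge-fixed-point-v} gives $-\partial_\lambda v_n = \big(v_n^{-2} - \gamma_n \otr[\Sigma^2 (v_n\Sigma+I)^{-2}]\big)^{-1} = \tv_v(\lambda;\gamma_n)$, which together with the identity $(\hSigma+\lambda I)^{-1}\hSigma(\hSigma+\lambda I)^{-1} = \partial_\lambda\big[\lambda(\hSigma+\lambda I)^{-1}\big]$ yields \eqref{eq:var-asympequi} with exactly the stated coefficient.

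Two refinements are needed to make the sketch airtight. First, Hanson--Wright presumes sub-Gaussian entries; under only bounded $4+\delta$ moments the quadratic-form concentration must instead come from truncation plus a moment inequality (e.g., Lemma B.26 of \citet{bai_silverstein_2010}), with Borel--Cantelli delivering the almost-sure statement --- this is precisely where the moment assumption enters, so the tool should be named. Second, and more substantively, for \eqref{eq:bias-asympequi} be explicit that differentiating \eqref{eq:basic-asympequi} alone cannot suffice: a $\lambda$-derivative carries no information in the direction $A$, so your $t$-perturbation requires a matrix-shift (Rubio--Mestre-type) first-order equivalence $(\hSigma + \lambda I + tA)^{-1} \asympequi (e_n(t)\Sigma + \lambda I + tA)^{-1}$, where $1/e_n(t) = 1 + \gamma_n \otr[\Sigma(e_n(t)\Sigma + \lambda I + tA)^{-1}]$ and $e_n(0) = \lambda v_n$; differentiating this fixed point at $t=0$ gives $e_n'(0) = \tv_b(\lambda;\gamma_n,A)$, and multiplying through by $\lambda^2$ recovers the stated sandwich form. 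For real $t$ this needs $\lambda I + tA$ to remain symmetric positive definite, so treat symmetric $A$ first and dispose of any skew-symmetric part by linearity (it contributes zero to $\tv_b$, since $\otr[A\Sigma(v_n\Sigma+I)^{-2}]$ vanishes for skew $A$), or take $t$ in a complex neighborhood of zero and again invoke analyticity. With these two points made precise, your proof goes through and coincides with the standard derivation that the paper's citation points to.
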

 
With this background, we are now ready to derive the asymptotic equivalents for
the fixed-X and random-X degrees of freedom of the ridge predictor below. Note
that $v_n$ in \eqref{eq:ridge-fixed-point-v} is the reciprocal of $\mu_n$ in
\eqref{eq:ridge-fixed-point-mu}.    

\paragraph{Fixed-X degrees of freedom.}

Recall that the fixed-X degrees of freedom of ridge regression is:
\begin{align*}
    \df\f(\hf_\lambda^\ridge) / n
    &= \tr[L_X(X)] / n \\
    &= \gamma_n \otr[\hSigma (\hSigma + \lambda I)^{-1}] \\
    &= \gamma_n \otr[I - \lambda (\hSigma + \lambda I)^{-1}] \\
    &= \gamma_n - \gamma_n \lambda \otr[(\hSigma + \lambda I)^{-1}].
\end{align*}
Thus, using \eqref{eq:basic-asympequi}, we have the following asymptotic
equivalence: 
\begin{equation}
    \label{eq:df-F-asympequi-deriv}
    \df\f(\hf_\lambda^\ridge) / n
    \asympequi \gamma_n - \gamma_n \otr[(v_n \Sigma + I)^{-1}]
    = \gamma_n \otr[v_n \Sigma (v_n \Sigma + I)^{-1}].
\end{equation}
Now, multiplying the fixed point equation \eqref{eq:ridge-fixed-point-v} by
$v_n$, note that that the final expression in \eqref{eq:df-F-asympequi-deriv}
is simply $1 - \lambda v_n$. In addition, substituting $\mu_n = v_n^{-1}$
yields the final expression in \eqref{eq:ridge-df-F-asympequi}, as desired.

\paragraph{Intrinsic random-X degrees of freedom.}

Recall from \eqref{eq:opt-R-smoother-intrinsic} that the intrinsic random-X
optimism of ridge regression is: 
\begin{equation}
    \opt\r\i(\hf_\lambda^\ridge \,|\, X) / \sigma^2 = 2 \tr[L_{X}(X)]/n +
    \EE_{x_0}[L_{X}(x_0)^\top L_{X}(x_0)] - \tr[\bL_{X}(X)^\top \bL_{X}(X)]  /
    n. \label{eq:proof:prop:ridge-eq-1} 
\end{equation}
We now rewrite the three terms in \eqref{eq:proof:prop:ridge-eq-1} to make them 
amenable for applications of asymptotic equivalents described in the background
above. 

On one hand, note that:
\begin{align}
    &2 \tr[L_{X}(X)] / n - \tr[L_{X}(X)^\top L_{X}(X)]  / n \notag \\
    &= - \tr[(I - L_X(X))^2] / n + 1 \notag \\
    &= - \tr[(I - L_X(X))] / n + \tr[L_X(X) (I - L_X(X))] / n + 1 \notag \\
    &= -1 + \tr[L_X(X)] / n + \tr[L_X(X) (I - L_X(X))] / n + 1  \notag \\
    &= -1 + \gamma_n - \lambda \tr[(\hSigma + I)^{-1}] / n + \lambda \tr[\hSigma
      (\hSigma + \lambda I)^{-2}] / n + 1 \notag \\ 
    &= \gamma_n - \lambda \tr[(\hSigma + I)^{-1}] / n + \lambda \tr[\hSigma
      (\hSigma + \lambda I)^{-2}] / n. \label{eq:intrinsic-opt-train-error} 
\end{align}
On the other hand, note that:
\begin{align}
    \EE[\tr[L_{X}(x_0)^\top L_{X}(x_0)]]
    &= \tr[\Sigma \hSigma (\hSigma + \lambda I)^{-2}] / n \notag \\
    &= \tr[\Sigma (\hSigma + \lambda I)^{-1}] / n - \lambda \tr[\Sigma (\hSigma
      + \lambda)^{-2}] / n. \label{eq:intrinsic-opt-test-error} 
\end{align}
Substituting \eqref{eq:intrinsic-opt-train-error},
\eqref{eq:intrinsic-opt-test-error} into \eqref{eq:proof:prop:ridge-eq-1}, our 
goal is reduced to obtaining an asymptotic equivalent for: 
\begin{align}
    \opt\r\i (\hf_\lambda^\ridge)/ \sigma^2
    &= \tr[\Sigma (\hSigma + \lambda I)^{-1}] / n - \lambda \tr[\Sigma (\hSigma
      + \lambda)^{-2}] / n \notag \\ 
    &\quad + \gamma_n - \lambda \tr[(\hSigma + I)^{-1}] / n + \lambda
      \tr[\hSigma (\hSigma + \lambda I)^{-2}] / n \notag \\ 
    &= (\tr[\Sigma (\hSigma + \lambda I)^{-1}] / n + 1) - (1 - \gamma_n +
      \lambda \tr[(\hSigma + \lambda I)^{-1}] / n) \notag \\ 
    &\quad - (\lambda \tr[\Sigma (\hSigma + \lambda)^{-2}] / n - \lambda
      \tr[\hSigma (\hSigma + \lambda I)^{-2}] / n). \label{eq:OptR-reduced} 
\end{align}

Now observe that for the first line in \eqref{eq:OptR-reduced}:
\begin{align}
    1 - \gamma_n + \lambda \tr[(\hSigma + \lambda I)^{-1}] / n
    &\asympequi 1 - \gamma_n + \gamma_n \otr[(v_n \Sigma + I)^{-1}] \notag \\ 
    &= 1 - \gamma_n + \lambda v_n + \gamma_n - 1 \notag \\
    &= \lambda v_n (1 - \lambda v_n) + \lambda^2 v_n^2 \notag \\
    &= \lambda v_n^2 \gamma_n \otr[\Sigma (v_n \Sigma + I)^{-1}]
    + \lambda^2 v_n^2 \notag \\
    &\asympequi \lambda^2 v_n^2 (\tr[\Sigma (\hSigma + \lambda I)^{-1}] / n +
      1). 
    \label{eq:OptR-reduced-term1-matching}
\end{align}

Similarly, observe that for the second line in \eqref{eq:OptR-reduced}:
\begin{align}
    \tr[\hSigma (\hSigma + \lambda I)^{-2}] / n
    &= \gamma_n \otr[\hSigma (\hSigma + \lambda I)^{-2}] \notag \\
    &\asympequi
    \frac{1}{v_n^{-2} - \gamma_n \otr[\Sigma^2 (v_n \Sigma + I)^{-2}]} \cdot
      \gamma_n \otr[\Sigma (v_n \Sigma + I)^{-2}] \notag \\ 
    &= 
    v_n^2 \bigg( \frac{\gamma_n \otr[\Sigma^2 (v_n \Sigma + I)^{-2}]}{v_n^{-2} -
      \gamma_n \otr[\Sigma^2 (v_n \Sigma + I)^{-2}]} + 1 \bigg) \cdot \gamma_n
      \otr[\Sigma (v_n \Sigma + I)^{-2}] \notag \\ 
    &\asympequi
    \lambda^2 v_n^2 \tr[\Sigma (\hSigma + \lambda I)^{-2}] / n.
    \label{eq:OptR-reduced-term2-matching}
\end{align}

Hence, substituting \eqref{eq:OptR-reduced-term1-matching} and
\eqref{eq:OptR-reduced-term2-matching} into \eqref{eq:OptR-reduced}, we have 
\begin{align*}
    \opt\r\i(\hf_\lambda^\ridge \,|\, X)/ \sigma^2
    &\asympequi
    (1 - \lambda^2 v_n^2) (\tr[\Sigma (\hSigma + \lambda I)^{-1}] / n + 1 -
      \lambda \tr[\Sigma (\hSigma + \lambda)^{-2}] / n) \\ 
    &= (1 - \lambda^2 v_n^2) (\tr[\Sigma \hSigma (\hSigma + \lambda I)^{-2}] / n
      + 1) \\ 
    &= (1 - \lambda^2 v_n^2) \bigg( \frac{1}{v_n^{-2} - \gamma_n \otr[\Sigma^2
      (v_n \Sigma + I)^{-2}]} \gamma_n \otr[\Sigma^2 (v_n \Sigma + I)^{-2}] + 1
      \bigg) \\ 
    &\asympequi
    (1 - \lambda^2 v_n^2) \bigg(\frac{\gamma_n \otr[\Sigma^2 (\Sigma + \mu_n
      I)^{-2}]}{1 - \gamma_n \otr[\Sigma^2 (\Sigma + \mu_n I)^{-2}]} + 1 \bigg)
  \\ 
    &= (1 - \lambda^2 v_n^2) \Big( \frac{V_n}{D_n} + 1 \Big), 
\end{align*}
where in the second-to-last step above, we used $\mu_n = v_n^{-1}$ to simplify
the expressions. 
Now applying the mapping $\omega$ to bring on the degrees of
freedom scale, we have that
\begin{equation}
    \label{eq:omega-of-condopt-ridge}
    \omega(\opt\r\i (\hf_\lambda^\ridge \,|\, X)/ \sigma^2)
    \asympequi
    \omega((1 - \lambda^2 / \mu^2) (V_n / D_n + 1)).
\end{equation}
Note that the right-hand side of \eqref{eq:omega-of-condopt-ridge} is always
bounded by $1$ (by construction), thus we can apply the dominated convergence
theorem to conclude that same asymptotic equivalence
\eqref{eq:omega-of-condopt-ridge} holds after we take an expectation with
respect to $X$. This yields the result in \eqref{eq:ridge-df-R-i-asympequi}.   

\paragraph{Emergent random-X degrees of freedom.}

Using \eqref{eq:opt-R-smoother-emergent-intrinsic}, the only additional quantity
we need to deal with is the excess bias, whose asymptotic equivalent we will
derive next. 

Recalling \eqref{eq:f-decomposition}, let us abbreviate $f\li(x) = x^\top 
\beta$ and thus $f(x) = f\li(x) + f\nl(x)$. We can decompose the excess bias
\smash{$B^+ = B^+(\hf^\ridge_\lambda)$} into linear, nonlinear, and cross
components as follows:   
\begin{align}
    B^+
    &= \EE_{x_0}[ (f(x_0) - L_{X}(x_0)^\top f(X))^2 ]  
    - \| (I - L_{X}(X)) f(X) \|_2^2 / n \notag \\
    &= \EE_{x_0}[ (f\li(x_0) + f\nl(x_0) - L_{X}(x_0)^\top (f\li(X) + 
      f\nl(X)))^2 ] \notag \\
    &\quad - \| (I - L_{X}(X)) (f\li(X) + f\nl(X)) \|_2^2 / n \notag \\
    &= B^+\li + B^+\nl + C^+, \label{eq:Bplus-decomposition-ridge}
\end{align}
where $B^+\li$, $B^+\nl$, and $C^+$ are defined as:
\begin{align*}
    B^+\li
    &= \EE_{x_0}[(f\li(x_0) - L_X(x_0)^\top f\li(X))^2] - \|(I - L_X(X)) f\li(X)
      \|_2^2/n, \\ 
    B^+\nl
    &= \EE_{x_0}[(f\nl(x_0) - L_X(x_0)^\top f\nl(X))^2] - \|(I - L_X(X)) f\nl(X)
      \|_2^2/n, \\ 
    C^+
    &= 2 \EE_{x_0}[(f\li(x_0) - L_X(x_0)^\top f\li(X)) (f\nl(x_0) -
      L_X(x_0)^\top f\nl(X))] \notag \\ 
    &\quad - 2 f\li(X)^\top (I - L_X(X))^2 f\nl(X). 
\end{align*}
We will obtain the asymptotic equivalents for $B^+\li$, $B^+\nl$, and $C^+$
separately below. 

\emph{Asymptotic equivalent for $C^+$.} For the first term in $C^+$, we have: 
\begin{align}
    &\EE_{x_0}[(f\li(x_0) - L_X(x_0)^\top f\li(X)) (f\nl(x_0) - L_X(x_0)^\top
      f\nl(X))] \notag \\ 
    &= - \EE_{x_0}[f\li(x_0) L_X(x_0)^\top f\nl(X)] + \EE_{x_0}[f\li(X)^\top
      L_X(x_0) L_X(x_0)^\top f\nl(X)]. \notag 
\end{align}
Here we used the fact that $\EE_{x_0}[f\nl(x_0) f\li(x_0)] = 0$ and
$\EE_{x_0}[f\nl(x_0) L_X(x_0)^\top f\li(X)] = 0$ because $\EE_{x_0}[f\nl(x_0)
x_0] = 0$. For the remaining two terms in $C^+$, observe that: 
\begin{align}
    \EE_{x_0}[f\li(x_0) L_X(x_0)^\top f\nl(X)]
    = \beta^\top \Sigma (\hSigma + \lambda I)^{-1} X^\top f\nl(X) /
  n, \label{eq:cross-term-1} \\ 
    \EE_{x_0}[f\li(X)^\top L_X(x_0) L_X(x_0)^\top f\nl(X)] 
    = \beta^\top X^\top / n (\hSigma + \lambda I)^{-1} \Sigma (\hSigma + \lambda
  I)^{-1} X^\top f\nl(X) / n. \label{eq:cross-term-2} 
\end{align}
Invoking Lemma A.3 of \citet{patil2023generalized}, we conclude that the
right-hand sides of both \eqref{eq:cross-term-1} and \eqref{eq:cross-term-2}
almost surely vanish. In a similar way, we can show that the second term of
$C^+$ vanishes almost surely. Thus, we have $C^+ \asympequi 0$.

\emph{Asymptotic equivalent for $B^+\li$.} For the linear component of excess
bias, we have 
\begin{align}
    B^+\li 
    &= \EE_{x_0}[ (f\li(x_0) - L_X(x_0)^\top f\li(X))^2] - \|(I - L_X(X))
      f\li(X) \|_2^2/n \notag \\  
    &= \EE_{x_0}[(\beta^{\top} x_0 - \beta^\top X^\top L_X(x_0))^2] - \|(I -
      L_X(X)) X\beta\|_2^2/n \notag \\ 
    &= \EE_{x_0}[(\beta^{\top} (I - \hSigma (\hSigma + \lambda I)^{-1}) x_0)^2]
      - \|(I - X (\hSigma + \lambda I)^{-1}X^{\top}/ n) X\beta\|_2^2/n \notag\\ 
    &= \lambda^2 \beta^{\top}  (\hSigma + \lambda I)^{-1} \Sigma (\hSigma +
      \lambda I)^{-1} \beta - \|X(I - (\hSigma + \lambda I)^{-1}\hSigma)
      \beta\|_2^2/n \notag\\ 
    &= \lambda^2 \beta^{\top}  (\hSigma + \lambda I)^{-1} \Sigma (\hSigma +
      \lambda I)^{-1} \beta  - \lambda^2\beta^{\top} (\hSigma + \lambda I)^{-1}
      \hSigma (\hSigma + \lambda I)^{-1}\beta. \label{eq:B+-ridge} 
\end{align}
From \eqref{eq:var-asympequi} and \eqref{eq:bias-asympequi}, we have the
following equivalence: 
\begin{align*}
    B^+\li / \sigma^2
    &\asympequi
    (1+\tv_b(\lambda; \gamma_n,\Sigma)) 
    \beta^\top
    (v_n \Sigma + I)^{-1}
     \Sigma
    (v_n \Sigma + I)^{-1}
    \beta \\
    &\quad - 
    \lambda^2 \tv_v(\lambda; \gamma_n)
    \beta^\top
    (v(\lambda; \gamma_n) \Sigma + I)^{-1} \Sigma 
      (v(\lambda; \gamma_n) \Sigma + I)^{-1} 
    \beta \\
    &\asympequi 
   (1+\tv_b(\lambda; \gamma_n,\Sigma)) 
   \beta^\top 
   (v_n \Sigma + I)^{-1}
     \Sigma
    (v_n \Sigma + I)^{-1}
    \beta \\
   &\quad -
   \lambda^2 v_n^2 (1+\tv_b(\lambda; \gamma_n,\Sigma)) 
   \beta^\top
   (v_n \Sigma + I)^{-1}
     \Sigma
    (v_n \Sigma + I)^{-1}
    \beta
   \\
    &\asympequi (1 - \lambda^2v_n^2)(1+\tv_b(\lambda; \gamma_n,\Sigma))
      \beta^{\top}(v_n \Sigma + I)^{-1} \Sigma
    (v_n \Sigma + I)^{-1}\beta / \sigma^2 \\
    &= \frac{(1 - \lambda^2v_n^2) \beta^{\top}(v_n \Sigma + I)^{-1}
     \Sigma
    (v_n \Sigma + I)^{-1}\beta / \sigma^2}{v_n^{-2}- \gamma_n \otr[\Sigma^2 (v_n
      \Sigma + I)^{-2}]} \\ 
    &= \frac{(1 - \lambda^2 v_n^2) \mu^2 \beta^{\top} (\Sigma + \mu_n I)^{-1}
      \Sigma (\Sigma + \mu_n I)^{-1} \beta / \sigma^2}{1 - \gamma_n
      \otr[\Sigma^2 (\Sigma + \mu_n I)^{-2}]} \\ 
    &= (1 - \lambda^2 v_n^2) \frac{B_n}{D_n},
\end{align*}
where we again used the parameterization $\mu_n = v_n^{-1}$ to simplify the
expression. 

\emph{Asymptotic equivalent for $B^+\nl$.} For the nonlinear component of excess
bias, we have  
\begin{align}
    B^+\nl
    &=
    \EE_{x_0}[(f\nl(x_0) - L_X(x_0)^\top f\nl(X))^2] - \|(I - L_X(X)) f\nl(X)
      \|_2^2/n \notag \\ 
    &=
    \EE_{x_0}[f\nl(X)^\top L_X(x_0) L_X(x_0)^\top f\nl(X)] + \sigma^2\nl - \|(I
      - L_X(X)) f\nl(X) \|_2^2/n \notag \\ 
    &=
    f\nl(X)^\top 
    (X (\hSigma + \lambda I)^{-1} \Sigma (\hSigma + \lambda I)^{-1} X^\top / n)  
    f\nl(X) / n
    + \sigma^2\nl \notag \\
    &\quad - 
    f\nl(X)^\top
    (I - X (\hSigma + \lambda I)^{-1} X^\top / n)^2
    f\nl / n \notag \\
    &= 
    f\nl(X)^\top 
    ((\hSigma + \lambda I)^{-1} X^\top / n)^\top 
    \Sigma 
    ((\hSigma + \lambda I)^{-1} X^\top / n) 
    f\nl(X) 
    + \sigma^2\nl
    \label{eq:fnl-finite-term-1} \\
    &\quad - 
    f\nl(X)^\top (X (\hSigma + \lambda I)^{-1} X^\top / n - I)^\top (X (\hSigma
      + \lambda I)^{-1} X^\top / n - I) f\nl(X) /
      n. \label{eq:fnl-finite-term-2} 
\end{align}
In the second equality above, we used the facts that $\EE_{x_0}[f\nl(x_0) x_0] =
0$, and $\EE_{x_0}[f\nl(x_0)^2] = \sigma^2\nl$. We will now use Part (2) of Lemma
A.2 of \citet{patil2023generalized} get asymptotic equivalents for the first
term in \eqref{eq:fnl-finite-term-1} and \eqref{eq:fnl-finite-term-2}. We have 
\[
    f\nl(X)^\top 
    ((\hSigma + \lambda I)^{-1} X^\top / n)^\top 
    \Sigma 
    ((\hSigma + \lambda I)^{-1} X^\top / n)
    f\nl(X)
    \asympequi \frac{V_n}{D_n} \sigma^2\nl
\]
and
\[
    f\nl(X)^\top (X (\hSigma + \lambda I)^{-1} X^\top / n - I)^\top (X (\hSigma
    + \lambda I)^{-1} X^\top / n - I) f\nl(X) / n \asympequi 
    \lambda^2 v_n^2
    \Big(\frac{V_n}{D_n} \sigma^2\nl + \sigma^2\nl\Big).
\]
Thus, we obtain the following asymptotic equivalent for $B^+\nl$:
\[
    B^+\nl / \sigma^2
    \asympequi 
    (1 - \lambda^2 v_n^2) \Big(\frac{V_n}{D_n} + 1\Big)
    \frac{\sigma^2\nl}{\sigma^2}. 
\]
And hence, we have the overall asymptotic equivalent for $B^+$: 
\[
    B^+ / \sigma^2
    \asympequi (1 - \lambda^2 v_n^2) \bigg( \frac{B_n}{D_n} +
    \Big(\frac{V_n}{D_n} + 1 \Big) \frac{\sigma^2\nl}{\sigma^2} \bigg). 
\]
Combining this with the calculation above for the intrinsic random-X optimism
then applying the mapping $\omega$, followed by the dominated convergence
theorem to convert this to an expectation over $X$, yields the desired
equivalent in \eqref{eq:ridge-df-R-asympequi} and finishes the proof.  

\subsection{Proof of \cref{thm:ridgeless-asymptotics}}
\label{app:ridgeless-asymptotics}

The proof is similar to that in \cref{app:ridge-asymptotics}. We will make use
of various asymptotic equivalences for ridgeless regression from Section S.6.5
of \citet{patil2022mitigating}, collected in the lemma below.%

\begin{lemma}
    \label{lem:deter-approx-generalized-ridgeless}
     Under \cref{asm:ridge}.\ref{asm:ridge-features}, as $n,p \to \infty$ with
     $0 < \liminf_{n \to \infty} \gamma_n \le \limsup_{n \to \infty} \gamma_n < 
    \infty$, the following asymptotic equivalences hold:
    \begin{enumerate}
        \item
        First-order basic equivalence:
        \begin{equation}
            \label{eq:basic-asympequi-ridgeless}
            I - \hSigma \hSigma^\pinv
            \asympequi
            \begin{cases}
                0 & \gamma_n \le 1 \\
                (v(0; \gamma_n) \Sigma + I)^{-1} & \gamma_n > 1, 
            \end{cases}
        \end{equation}
        where $v_n =  v(0; \gamma_n) > 0$ is the unique solution to the
        fixed-point equation: 
        \begin{equation}
            \label{eq:ridgeless-fixed-point-v}
            \frac{1}{v_n}
            = \gamma_n \otr[\Sigma (v_n \Sigma + I)^{-1}]. 
        \end{equation}
        \item 
        Second-order variance-type equivalence:
        \begin{equation}
            \label{eq:detequi-mn2ls-genvar}
            \hSigma^{\pinv} \hSigma \hSigma^{\pinv}
            \asympequi
            \begin{dcases}
                \frac{\Sigma^{-1}}{1 - \gamma_n} & \gamma_n \le 1 \\
                \tv_v(0; \gamma_n)
                (v(0; \gamma_n) \Sigma + I_p)^{-1} 
                \Sigma
                (v(0; \gamma_n) \Sigma + I_p)^{-1} & \gamma_n > 1,
            \end{dcases}
        \end{equation}
        where $\tv_v(0; \gamma)$ is defined through $v(0; \gamma)$  via  
        \[
            \tv_v(0; \gamma)
            = 
            \frac{1}{v(0; \gamma)^{-2} - \gamma \otr[\Sigma^2 (v(0; \gamma)
              \Sigma + I_p)^{-2}]}. 
        \]
        \item 
        Second-order bias-type equivalence:
        \begin{align}
            &(I_p - \hSigma^{+} \hSigma)
            \Sigma
            (I_p - \hSigma^{+} \hSigma) \notag \\
            &\asympequi
            \begin{cases}
                0 & \gamma_n \le 1 \\
                (1 + \tv_b(0; \gamma_n))
                (v(0; \gamma_n) \Sigma + I_p)^{-1}
                \Sigma
                (v(0; \gamma_n) \Sigma + I_p)^{-1} & \gamma_n > 1,
            \end{cases}
            \label{eq:detequi-mn2ls-genbias}
        \end{align}
        where $v(0; \gamma)$ is as defined in
        \eqref{eq:ridgeless-fixed-point-v}, and $\tv_b(0; \gamma)$ is defined
        via $v(0; \gamma)$ by 
        \[
          \tv_b(0; \gamma)
          =  \frac{\gamma \otr[\Sigma^2 (v(0; \gamma) \Sigma + I_p)^{-2}]}{v(0;  
            \gamma)^{-2} - \gamma \otr[\Sigma^2 (v(0; \gamma) \Sigma +
            I_p)^{-2}]}.  
        \]
      \end{enumerate}
    \end{lemma}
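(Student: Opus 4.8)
The plan is to derive all three equivalences by passing to the limit $\lambda \to 0^+$ in the ridge resolvent equivalences of \cref{lem:ridge-resolvents-asympequi}. The first step is to recognize each ridgeless quantity as the $\lambda \to 0^+$ limit of the matching ridge resolvent. Writing $s$ for a generic eigenvalue of $\hSigma$, the eigenvalues $\lambda/(s+\lambda)$ of $\lambda(\hSigma + \lambda I)^{-1}$ converge to $1$ on $\mathrm{null}(\hSigma)$ and to $0$ on its orthogonal complement, so $\lambda(\hSigma + \lambda I)^{-1} \to I - \hSigma\hSigma^\pinv$; similarly $(\hSigma+\lambda I)^{-1}\hSigma(\hSigma+\lambda I)^{-1} \to \hSigma^\pinv = \hSigma^\pinv\hSigma\hSigma^\pinv$, and $\lambda^2(\hSigma+\lambda I)^{-1}\Sigma(\hSigma+\lambda I)^{-1} \to (I - \hSigma^\pinv\hSigma)\Sigma(I - \hSigma^\pinv\hSigma)$. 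Thus the three left-hand sides here are precisely the limits of the three left-hand sides in \cref{lem:ridge-resolvents-asympequi} (the bias-type one specialized to $A = \Sigma$).

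Next I would track the fixed point $v(\lambda; \gamma_n)$ as $\lambda \to 0^+$. Rewriting \eqref{eq:ridge-fixed-point-v} as $v\,g(v) = 1 - \lambda v$ with $g(v) = \gamma_n \otr[\Sigma(v\Sigma + I)^{-1}]$, the map $v \mapsto v\,g(v) = \gamma_n \otr[I - (v\Sigma + I)^{-1}]$ increases from $0$ (at $v=0$) to $\gamma_n$ (as $v \to \infty$). Hence when $\gamma_n > 1$ the equation admits a finite positive limit $v(0;\gamma_n)$ solving \eqref{eq:ridgeless-fixed-point-v}, while when $\gamma_n \le 1$ we get $v(\lambda; \gamma_n) \to +\infty$. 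In the regime $\gamma_n > 1$ I would substitute $v(0;\gamma_n)$ directly into the three ridge deterministic equivalents; by continuity of $v \mapsto \tv_v, \tv_b$ at the finite point $v(0;\gamma_n)$, this yields the stated right-hand sides. In the regime $\gamma_n \le 1$ I would expand as $v \to \infty$: the basic equivalent $(v\Sigma + I)^{-1} \to 0$; for the variance-type, $\otr[\Sigma^2(v\Sigma+I)^{-2}] \to v^{-2}$ gives $\tv_v \sim v^2/(1-\gamma_n)$ while $(v\Sigma+I)^{-1}\Sigma(v\Sigma+I)^{-1} \sim v^{-2}\Sigma^{-1}$, so their product tends to $\Sigma^{-1}/(1-\gamma_n)$; and for the bias-type the explicit factor $\lambda^2$ overwhelms the $v$-dependence, giving $0$. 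These match the stated limits in all cases.

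The main obstacle is justifying the interchange of the $\lambda \to 0^+$ limit with the $n,p \to \infty$ asymptotic-equivalence limit. I would resolve this using separation of the nonzero spectrum of $\hSigma$ from the origin. Since each case is treated with $\gamma_n$ bounded away from $1$ (the quantities degenerate at the hard edge $\gamma_n = 1$ in any event), the Bai--Silverstein smallest-eigenvalue bounds guarantee that the smallest nonzero eigenvalue of $\hSigma$ is bounded below by a positive constant almost surely: the nonzero eigenvalues of $\hSigma$ coincide with those of $XX^\top/n$ (when $\gamma_n > 1$) or $X^\top X/n$ (when $\gamma_n < 1$), whose bulk is supported away from $0$ by Marchenko--Pastur. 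This spectral gap makes the ridge resolvents equicontinuous in $\lambda$ on a neighborhood of $0$, uniformly in $n,p$, so the left-hand matrices converge as $\lambda \to 0^+$ uniformly while the deterministic-equivalent side converges by the continuity computations above, and the two limits may be exchanged. An equivalent route is to rederive each equivalence directly from the Stieltjes-transform fixed-point equations for the ridgeless resolvent, as carried out in Section~S.6.5 of \citet{patil2022mitigating}, whose calculations these arguments mirror.
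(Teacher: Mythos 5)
The paper never proves this lemma: it is imported wholesale, with the proof deferred to Section~S.6.5 of \citet{patil2022mitigating} (exactly as \cref{lem:ridge-resolvents-asympequi} is). So there is no in-paper argument to compare against; what you have written is a reconstruction, and it is a sound one that follows the standard derivation in that source. All the main ingredients are right: the identification of the three left-hand sides as $\lambda \to 0^+$ limits of the ridge resolvent combinations $\lambda(\hSigma+\lambda I)^{-1}$, $(\hSigma+\lambda I)^{-1}\hSigma(\hSigma+\lambda I)^{-1}$, and $\lambda^2(\hSigma+\lambda I)^{-1}\Sigma(\hSigma+\lambda I)^{-1}$; the dichotomy for the fixed point of \eqref{eq:ridge-fixed-point-v} (finite limit $v(0;\gamma_n)$ solving \eqref{eq:ridgeless-fixed-point-v} when $\gamma_n>1$, $v \to \infty$ when $\gamma_n \le 1$); the $v \to \infty$ expansions giving $\Sigma^{-1}/(1-\gamma_n)$ in the variance case; and, crucially, the recognition that the limit interchange is the real work. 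Your justification of that interchange is also the correct one: since $\asympequi$ tests against trace-norm-bounded sequences $C_p$, it suffices to control $\|\cdot\|_{\mathrm{op}}$ differences uniformly in $n$ for small $\lambda$, which the Bai--Yin smallest-nonzero-eigenvalue bound delivers (the $4+\delta$ moment bound and $r_{\min} > 0$ in \cref{asm:ridge}\ref{asm:ridge-features} are exactly what this requires), with $\gamma_n$ bounded away from $1$---a restriction the lemma statement implicitly carries as well, since its equivalents degenerate at the hard edge.

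Two small blemishes, neither fatal. First, your treatment of the bias-type limit in the underparameterized regime (``the explicit factor $\lambda^2$ overwhelms the $v$-dependence'') conflates the two sides: the deterministic side of \eqref{eq:bias-asympequi} carries no $\lambda^2$ factor, and it vanishes because $(v\Sigma+I)^{-1}\Sigma(v\Sigma+I)^{-1} \sim v^{-2}\Sigma^{-1} \to 0$ while $1+\tv_b(\lambda;\gamma_n,\Sigma) \to 1/(1-\gamma_n)$ stays bounded. Second, and in the same vein, the underparameterized cases of parts 1 and 3 need no limiting argument at all: for $\gamma_n < 1$ (bounded away from $1$), $\hSigma$ is almost surely invertible, so $I - \hSigma\hSigma^\pinv = 0$ exactly, and only the variance-type equivalence (where $\hSigma^\pinv = \hSigma^{-1}$ and the equivalent is the classical inverse-sample-covariance one) genuinely requires the $\lambda \to 0^+$ machinery there. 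With those repairs, your argument matches the derivation in the cited source, as you yourself note in closing.
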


Note: in \eqref{eq:basic-asympequi-ridgeless} and
\eqref{eq:detequi-mn2ls-genbias} above we use $0$ to denote the all-zero 
matrix in $\RR^{p \times p}$. Also, $v_n$ in \eqref{eq:ridgeless-fixed-point-v}
is the reciprocal of $\mu_n$ in \eqref{eq:ridgeless-fixed-point-mu}.       

We are now ready to obtain the asymptotic equivalents for the fixed-X and
random-X degrees of freedom of the ridgeless predictor below. 

\paragraph{Fixed-X degrees of freedom.}

Note that the smoother matrix for ridgeless regression can be written as $L_X(X)
= X \hSigma^{\pinv} X^\top / n$. The fixed-X degrees of freedom is thus: 
\[
    \df\f(\hf_0^\ridge) / n
    = \tr[L_X(X)] / n
    = \tr[\hSigma \hSigma^{\pinv}] / n.
\]
Now using \eqref{eq:basic-asympequi-ridgeless}, we have 
\[
    \df\f(\hf_0^\ridge) / n
    \asympequi 
    \begin{cases}
        \gamma_n \otr[I] = \gamma_n & \gamma_n < 1 \\
        \gamma_n (1 - \otr[(v_ \Sigma + I)^{-1}])
        = \gamma_n \gamma_n \otr[v_n \Sigma (v_n \Sigma + I)^{-1}] 
        = 1 & \gamma_n > 1,
    \end{cases}
\]
as desired.

\paragraph{Intrinsic random-X degrees of freedom.}

Since $L_{X}(x_0) = x_0^\top \hSigma^{\pinv} X^\top / n$ for ridgeless
regression, the intrinsic random-X optimism of ridgeless regression is: 
\begin{align}
    \opt\r\i(\hf_0^\ridge \,|\, X) / \sigma^2
    &= 2 \tr[\hSigma \hSigma^{\pinv}]/n -  \tr[(\hSigma \hSigma ^{\pinv})^2]/n +
      \tr[\Sigma \hSigma ^{\pinv} \hSigma \hSigma^{\pinv}] / n \notag \\ 
    &= \tr[\hSigma \hSigma^{\pinv}] / n + \tr[\Sigma \hSigma ^{\pinv} \hSigma
      \hSigma^{\pinv}] / n, \label{eq:opt-intrinsic-ridgeless} 
\end{align}
where we used the fact that $\hSigma^{\pinv} \hSigma \hSigma^{\pinv} =
\hSigma^{\pinv}$ (a property of the Moore-Penrose pseudoinverse). We now
use \eqref{eq:basic-asympequi-ridgeless} and \eqref{eq:detequi-mn2ls-genvar} to 
obtain the asymptotic equivalent for \eqref{eq:opt-intrinsic-ridgeless}. We will
do the underparameterized and overparameterized cases separately below.  

\emph{Underparameterized regime.} We have
\[
    \opt\r\i(\hf_0^\ridge \,|\, X) / \sigma^2
    \asympequi
    2 \gamma_n - \gamma_n + \frac{\gamma_n \otr[\Sigma \Sigma^{-1}]}{1 -
      \gamma_n} =
    \gamma_n + \frac{\gamma_n}{1 - \gamma_n}.
\]
Applying the mapping $\omega$, followed by the dominated convergence theorem,
yields the result.

\emph{Overparameterized regime.} We have
\begin{align*}
    \opt\r\i(\hf_0^\ridge \,|\, X) / \sigma^2
    &\asympequi
    \gamma_n (1 - \otr[(v_n \Sigma + I)^{-1}]) 
    + \gamma_n \frac{1}{v_n^{-2} - \gamma_n \otr[\Sigma^2 (v_n \Sigma +
      I_p)^{-2}]} \otr[\Sigma^2 (v_n \Sigma + I)^{-2}] \\ 
    &= \gamma_n - \gamma_n \mu_n \otr[(\Sigma + \mu_n I)^{-1}]
    + \frac{\gamma_n \otr[\Sigma^2 (\Sigma + \mu_n I)^{-2}]}{1 - \gamma_n
      \otr[\Sigma^2 (\Sigma + \mu_n I)^{-2}]} \\ 
    &= \gamma_n \otr[\Sigma (\Sigma + \mu_n I)^{-1}] + \frac{\gamma_n
      \otr[\Sigma^2 (\Sigma + \mu_n I)^{-2}]}{1 - \gamma_n \otr[\Sigma^2 (\Sigma 
      + \mu_n I)^{-2}]} \\ 
    &= \frac{\gamma_n \otr[\Sigma^2 (\Sigma + \mu_n I)^{-2}]}{1 - \gamma_n 
      \otr[\Sigma^2 (\Sigma + \mu_n I)^{-2}]} + 1 \\ 
    &= \frac{V_n}{D_n} + 1,
\end{align*}
where we used the parameterization $\mu_n = v_n^{-1}$ to simplify the
expressions on the second line and \eqref{eq:ridgeless-fixed-point-mu} on the
last line. Applying the mapping $\omega$ gives the desired result.

\paragraph{Emergent random-X degrees of freedom.}

As with ridge, we will derive an asymptotic equivalent for the excess bias
\smash{$B^+ = B^+(\hf^\ridge_0)$}, and then use the decomposition
\eqref{eq:opt-R-smoother-emergent-intrinsic} to obtain the final equivalent. 
Let us write $B^+ = B^+\li + B^+\nl + C^+$, as in
\eqref{eq:Bplus-decomposition-ridge} in the ridge proof. By similar arguments,
we have $C^+ \asympequi 0$. It thus suffices to obtain asymptotic equivalents
for $B^+\li$ and $B^+\nl$.     

\emph{Asymptotic equivalent for $B^+\li$.} For the linear component of excess
bias, we have 
\begin{align}
    B^+\li
    &= \EE_{x_0}[ (f(x_0) - L_{X}(x_0)^\top f(X))^2] - \| (I - L_{X}(X)) f(X)
      \|_2^2 / n \notag \\ 
    &= \EE_{x_0}[(\beta^{\top} (I - \hSigma \hSigma^{\pinv}) x_0)^2] - \|(I - X
      \hSigma^{\pinv} X^{\top}/ n) X\beta\|_2^2/n \notag\\ 
    &= \beta^{\top} (I - \hSigma \hSigma^{\pinv}) \Sigma (I - \hSigma
      \hSigma^{\pinv}) \beta - \|X(I - \hSigma \hSigma^{\pinv}) \beta\|_2^2/n
      \notag\\ 
    &= \beta^{\top} (I - \hSigma \hSigma^{\pinv}) \Sigma (I - \hSigma
      \hSigma^{\pinv}) \beta - \beta^{\top} (I - \hSigma \hSigma^{\pinv})
      \hSigma (I - \hSigma \hSigma^{\pinv}) \beta \notag \\ 
    &= \beta^{\top} (I - \hSigma \hSigma^{\pinv}) \Sigma (I - \hSigma
      \hSigma^{\pinv}) \beta, \label{eq:B+-ridgeless} 
\end{align}
where we use the fact that $\hSigma \hSigma^{\pinv} \hSigma = I$. Now using
\eqref{eq:detequi-mn2ls-genbias}, we can obtain the asymptotic equivalent for
\eqref{eq:B+-ridgeless} as follows:  
\[
    B^+\li
    \asympequi
    \begin{dcases}
        0 
        & \gamma_n \le 1 \\
        (1 + \tv_b) (v_n \Sigma + I)^{-1} \Sigma (v_n \Sigma + I)^{-1}
        = \frac{\mu_n^2 (\Sigma + \mu_n I)^{-1} \Sigma (\Sigma + \mu_n
          I)^{-1}}{1 - \gamma_n \otr[\Sigma^2 (\Sigma + \mu_n I)^{-2}]} 
        = \frac{B_n}{D_n} 
        & \gamma_n > 1 ,
    \end{dcases}
\]
where in the last line we use the parameterization $\mu_n = v_n^{-1}$.

\emph{Asymptotic equivalent for $B^+\nl$.} For the nonlinear component of excess
bias, we have 
\begin{align}
    B^+\nl
    &=
    \EE_{x_0}[(f\nl(x_0) - L_X(x_0)^\top f\nl(X))^2] - \|(I - L_X(X)) f\nl(X)
      \|_2^2/n \notag \\ 
    &=
    \EE_{x_0}[f\nl(X)^\top L_X(x_0) L_X(x_0)^\top f\nl(X)] + \sigma^2\nl - \|(I
      - L_X(X)) f\nl(X) \|_2^2/n \notag \\ 
    &=
    f\nl(X)^\top 
    (X \hSigma^{\dagger} \Sigma \hSigma^{\dagger} X^\top / n) 
    f\nl(X) / n
    + \sigma^2\nl
    - 
    f\nl(X)^\top
    (I - X \hSigma^{\dagger} X^\top / n)^2
    f\nl / n \notag \\
    &= 
    f\nl(X)^\top 
    (\hSigma^{\dagger} X^\top / n)^\top 
    \Sigma 
    (\hSigma^{\dagger} X^\top / n) 
    f\nl(X) 
    + \sigma^2\nl
    \label{eq:fnl-finite-term-1-ridgeless} \\
    &\quad - 
    f\nl(X)^\top (X \hSigma^{\dagger} X^\top / n - I)^\top (X \hSigma^{\dagger}
      X^\top / n - I) f\nl(X) / n. \label{eq:fnl-finite-term-2-ridgeless} 
\end{align}
As with ridge regression, in the second equality above, we used the fact that
$\EE_{x_0}[f\nl(x_0) x_0] = 0$ and $\EE_{x_0}[f\nl(x_0)^2] = \sigma^2\nl$. As
shown in the proof of Theorem 1 of \citet{patil2023generalized}, the two
quadratic forms in \eqref{eq:fnl-finite-term-1-ridgeless} and
\eqref{eq:fnl-finite-term-2-ridgeless} concentrate around the traces. Thus, for
\eqref{eq:fnl-finite-term-1-ridgeless}, we have 
\begin{align*}
    f\nl(X)^\top 
    (\hSigma^{\dagger} X^\top / n)^\top 
    \Sigma 
    (\hSigma^{\dagger} X^\top / n)
    f\nl(X) + \sigma^2\nl
    &\asympequi 
    \tr[(\hSigma^{\dagger} X^\top / n)^\top \Sigma (\hSigma^{\dagger} X^\top /
      n)] \sigma^2\nl + \sigma^2\nl \\ 
    &= (\tr[\hSigma^{\dagger} \Sigma \hSigma^{\dagger} \hSigma] / n + 1)
      \sigma^2\nl \\ 
    &= (\tr[\hSigma^{\dagger} \Sigma] / n + 1) \sigma^2\nl, 
\end{align*}
where we used the fact that $\hSigma^{\dagger} \hSigma \hSigma^{\dagger} =
\hSigma^{\dagger}$ in the third line. Similarly, for
\eqref{eq:fnl-finite-term-2-ridgeless}, we have 
\begin{align*}
    f\nl(X)^\top (X \hSigma^{\dagger} X^\top / n - I)^\top (X \hSigma^{\dagger}
  X^\top / n - I) f\nl(X) / n 
    &\asympequi 
    \tr[(X \hSigma^{\dagger} X^\top / n - I)^2] / n \cdot \sigma^2\nl \\
    &= \tr[X \hSigma^{\dagger} X^\top / n - I] / n \cdot \sigma^2\nl \\
    &= (\tr[\hSigma^{\dagger} \hSigma] / n - 1) \sigma^2\nl,
\end{align*}
where we used the fact that $X \hSigma^{\dagger} X^\top / n - I$ is an
idempotent matrix in the second line. Combining the two asymptotic equivalents,
we thus have 
\[
    B^+\nl / \sigma^2
    \asympequi 
    \tr[\hSigma^{\dagger} \Sigma] / n \cdot \frac{\sigma^2\nl}{\sigma^2}
    + \tr[\hSigma^{\dagger} \hSigma] / n \cdot \frac{\sigma^2\nl}{\sigma^2}.
\]
Similar to the intrinsic analysis, we obtain the following asymptotic equivalent
for $B^+\nl$:  
\begin{align*}
    B^+\nl / \sigma^2
    \asympequi
    \begin{dcases}
        \Big( \gamma_n + \frac{\gamma_n}{1 - \gamma_n} \Big)
        \frac{\sigma^2\nl}{\sigma^2} & \gamma_n \le 1 \\ 
        \Big( \frac{V_n}{D_n} + 1 \Big) \frac{\sigma^2\nl}{\sigma^2} & \gamma_n
        > 1. 
    \end{dcases}
\end{align*}
Combining this with the results for the intrinsic random-X optimism, and passing
through $\omega$ and subsequent application of the dominated convergence 
theorem, completes the proof.   

\subsection{Proof of \cref{prop:ridgeless-monotonicity}}
\label{app:ridgeless-monotonicity}

Because $\omega$ is strictly increasing, in order to analyze the monotonicity of 
the asymptotic equivalents for normalized degrees of freedom in $\gamma_n$, it
suffices to analyze the monotonicity of the equivalents for random-X optimism in
$\gamma_n$, respectively. We do this for the intrinsic and emergent cases below.      

\paragraph{Intrinsic random-X optimism.}

There are two regimes to examine.

\emph{Underparameterized regime.}
When $\gamma_n<1$, from the proof of \cref{thm:ridgeless-asymptotics}, we 
have that 
\begin{align*}
    \opt\r\i(\hf_0^\ridge \,|\, X) / \sigma^2 
    \asympequi 
    \gamma_n + \frac{\gamma_n}{1 - \gamma_n},
\end{align*}
which is a strictly increasing function in $\gamma_n\in (0,1)$, with the
following boundary limits: 
\[
    \lim_{\gamma_n \to 0^+} \opt\r\i(\hf_0^\ridge \,|\, X)/\sigma^2 = 0, 
    \quad \text{and} \quad 
    \lim_{\gamma_n \to 1^-} \opt\r\i(\hf_0^\ridge \,|\, X)/\sigma^2 = \infty,  
\]
Consequently, $\df\r\i(\hf_0^\ridge) / n$ is increasing from 0 to 1 in
$\gamma_n\in (0,1)$. 

\emph{Overparameterized regime.} When $\gamma_n>1$, by Lemma F.11 in
\citet{du2023subsample}, the solution $v(0;\gamma_n)$ to the fixed point
equation \eqref{eq:ridgeless-fixed-point-v} is finite. Then, it follows from the
proof of \cref{thm:ridgeless-asymptotics} that 
\[
    \opt\r\i(\hf_0^\ridge \,|\, X) / \sigma^2 \asympequi \tv_v(0;\gamma_n). 
\]
Next, we study the monotonicity of $\tv$. Taking the derivative with respect to
$\gamma_n$ yields 
\begin{align*}
    &\frac{\partial \tv_v(0;\gamma_n)}{\partial\gamma_n} \\ 
    &= \frac{\otr[\Sigma^2 (v(0; \gamma_n) \Sigma + I)^{-2}]}{\big(v(0;
      \gamma_n)^{-2} - \gamma_n\otr[\Sigma^2 (v(0; \gamma_n) \Sigma +
      I)^{-2}]\big)^3}\\ 
    &\qquad \cdot \big[\big(v(0; \gamma_n)^{-2} - \gamma_n\otr[\Sigma^2 (v(0;
      \gamma_n) \Sigma + I)^{-2}]\big)^3 - 2 \gamma_n v(0;\gamma_n)^{-3}
      \otr[\Sigma (v(0; \gamma_n) \Sigma + I)^{-1}]\big]\\ 
    &=\frac{\otr[\Sigma^2 (v(0; \gamma_n) \Sigma + I)^{-2}]}{\big(v(0;
      \gamma_n)^{-2} - \gamma_n\otr[\Sigma^2 (v(0; \gamma_n) \Sigma +
      I)^{-2}]\big)^3}\\ 
    &\qquad \cdot \big[\big(v(0; \gamma_n)^{-2} - \gamma_n\otr[\Sigma^2 (v(0;
      \gamma_n) \Sigma + I)^{-2}]\big)^3 - 2 v(0; \gamma_n)^{-4}\big]\\ 
    &=\frac{\otr[\Sigma^2 (v(0; \gamma_n) \Sigma + I)^{-2}]}{\big(v(0;
      \gamma_n)^{-2} - \gamma_n\otr[\Sigma^2 (v(0; \gamma_n) \Sigma +
      I)^{-2}]\big)^3}\\ 
    &\qquad \cdot \big[- v(0; \gamma_n)^{-4} - v(0; \gamma_n)^{-2} - \big(v(0;
      \gamma_n)^{-2} - \gamma_n \otr[\Sigma^2 (v(0; \gamma_n) \Sigma +
      I)^{-2}]\big) \\
    &\qquad\qquad \cdot \gamma_n \otr[\Sigma^2 (v(0; \gamma_n) \Sigma + 
      I)^{-2}]\big] \\ 
    &\leq 0.
\end{align*}
Here, we use the fact from Lemma F.11 (3) in \citet{du2023subsample} that 
\[
\frac{1}{v(0;\gamma)^2} - \gamma \otr[\Sigma^2 (v(0; \gamma) \Sigma + I)^{-2}]
\geq 0,
\] 
with equality obtained only when $\gamma = \infty$. This indicates that
$\opt\r\i(\hf_0^\ridge \,|\, X)$ is strictly increasing in $\gamma_n$ for
$\gamma_n\in (1,\infty)$, with  
\[
    \lim_{\gamma_n \to 1^+} \opt\r\i(\hf_0^\ridge \,|\, X)/\sigma^2 = \infty,  
    \quad \text{and} \quad 
    \lim_{\gamma_n \to \infty} \opt\r\i(\hf_0^\ridge \,|\, X)/\sigma^2 = 0.
\]
Consequently, $\df\r\i(\hf_0^\ridge) / n$ is decreasing from 1 to 0 in
$\gamma_n\in (1,\infty)$. 

\paragraph{Emergent random-X optimism.}

From the proof of \cref{thm:ridgeless-asymptotics}, when $\gamma_n < 1$, we have
\[
    \opt\r(\hf_0^\ridge \,|\, X) / \sigma^2
    \asympequi
    (\gamma_n + \gamma_n / (1 - \gamma_n)) (1 + \sigma^2\nl/\sigma^2), 
\]
which is strictly increasing in $\gamma_n \in (0, 1)$ with the following
boundary limit: 
\[
    \lim_{\gamma_n \to 1^-}
    \opt\r(\hf_0^\ridge \,|\, X) / \sigma^2
    = \infty.
\]
Consequently, $\df\r(\hf_0^\ridge) / n$ is increasing from 0 to 1 on
$\gamma_n\in (0,1)$ and maximized at $\gamma_n = 1$. This finishes the proof. 

\subsection{Proof of \cref{prop:lasso-monotonicity-nonnegativity}}
\label{app:lasso-monotonicity-nonnegativity}

We will first parameterize the nonlinear system in
\eqref{eq:lasso-fixed-point-tau} and \eqref{eq:lasso-fixed-point-mu} slightly 
differently by introducing a new variable $a = \mu / \tau$. Namely, we let
$(\tau, a)$ solve:
\begin{empheq}{align}
  \tau^2 &= \sigma^2 + \gamma \EE [(\soft(\tau H + B ; {a\tau}) - B)^2],
  \label{eq:interpolators-tau-reformulation-supplement} \\ 
  \lambda &= a \tau (1 - \gamma \EE [\soft' (\tau H + B ;  {a\tau}) ]).
  \label{eq:interpolators-a-reformulation-supplement}
\end{empheq}
The nonlinear system in \eqref{eq:interpolators-tau-reformulation-supplement}
and \eqref{eq:interpolators-a-reformulation-supplement} is similar to the one in
\citet{bayati2011lasso}. When $B=0$ (almost surely), we denote its solution by
$(\tau_0, a_0)$. Before we start the proof, we will collect the following two
properties of soft-thresholding (the proximal operator for the $\ell_1$ norm)
for $a > 0$:   
\begin{align}
  \soft(x ; \kappa) &= \tfrac{1}{a} \soft(ax ; a \kappa), \label{eq:scale-prox} \\
  \soft'(x ; \kappa) &= \soft'(ax ; a \kappa). \label{eq:scale-prox-prim}
\end{align}
These are straightforward to check (see, e.g., Lemma B.2 in
\citet{wang2020bridge}). We will split the proof below into two parts, following 
the two statements in the proposition. As before, since $\omega$ is strictly 
increasing, it suffices to show the desired properties on the optimism scale. 

\paragraph{Monotonicity of intrinsic random-X optimism.}

Combining \eqref{eq:lasso-df-F-asympequi} and \eqref{eq:lasso-df-R-i-asympequi},
we can write 
\[
     \opt\r\i(\hf_\lambda^\lasso \,|\, X, y)
     \asympequi (1 - (1 - \df\f\i(\hf_\lambda^\lasso) / n))^2 \tau_0^2. 
\]
Below, we will argue that each of $\tau_0^2$ and $\df\f\i(\hf_\lambda^\lasso) /
n$ are monotonic in $\lambda$, with limits $\tau_0^2 \to \sigma^2$ and 
$\df\f\i(\hf_\lambda^\lasso) / n\to 0$ as $\lambda \to \infty$. 

\emph{Monotonicity of $\tau_0^2$.} We first argue below that $\tau_0^2$ is
monotonically nonincreasing in $\lambda$. We have 
\[
    \tau_0^2
    = \sigma^2 
    + \gamma \EE [(\soft(\tau H ;  {a_0\tau_0}))^2]
    = \sigma^2 
    + \gamma \tau_0^2 \EE [(\soft(H ;  {a_0}))^2],
\]
where we used \eqref{eq:scale-prox} in the second equality above. Rearranging,
we get that 
\[
    \tau_0^2 = \frac{\sigma^2}{1 - \gamma \EE[(\soft(H;a_0))^2] } .
\]
Now, observe that the right-hand side is monotonically nonincreasing in $a_0$,
which follows because $x \mapsto |\soft(u;x)|$ is noncreasing in $x$ for fixed
$u$, and $a_0$ is nondecreasing in $\lambda$ from Corollary 1.7 of
\citet{bayati2011lasso}. This implies that $\tau_0^2$ is monotonically
nonincreasing in $\lambda$. Lastly, by Corollary 1.7 of
\citet{bayati2011lasso} once again, we have $a_0 \to \infty$ as $\lambda \to 
\infty$, and hence $\EE[\soft(H;a_0)^2] \to 0$, and $\tau_0^2 \to \sigma^2$
as $\lambda \to \infty$.    

\emph{Monotonicity of $\df\f\i(\hf_\lambda^\lasso) / n$.} To show that
$\df\f\i(\hf_\lambda^\lasso) / n$ is decreasing in $\lambda$, observe that  
\[
    \frac{\df\f\i(\hf_\lambda^\lasso)}{n} 
    \asympequi \gamma \EE[\soft'(\tau_0 H; a_0 \tau_0)].
\]
To see this, note from \eqref{eq:lasso-fixed-point-mu}, after replacing $\mu_0$
with $a_0 \tau_0$, that 
\[
    1 - \lambda / \mu_0
    = \gamma \EE [\soft' (\tau_0 H; a_0 \tau_0)]
\]
Using \eqref{eq:scale-prox-prim}, we have $\gamma \EE[\soft'(\tau_0 H; a_0
\tau_0)] = \gamma \EE[\soft'(H; a_0)]$. Also, $\EE[\soft'(H; a_0)] = \PP(|H| >
a_0)$, which is nonincreasing in $a_0$. Using the monotonically nondecreasing    
behavior of $a_0$ in $\lambda$ from Corollary 1.7 of \citet{bayati2011lasso}, we 
then have the desired monotonicity. Lastly, that $\df\f\i(\hf_\lambda^\lasso) /
n \to 0$ as $\lambda \to \infty$ follows from $\lambda \to \mu_0 \to 1$, which
can be checked from \eqref{eq:lasso-fixed-point-mu}.

\paragraph{Nonnegativity of emergent minus intrinsic optimism.} 

From \eqref{eq:lasso-df-F-asympequi}, we can write:
\[
     \opt\r\i(\hf_\lambda^\lasso \,|\, X, y)
     \asympequi \tau_0^2 (1 - (1 - \df\f\i(\hf_\lambda^\lasso) / n)))^2.
\]
Similarly, we can write the emergent optimism as:
\[
     \opt\r(\hf_\lambda^\lasso \,|\, X, y)
     \asympequi \tau^2 (1 - (1 - \df\f(\hf_\lambda^\lasso) / n))^2. 
\]
To show the asymptotic equivalent for $\opt\r(\hf_\lambda^\lasso \,|\, X, y)$ is
no less than that for $\opt\r\i(\hf_\lambda^\lasso \,|\, X, y)$, we will argue
that $\tau^2 \ge \tau_0^2$ and $\df\f(\hf_\lambda^\lasso) \ge
\df\f\i(\hf_\lambda^\lasso)$, below.  

\emph{Nonnegativity of $\df\f(\hf_\lambda^\lasso) -
  \df\f\i(\hf_\lambda^\lasso)$.} The two quantities we need to compare are: 
\[
    {\df\f}(\hf_\lambda^\lasso) / {n}
    \asympequi
    \gamma \EE[\soft'(\tau H + B; a \tau)]
    \quad
    \text{and}
    \quad
    {\df\f\i}(\hf_\lambda^\lasso) / {n}
    \asympequi
    \gamma \EE[\soft'(\tau_0 H; a_0 \tau_0)].
\]
Using \eqref{eq:scale-prox-prim}, we first rewrite the asymptotic equivalents in
the display above as: 
\[
    {\df\f}(\hf_\lambda^\lasso) / {n}
    \asympequi
    \gamma \EE[\soft'(H + B / \tau; a)]
    \quad
    \text{and}
    \quad
    {\df\f\i}(\hf_\lambda^\lasso) / {n}
    \asympequi
    \gamma \EE[\soft'(H; a_0)].
\]
Observe now that for $\tau \ge 0$, assuming $a \le a_0$, we have
\begin{align*}
    \EE[\soft'(H + B / \tau; a)]
    \ge \EE[\soft'(H; a)]
    \ge \EE[\soft'(H; a_0)],
\end{align*}
The first inequality can be explained as follows: $\EE[\soft'(H + b; a) \ge
\PP(|H| > a) =  \EE[\soft'(H; a)]$ for any fixed $b$ and hence $\EE[\soft'(H + B
/ \tau; a)] \ge \EE[\soft'(H; a)]$ by conditioning on the random variable $B$
which is independent of $H$. Thus we get the desired claim that
$\df\f(\hf_\lambda^\lasso) \geq \df\f\i(\hf_\lambda^\lasso)$ assuming $a \le
a_0$, which we will show in the next part, along with $\tau \ge \tau_0$. 

\emph{Nonnegativity of $\tau^2 - \tau_0^2$.} We consider solving the system
for emergent parameters \eqref{eq:interpolators-tau-reformulation-supplement},
\eqref{eq:interpolators-a-reformulation-supplement}. We will solve these using
the fixed point iteration algorithm initialized at the solution $\tau_0, a_0$ of
the system with intrinsic parameters. 

Namely, we will start with $a^{(0)} = a_0$ and $\tau^{(0)} = \tau_0$. If $a_0$
and $\tau_0$ solve the emergent system, then we are done. Suppose they do not. 
Then, we first solve \eqref{eq:interpolators-tau-reformulation-supplement} with
fixing $a = a^{(0)}$ and solving for $\tau$. Call this solution $\tau^{(1)}$. We
claim that $\tau^{(1)} \ge \tau^{(0)} = \tau_0$. Suppose in order to achieve a
contradiction that $\tau^{(1)} < \tau^{(0)}$. Rewrite
\eqref{eq:interpolators-tau-reformulation-supplement} after normalizing with 
respect to $\tau^2$:  
\[
    1 = \frac{\sigma^2}{\tau^2} 
    + \gamma \EE\bigg[\bigg(\soft\Big(H + \frac{B}{\tau}; a\Big) -
    \frac{B}{\tau}\bigg)^2\bigg]. 
\]
From Lemma 12 in \citet{weng2018overcoming}, we know that the function that
multiplies $\gamma$ in the display above is a decreasing function of $\tau$. 
Note that the function $h:x\mapsto \EE[(\soft(H + x B; a) - xB)^2] $ is an even 
function, as $h(x) = \EE[(\soft(-H + x B; a) - xB)^2] = \EE[(\soft(H + (-x) B;
a) - (-x)B)^2] = h(-x)$. From Lemma 6 in \citet{weng2018overcoming}, we have that
$h(x)$ is increasing in $x$. Thus, the same function in the above display has a
larger value when $B \neq 0$. Thus, if $\tau^{(1)} < \tau^{(0)}$, then both of
the terms on the right-hand side of the display above increase. But we already
know that $a = a^{(0)}$ satisfies the equation with $\tau^{(0)}$. This supplies
the desired contradiction.  

Now, fix this $\tau^{(1)}$, and solve
\eqref{eq:interpolators-a-reformulation-supplement} for $a$. Call this solution
$a^{(1)}$. As before, we claim $a^{(1)} \le a^{(0)} = a_0$. This follows again
from a contradiction-based argument because if $a^{(1)} > a^{(0)}$, then both
the terms on the right-hand side of
\eqref{eq:interpolators-a-reformulation-supplement} go up because the term
multiplying $\gamma$ is decreasing in $a$ (since we can eliminate $\tau$) and 
has a larger value when $B \neq 0$. 

Iterating the above argument, we obtain two monotonic nonnegative sequences
$a^{(m)}$, $\tau^{(m)}$. When $\tau^{(m)}=\infty$ one has $a^{(m)} =0$,
and when $a^{(m)} =0$, one has $\tau^{(m+1)}=\infty$. Thus, we have $a \ge 
0$ and $\tau \le \infty$, which indicates that the process terminates as
$m\to\infty$, and $\tau \geq \tau_0$, $a \leq a_0$. 

\subsection{Proof of \cref{thm:lassoless-asymptotics}}
\label{app:lassoless-asymptotics}

In the underparameterized regime (when $\gamma \leq 1$), the statements follow 
from \cref{thm:ridgeless-asymptotics} since both predictors are simply least
squares in this regime. In the overparameterized regime (when $\gamma > 1$), the
results follow by sending $\lambda \to 0^+$ in the results of
\cref{thm:lasso-asymptotics}. The validity of this limit, along with the
existence and uniqueness of the solution to the nonlinear system
\eqref{eq:lassoless-fixed-point-tau} and \eqref{eq:lassoless-fixed-point-mu} is
shown by \citet{li_wei_2021}. 

\subsection{Proof of \cref{prop:lassoless-monotonicity}}
\label{app:lassoless-monotonicity}

For $\gamma \in (1, \infty)$, the parameters $(\tau_0, a_0)$ solve the system: 
\begin{empheq}{align*}
    \tau_0^2 &= \sigma^2 + \gamma \EE \big[\big(\soft\big(\tau_0 H ; {a_0\tau_0}
    \big) \big)^2\big] \\
    1 &= \gamma \EE\big[\soft' \big(\tau_0 H ; {a_0\tau_0}\big) \big].
\end{empheq}
Here recall that $\soft'(x; y)$ is the derivative of $\soft(x; y)$ in $x$. This
can be simplified to:
\begin{empheq}{align*}
    1 &= \sigma^2 / \tau_0^2 + \gamma \EE \big[\big(\soft\big(H ; {a_0} \big) 
    \big)^2\big] \\
    1 &= \gamma \EE\big[\soft' \big(H ; {a_0}\big) \big].
\end{empheq}
This leads to:
\[
    \tau_0^2 = \frac{\sigma^2}{1 - \gamma \EE[(\soft(H; a_0))^2]},
\]
where $a_0$ solves:
\[
    1 = \gamma \EE[\soft'(H; a_0)].
\]
We first conclude that $a_0$ is monotonically increasing in $\gamma \in (1,
\infty)$ and ranges from $0$ to $\infty$. This follows because the function $x
\mapsto |\soft(u;x)|$ is decreasing in $x$, for fixed $u$. In particular,   
\[
    \EE[\soft'(H; a_0)]
    = \PP(|H| > a_0)
    = 2 (1 - \Phi(a_0))
    = \frac{1}{\gamma}.
\]
This leads to
\[
    a_0 = \Phi^{-1}\Big(\frac{2 \gamma - 1}{2 \gamma}\Big). 
\]
Since both the functions $\Phi^{-1}$ and $\tfrac{2\gamma - 1}{2\gamma}$ are 
monotonically increasing $\gamma$, we have that the composition is monotonically 
increasing in $\gamma$. When $\gamma = 1$, we have $a_0 = 0$ and when $\gamma =
\infty$, we have $a_0 = \infty$. 

Next we will argue that $\gamma \mapsto \gamma \EE[(\soft(H; a))^2]$ decreases
in $\gamma \in (1, \infty)$ and ranges from $1$ to $0$. We do by first
substituting for $\gamma$ as $\frac{1}{\EE[\soft'(H; a)]}$. The goal then
reduces to arguing that the function 
\[
    \gamma \mapsto \frac{\EE[(\soft(H; a))^2]}{\EE[\soft'(H; a)]} 
\]
is decreasing in $\gamma$. Since $a$ is increasing in $\gamma$, it suffices to
argue that the function 
\[
    y \mapsto\frac{\EE[(\soft(H; y))^2]}{\EE[\soft'(H; y)]} 
\]
is decreasing in $y$. This follows from \cref{lem:monotonicity-ratio-prox1}
below, and finishes the proof.

\begin{lemma}
    \label{lem:monotonicity-ratio-prox1}
    For $H \sim \cN(0, 1)$, the function
    \[
        y \mapsto \frac{\EE[\soft(H; y)^2]}{\EE[\soft'(H; y)]}
    \]
    is monotonically decreasing in $y$. Here, recall, the derivative of $\soft$
    is understood to be with respect to its first argument. 
\end{lemma}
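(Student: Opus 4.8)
The plan is to reinterpret the ratio probabilistically and then apply a standard monotone-hazard argument, rather than grinding through the Gaussian integrals directly. Write $W = |H|$, a half-normal variable. Since $\soft'(H;y)$ equals $1$ on the event $\{|H|>y\}$ and $0$ on $\{|H|<y\}$, the denominator is $\EE[\soft'(H;y)] = \PP(W > y)$. Likewise $\soft(H;y)^2 = (|H|-y)^2$ on $\{|H|>y\}$ and vanishes otherwise, so the numerator is the expectation of $(W-y)^2$ over the event $\{W>y\}$. Dividing, the quantity in question is exactly the conditional second moment of the overshoot of $W$ above the threshold,
\[
R(y) = \EE\big[(W - y)^2 \,\big|\, W > y\big], \qquad y \ge 0.
\]
I would then show $R$ is nonincreasing via a stochastic-ordering argument for the residual (overshoot) distribution of $W$.

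The key structural fact is that $W$ has a nondecreasing hazard rate. Its density $f_W(w) = \sqrt{2/\pi}\, e^{-w^2/2}$ on $[0,\infty)$ is log-concave, which forces a log-concave survival function and hence a nondecreasing hazard rate $\lambda = f_W/\bar F_W$ (equivalently, one checks directly that the standard Gaussian reverse Mills ratio $\phi/(1-\Phi)$ is increasing). Given this, the textbook characterization of residual life applies: for every fixed $t \ge 0$,
\[
\PP(W - y > t \mid W > y) = \frac{\bar F_W(y+t)}{\bar F_W(y)} = \exp\!\Big(-\!\int_y^{y+t} \lambda(s)\,ds\Big)
\]
is nonincreasing in $y$, because the integrated hazard $\int_y^{y+t}\lambda(s)\,ds$ is nondecreasing in $y$ when $\lambda$ is nondecreasing. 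In words, the overshoot $(W - y \mid W > y)$ is stochastically decreasing in the threshold $y$.

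To convert this stochastic ordering into monotonicity of $R$, I would use the layer-cake identity for the second moment of the nonnegative overshoot:
\[
R(y) = \EE\big[(W-y)^2 \mid W > y\big] = \int_0^\infty 2t \,\PP(W - y > t \mid W > y)\,dt.
\]
Each integrand is nonnegative and, by the previous step, nonincreasing in $y$, so the integral is nonincreasing in $y$, which is the claim. Note the same conclusion holds for $\EE[\psi(W-y)\mid W>y]$ for any nondecreasing $\psi$, so the argument uses nothing about $W$ beyond log-concavity of its density.

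The main obstacle here is essentially the recognition step: once the ratio is rewritten as a conditional overshoot moment, monotonicity is a consequence of classical facts about increasing-hazard (log-concave) distributions. In the writeup I would be careful about exactly two minor points: justifying that log-concavity of $f_W$ yields a monotone hazard rate (either by citing a standard reference on log-concave probability, or by verifying the reverse Mills ratio is increasing directly), and justifying the layer-cake interchange, which is immediate since all quantities are nonnegative and $W$ has finite second moment. A brute-force alternative would be to express $R(y) = N(y)/D(y)$ with $D(y)=2(1-\Phi(y))$ and $N(y)=2(1+y^2)(1-\Phi(y)) - 2y\phi(y)$ and show $N'D - ND' \le 0$; this is elementary but messier, so I would keep it only as a fallback.
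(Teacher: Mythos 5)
Your proposal is correct, and it takes a genuinely different route from the paper. The paper works by brute force: it writes the numerator and denominator explicitly via truncated-Gaussian moment formulas, $f(y) = 2[(1+y^2)(1-\Phi(y)) - y\varphi(y)]$ and $g(y) = f(y) + 2y(\varphi(y) - y(1-\Phi(y)))$ (reached via Stein's lemma applied to $\EE[H\,\soft(H;y)]$), then differentiates the ratio and shows $f'(y)g(y) - f(y)g'(y) = 4c(y)g(y) \leq 0$, where $c(y) = y(1-\Phi(y)) - \varphi(y)$ is shown nonpositive by noting $c' = 1 - \Phi \geq 0$ and $c(y) \to 0$ as $y \to \infty$. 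This is exactly the ``fallback'' you sketched at the end, down to the same $N$ and $D$. Your main argument instead recognizes the ratio as the conditional overshoot moment $\EE[(W-y)^2 \mid W > y]$ for the half-normal $W = |H|$ (using $\EE[\soft'(H;y)] = \PP(|H|>y)$ directly, with no need for Stein's lemma, and $\soft(H;y)^2 = (|H|-y)^2\ind\{|H|>y\}$), and then deduces monotonicity from the increasing-failure-rate property of log-concave densities together with the layer-cake identity. Both arguments are sound; yours buys generality and transparency---it shows the same conclusion for $\EE[\psi(W-y)\mid W>y]$ with any nondecreasing $\psi$ and any log-concave $W$, which makes clear that nothing Gaussian-specific is at work---while the paper's computation is self-contained calculus requiring no stochastic-ordering background. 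Two small points to handle in a full writeup, both of which you already flagged: the hazard rate of $W$ on $[0,\infty)$ coincides with the Gaussian reverse Mills ratio $\varphi/(1-\Phi)$, whose monotonicity should be cited or verified (note the paper's quantity $c(y)$ is, up to sign, exactly the statement that this ratio exceeds $y$, so the two proofs secretly lean on the same fact); and your argument as stated yields nonincreasing rather than strictly decreasing, though strictness follows from strict increase of the hazard---and in any case the paper's own proof likewise concludes only with a nonpositive derivative, which is all the application requires.
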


\begin{proof}
Denote the numerator and the denominator by
\begin{align*}
    f(y) &= \EE[\soft(H; y)^2] = 2\EE[(H-y)^2\ind\{H>y\}] \\ 
    g(y) &= \EE[\soft'(H; y)] = \EE[H\soft(H; y)] = f(y) +
           2y\EE[(H-y)\ind\{H>y\}]. 
\end{align*}
Here, in the second equality of the second row, we use Stein's lemma.

Recall for $X\sim\cN(0,1)$, the truncated normal distribution admits 
\begin{align*}
    \EE[X\mid X>a] &= \varphi (a )/(1-\Phi(a))   \\
    \Var (X\mid X>a) &= 1+a \varphi (a )/(1-\Phi(a))-
         (\varphi (a)/(1-\Phi(a)))^{2} \\
    \EE[X^2\mid X>a] &= \Var(X\mid X>a) +\EE[X\mid X>a]^2\\ 
    &= 1+a \varphi (a )/(1-\Phi(a))-(\varphi (a )/(1-\Phi(a)))^{2} + (
      \varphi(a)/(1-\Phi(a)))^2\\
    &= 1+a \varphi (a )/(1-\Phi(a)).
\end{align*}
Then we have
\begin{align*}
    \EE[(H-y)\ind\{H>y\}] &= \varphi(y)  - y(1-\Phi(y))\\
    f(y) &= 2 (\EE[H^2\mid H>y]\PP(H>y) - 2 y\EE[H\mid H>y]\PP(H>y) +
           y^2(1-\Phi(y)) \\ 
    &= 2[ (1-\Phi(y)) +y \varphi (y ) - 2y \varphi(y)  +y^2(1-\Phi(y)]\\   
    &= 2[- y \varphi (y )  + (1+y^2)(1-\Phi(y))]\\
    g(y) &= 2\EE[(H-y)^2\ind\{H>y\}]+ 2y\EE[(H-y)\ind\{H>y\}] \\
    &=f(y) + 2y\EE[(H-y)\ind\{H>y\}]\\
    &= f(y) + \underbrace{2y( \varphi(y)  - y(1-\Phi(y)))}_{h(y)}.
\end{align*}
Because $\Phi(y) = \varphi(y)$ and $\varphi'(y) = -y\varphi(y)$, we further have 
\begin{align*}
    f'(y)g(y) - f(y)g'(y) &= f'(y)[f(y)+h(y)] - f(y) [f'(y) + h'(y)] \\
    &=f'(y)h(y) - f(y)h'(y) \\
    &= 2[-\varphi(y)+y^2\varphi(y) +2y(1-\Phi(y)) - (1+y^2)\varphi(y)]h(y)\\
    &\qquad -f(y) 2[\varphi(y)-y^2\varphi(y) - 2y(1-\Phi(y)) + y^2\varphi(y)]\\ 
    &= 4[y(1-\Phi(y))-\varphi(y)] h(y) + 4f(y)[y(1-\Phi(y))-\varphi(y)]\\
    &=4[\underbrace{y(1-\Phi(y))-\varphi(y)}_{c(y)}] [h(y) + f(y)].
\end{align*}
Now $c'(y) = (1-\Phi(y)-y\varphi(y) +y\varphi(y) = 1-\Phi(y) \geq 0$ and
$\lim_{y\to\infty} c(y) = 0$, thus we have $c(y) \leq 0$ and
hence 
\[
\frac{\partial}{\partial y} \frac{f(y)}{g(y)} = \frac{f'(y)g(y) -
  f(y)g'(y)}{g(y)^2} \leq 0,
\] 
which finishes the proof.
\end{proof}

\subsection{Proof of \cref{thm:convex-asymptotics}}
\label{app:convex-asymptotics}

We will use results from \citet{thrampoulidis_abbasi_hassibi_2018}, which use a 
slightly different scaling for the feature matrix. In particular, they use a
variance scaling of $1/p$ for the entries of the feature vector $x_i$, whereas
recall (from \cref{asm:cgmt}), we consider a variance scaling of $1/n$. We can
thus rewrite the estimator of interest from \eqref{eq:convex-opt} (after
dividing by $\gamma$) as: 
\begin{align}
  \label{eq:convex-regularized-supplement}
  \hbeta_\lambda^{\convex} \in \argmin_{b \in \RR^p} \frac{1}{2}
  \sum_{i=1}^{n} (\ty_i - \tx_i^\top b)^2 + \tlambda \sum_{i=1}^p \reg(b_i),  
\end{align}
where the transformed variables are:
\begin{equation}
\label{eq:data-transformation-cgmt}
\tx_i = \gamma^{-1/2} x_i, \quad \ty_i = \gamma^{-1/2} y_i, \quad \tlambda =
\lambda / \gamma. 
\end{equation}
This transformation follows since the minimizers do not change up to positive
scaling (which in our case is by $\gamma$) of the objective function. 
Since $x_i$ has i.i.d.\ entries with variance $1/n$ and $y_i = x_i^\top \beta +
\eps_i$ in \cref{asm:cgmt}, in the transformed formulation
\eqref{eq:convex-regularized-supplement}, the feature vectors $\tx_i$ have 
i.i.d.\ entries with variance $1/p$, and the response variables follow the
linear model $\ty_i = \tx_i \beta + \teps_i$, with the transformed noise defined 
as $\teps_i = \gamma^{-1/2} \eps_i$.

With the transformation in \eqref{eq:convex-regularized-supplement}, we now
apply the master theorem of \citet{thrampoulidis_abbasi_hassibi_2018}. 
Define the following nonlinear system of equations in four scalar variables
$(\alpha, \zeta, \kappa, \nu)$: 
\begin{empheq}{align}
    \alpha^2 &= \EE \big[
    \big( \tfrac{\gamma \lambda}{\nu} \cdot \env_{\reg}' ( \tfrac{\zeta}{\nu}
    H + B; \tfrac{\gamma \lambda}{\nu} ) - \tfrac{\zeta}{\nu} H \big)^2 \big]  
    = \EE \big[ \big( \prox_{\reg}( \tfrac{\zeta}{\nu} H + B; \tfrac{\gamma
      \lambda}{\nu}) - B \big)^2 \big] 
    \label{eq:CGMT-1a}\\
    \gamma \zeta^2 &= \frac{\alpha^2 + \sigma^2/\gamma}{(1 +
      \kappa)^2} \label{eq:CGMT-1b}\\ 
    \kappa\zeta &= \EE\big[  
    \big(\tfrac{\gamma \lambda}{\nu} \cdot \env_{\reg}' (\tfrac{\zeta}{\nu} H +
    B ; \tfrac{\gamma \lambda}{\nu} ) - \tfrac{\zeta}{\nu} H \big) \cdot (-H) 
    \big] = \EE \big[ \big(\prox_\reg( \tfrac{\zeta}{\nu} H + B;
    \tfrac{\lambda}{\nu}) - B \big) \cdot H \big] \label{eq:CGMT-1c} \\ 
    \gamma \nu &= \frac{1}{1 + \kappa} \label{eq:CGMT-1d} 
\end{empheq}
where $H\sim \cN(0,1)$, and$B \sim F$ independently of $H$. As usual, when $B=0$
(almost surely), we denote the solution by $(\alpha_0,\zeta_0,\kappa_0,\nu_0)$.  

The parameters from \eqref{eq:CGMT-1a}--\eqref{eq:CGMT-1d} encode information
regarding the asymptotics of various stochastic quantities that we will need in
our derivation. Before we do that, we will reformulate the system above to
better align with the results for the ridge and lasso predictors. 

\paragraph{Reformulation of \eqref{eq:CGMT-1a}--\eqref{eq:CGMT-1d}.} 

Consider the following change of variables:
\begin{equation}
  \label{eq:def-a-tau}
    a = \frac{\gamma \lambda}{\zeta}
    \quad
    \text{and}
    \quad
    \tau = \frac{\zeta}{\nu}.
\end{equation}
We will first reformulate \eqref{eq:CGMT-1a}--\eqref{eq:CGMT-1d} using $(\tau,
a)$, yielding the following equivalent system:  
\begin{empheq}{align}
  \tau^2 &= \sigma^2 + \gamma \EE [(\prox_{\reg}(\tau H +  B ; {a\tau}) -
  B)^2] \label{eq:CGMT-tau-reformulation}, \\  
  \lambda &= a \tau (1 - \gamma \EE[\prox_{\reg}' (\tau H +  B ;
  {a\tau})]), \label{eq:CGMT-a-reformulation} 
\end{empheq}
where $H \sim \cN(0, 1)$ and $\bTheta \sim F$ independent of $H$.
The validity of this reformulation is proved later on. Letting $\mu = a
\tau$, the system in \eqref{eq:CGMT-tau-reformulation},
\eqref{eq:CGMT-a-reformulation} is exactly the same (after rearranging) as 
\eqref{eq:cgmt-fixed-point-tau}, \eqref{eq:cgmt-fixed-point-mu}.  

We are finally ready to obtain the asymptotics of the various notions of degrees
of freedom, which we present in separate parts in what follows.

\paragraph{Fixed-X degrees of freedom.}

We first note that for the estimator $\hbeta_\lambda^\convex$ as defined in
\eqref{eq:convex-opt}, the map $y \mapsto X \hbeta_\lambda^{\convex}$ is
1-Lipschitz on $\RR^n$ (see, e.g, Proposition 3 of \citet{bellec2017bounds}) and  
has symmetric positive semidefinite Jacobian. Thus it is weakly differentiable
and Stein's formula can be applied, which shows that its fixed-X degrees of
freedom are then given by:
\[
    \df\f(\hf_\lambda^\convex)
    = \EE\big[ \tr[(\partial/\partial y) X \hbeta_\lambda^\convex] \,|\, X
    \big].  
\]
Now, observe that
\[
    (\partial / \partial \ty) \tX \hbeta_\lambda^{\convex}
    = (\partial / \partial \ty) \gamma^{-1/2} X \hbeta_\lambda^{\convex}
    = (\partial / \partial y) (\partial y / \partial \ty) \gamma^{-1/2} X
    \hbeta_\lambda^{\convex} = (\partial / \partial y) X
    \hbeta_\lambda^{\convex}. 
\]
Thus, fixed-X degrees of freedom is unchanged under the transformation of the 
data in \eqref{eq:data-transformation-cgmt}:  
\begin{equation}
    \label{eq:DofF-formula-penalized-estimators}
    \df\f(\hf_\lambda^\convex)
    = \EE\big[ \tr[(\partial/\partial \ty) \tX \hbeta_\lambda^\convex]
    \,|\, X \big]. 
\end{equation}
In what follows, we first obtain limit in probability of the trace functional
$\tr[(\partial/\partial \ty) \tX \hbeta_\lambda^\convex] / p$, and then convert
this convergence to obtain the desired limit of $\df\f(\hf_\lambda^\convex) /
n$ in \eqref{eq:DofF-formula-penalized-estimators}.  

Define the matrix $V_\lambda = I -  (\partial / \partial \ty) \tX
\hbeta_\lambda^\convex \in \RR^{n \times n}$. By Corollary 3.2 in
\citet{bellec2023out}, as $n,p \to \infty$ with $p/n \to \gamma \in (0,
\infty)$, we have   
\begin{equation}
    \label{eq:limit-trace-V}
    \tr[V_\lambda] / p \pto \nu.
\end{equation}
We mention in the passing here that the trace convergence result
\eqref{eq:limit-trace-V} in the special case of lasso follows from Theorem
8 of \citet{celentano2023lasso} and in the more general case of convex
regularized M-estimators follows from Appendix A.4 of
\citet{koriyama2024precise}. Now rearranging \eqref{eq:limit-trace-V}, we get  
\begin{equation}
    \label{eq:bDofF-penalized-estimators-form1}
    \tr[(\partial/\partial \ty) \tX \hbeta_\lambda^\convex] / n
    \pto 1 - \gamma \nu = 1 - \gamma \EE\big[\prox_reg'( B + \tau H;  a \tau)\big]
    = 1 - \lambda / \mu, 
\end{equation}
where the second-to-last equality follows from
\eqref{eq:ellq-interpolator-eq2-0-without-atau}, and the last equality follows 
from \eqref{eq:CGMT-a-reformulation} (after the change of variables $\mu = a
\tau$). Finally, noting that $\tr[V_\lambda] / n$ ranges between $[0,1]$ for almost
every $y$ (see, e.g., Proposition 2.2 of \citet{bellec2023out}), invoking the 
dominated convergence theorem (to be clear, a variant that handles convergence
in probability by passing to a subsequence; see, e.g., Exercise 2.3.7 of
\citet{durrett2010probability}) to convert
\eqref{eq:bDofF-penalized-estimators-form1} to a statement about convergence in
expectation, we have that $\df\f(\hf_\lambda^\convex) / n$ converges to the same
limit. This finishes the proof of \eqref{eq:convex-df-F-asympequi}.

\paragraph{Emergent random-X degrees of freedom.}

Next we consider emergent random-X optimism. Under the scaled (by $n$) isotropic  
features and linear model in \cref{asm:cgmt}, observe that  
\begin{align}
    \err\r(\hf_\lambda^\convex \,|\, X, y)
    &= \sigma^2 + \| \hbeta_\lambda^\convex - \beta \|_2^2 / n,
    \label{eq:ErrR-penalized-estimators}
    \\
    \err\t(\hf_\lambda^\convex \,|\, X, y)
    &= \| y - X \hbeta_\lambda^\convex \|_2^2 / n
    = \gamma \| \ty - \tX \hbeta^\convex \|_2^2 / n.
    \label{eq:ErrT-penalized-estimators}
\end{align}
From Theorem 4.1 in \citet{thrampoulidis_abbasi_hassibi_2018}, for the problem  
\eqref{eq:convex-regularized-supplement}, we note that 
\begin{equation}
    \label{eq:ErrR-ErrT-limits-penalized-estimators}
    \| \hbeta_\lambda^\convex - \beta \|_2^2 / p
    \pto \alpha^2
    \quad
    \text{and}
    \quad
    \| \ty - \tX \hbeta_\lambda^\convex \|_2^2 / p
    \pto \zeta^2,
\end{equation}
as $n,p \to \infty$ with $p/n \to \gamma \in(0, \infty)$.
Combining \eqref{eq:ErrR-penalized-estimators},
\eqref{eq:ErrT-penalized-estimators} and
\eqref{eq:ErrR-ErrT-limits-penalized-estimators}, we have   
\begin{equation}
    \label{eq:OptR-limit-penalized-estimators}
    \opt\r(\hf_\lambda^\convex \,|\, X, y)
    \pto \gamma \alpha^2 + \sigma^2 - \gamma^2 \zeta^2
    = \tau^2 - \gamma^2 \zeta^2,
\end{equation}
where the last line follows from combining \eqref{eq:CGMT-1a} and
\eqref{eq:CGMT-tau-reformulation}. Also, note from \eqref{eq:CGMT-1b},
\eqref{eq:CGMT-1d}, and \eqref{eq:bDofF-penalized-estimators-form1}, we have  
\begin{equation}
    \label{eq:gcv-relation-proof}
    \frac{\gamma^2 \zeta^2}{\gamma \alpha^2 + \sigma^2}
    = \frac{1}{(1 + \kappa)^2}
    = \gamma^2 \nu^2
    \asympequi \lambda^2 / \mu^2,
\end{equation}
where the last equivalence follows from 
\eqref{eq:bDofF-penalized-estimators-form1}. Again, using \eqref{eq:CGMT-1a} and
\eqref{eq:CGMT-tau-reformulation}, note that we can rewrite
\eqref{eq:gcv-relation-proof} 
as 
\begin{equation}
    \label{eq:gcv-relation-proof-tau}
    \gamma^2 \zeta^2 \asympequi \lambda^2 / \mu^2 \cdot \tau^2.
\end{equation}
Substituting
\eqref{eq:gcv-relation-proof-tau} into \eqref{eq:OptR-limit-penalized-estimators}
and applying $\omega$ and dominated convergence finishes the proof of
\eqref{eq:convex-df-R-asympequi}.    

\paragraph{Intrinsic random-X degrees of freedom.}

The proof for the intrinsic case follows similarly. When the signal is absent,
we have   
\begin{equation}
    \label{eq:OptRi-limit-penalized-estimators}
    \opt\r\i(\hf_\lambda^\convex \,|\, X, y) 
    \to \sigma^2 + \gamma \alpha_0^2 - \gamma^2 \zeta_0^2
    = \gamma \tau_0^2 - \lambda^2 / \mu_0^2 \cdot \tau_0^2, 
\end{equation}
where we replaced $\alpha$ with $\alpha_0$, $\zeta$ with $\zeta_0$, and
$\mu$ with $\mu_0$ in \eqref{eq:OptR-limit-penalized-estimators} and
\eqref{eq:gcv-relation-proof-tau}. Applying $\omega$ to
\eqref{eq:OptRi-limit-penalized-estimators} and invoking the dominated
convergence theorem finishes the proof of \eqref{eq:convex-df-R-i-asympequi}.   

\paragraph{Derivation of the reformulation
  \eqref{eq:CGMT-1a}--\eqref{eq:CGMT-1d}.} 

Using \eqref{eq:def-a-tau}, along with \eqref{eq:cgmt-fixed-point-tau} and
\eqref{eq:CGMT-1d}, note that \eqref{eq:CGMT-1b} becomes 
\[
    \tau^2 
    = \sigma^2 + \gamma \EE [ ( \prox_{\reg}( \tau H +  B;  a \tau ) -  B )^2 ].
\]
This supplies us with \eqref{eq:CGMT-tau-reformulation}.
Now, define the Moreau envelope by
\[
\env_\reg (x; t) = \min_{z \in \RR} \, \frac{1}{2t} (x - z)^2 + \reg(z).
\]
We recall a key relationship between the proximal operator and Moreau envelope. 
\begin{equation}
    \label{eq:prox_moreau_relation}
    \env_\reg'(x; \tau) 
    = \frac{1}{\tau} (x - \prox_\reg(x; \tau)).
\end{equation} 
Towards obtaining \eqref{eq:CGMT-a-reformulation}, from Stein's lemma, observe
that 
\begin{equation}
    \label{eq:stein-lemma}
    \EE[\env'_{\reg}(B + \tau H; \kappa) \cdot H]
    = \tau \EE[\env''_{\reg}(B + \tau H; \kappa)].
\end{equation}
Taking the derivative of the relation \eqref{eq:prox_moreau_relation}, we also
have 
\begin{equation}
    \label{eq:deriv-prox-moreau-relation}
    \kappa \env''_{\reg}(B + \tau H; \kappa) = 1 - \prox_{\reg}'(B + \tau H;
    \kappa).  
\end{equation}
Combining \eqref{eq:stein-lemma} and \eqref{eq:deriv-prox-moreau-relation}, we
obtain 
\begin{equation}
    \label{eq:CGMT-1c-rewrite-0}
    \EE\big[\env'_{q}(\tfrac{\zeta}{\nu} H + B; \tfrac{\lambda}{\nu}) \cdot
    H\big] 
    = \tfrac{\zeta}{\nu} \EE\big[ \env''_{q}(\tfrac{\zeta}{\nu} H + B;
    \tfrac{\lambda}{\nu}) \big] 
    = \tfrac{\zeta}{\nu} \tfrac{\nu}{\lambda} \EE\big[1 -
    \prox_{\reg}'(\tfrac{\zeta}{\nu} H + B; \tfrac{\lambda}{\nu})\big]. 
\end{equation}
Using \eqref{eq:CGMT-1c-rewrite-0}, we can rewrite
\eqref{eq:cgmt-fixed-point-mu} as: 
\begin{equation}
    \label{eq:CGMT-1c-rewrite-1}
    \kappa \zeta 
    = \tfrac{\zeta}{\nu} - \tfrac{\lambda}{\nu} \tfrac{\zeta}{\lambda} \EE\big[1
    - \prox_{\reg}'(\tfrac{\zeta}{\nu} H + B; \tfrac{\lambda}{\nu})\big] 
    = \tfrac{\zeta}{\nu} \big(1 - \EE\big[1 - \prox_{\reg}'(\tfrac{\zeta}{\nu} H
    + B; \tfrac{\lambda}{\nu})\big]\big). 
\end{equation}
Now, using \eqref{eq:def-a-tau}, we can express \eqref{eq:CGMT-1c-rewrite-1} as:   
\begin{equation}
    \kappa \nu
     =
     1 - \EE\big[1 - \prox_{\reg}'( B + \tau H;  a \tau)\big]. 
\end{equation}
Rearranging and using \eqref{eq:CGMT-1d} yields
\begin{equation}
    \label{eq:ellq-interpolator-eq2-0-without-atau}
    \gamma \nu
    = 1 - \gamma \EE\big[\prox_{\reg}'( B + \tau H;  a \tau)\big].
\end{equation}
Multiplying both sides of \eqref{eq:ellq-interpolator-eq2-0-without-atau} by $a 
\tau$ and using \eqref{eq:def-a-tau}, we then arrive at: 
\begin{equation}
    \label{eq:ellq-interpolator-eq2-0}
     \lambda = a \tau (1 - \gamma \EE[\prox_{\reg}'( B + \tau H;  a \tau)]). 
\end{equation}
This supplies us with \eqref{eq:CGMT-a-reformulation}, completing the
reformulation.

\section{Numerical experiments for \cref{sec:theory}}
\label{app:numerical-illustrations}

\subsection{Data models}
\label{app:data-models}

For the simulations in \cref{app:ridge-illustration,app:ridgeless-illustration}, 
as well as that behind \cref{fig:ridgeless-intro}, we generate data according to
a nonlinear model     
\[
    y_i = x_i^\top \beta + (\|x_i\|_2^2/d - 1 ) + \eps_i, \quad i \in [n],   
\]
where each \smash{$x_i \sim \cN(0, \Sigma_{\textsc{ar1}, \rho=0.25})$}, $\eps_i 
\sim \cN(0, 0.4^2)$, and $\beta$ is drawn uniformly from the unit sphere in
$\RR^p$. Here, we use \smash{$\Sigma_{\textsc{ar1}, \rho}$} to denote a
covariance matrix with $\rho^{|i-j|}$. The ``linearized'' SNR in this setup is  
$\Var[x_i^\T \beta] / \sigma^2 = 6.25$.    

For the simulation behind \cref{fig:ridgeless-intro} only (i.e., not in
\cref{app:ridge-illustration,app:ridgeless-illustration}), we sample $P=300$
features total according to the above model, sort them in order of deceasing 
magnitude of $|\beta_j|$ (the linear part of the signal), and use the first $p$
for least squares (if $p \leq n$), or ridgeless regression (if $p > n$), as $p$
varies from 1 to 300. All quantities in this figure are empirical estimates
computed over 500 repetitions (500 times drawing the simulated data sets), and
in each repetition, the empirical prediction errors are computed based on a test
set of 1000 samples.     

For the simulations in \cref{app:lasso-illustration,app:lassoless-illustration},
we generate data according to a linear model
\[
y_i = x_i^\top \beta + \eps_i, \quad i \in [n],
\]
where each $x_i \sim \cN(0, I/n)$, $\eps_i \sim \cN(0, 1)$, and we set
\smash{$\beta_j = \sqrt{n/(\delta p)}$} with probability $\delta$ on, and
$\beta_j = 0$ with probability $1-\delta$, independently for $j \in [p]$. This
setup has an SNR of 1.   
    
In all figures that follow in this appendix section, 
\cref{fig:ridge-illustration,fig:ridgeless-illustration,%
fig:lasso-illustration,fig:lassoless-illustration},
the curves indicate theoretical quantities (asymptotic equivalents from the
theorems), while the dots denote empirical estimates from averaging over 100 
repetitions (100 times drawing the simulated data sets). In each repetition,
empirical prediction errors are computed based on a test set of 1000 samples. 
   
\subsection{Figure formatting}

For all figures in this section, we use the following formatting scheme. 

\begin{itemize}
\item Curves in the underparameterized regime are colored
  \textcolor{pythonblue}{blue}. 
\item Curves in the overparameterized regime are colored
  \textcolor{pythonorange}{orange}. 
\item Fixed-X quantities are colored \textcolor{pythongreen}{green}. 
\item Emergent random-X quantities are denoted by solid lines
  (\rule[0.5ex]{1em}{0.2pt}).
\item Intrinsic random-X quantities are denoted by dashed lines
  (\rule[0.5ex]{0.35em}{0.2pt}\hspace{0.25em}\rule[0.5ex]{0.35em}{0.2pt}\hspace{0.25em}\rule[0.5ex]{0.35em}{0.2pt}).  
\end{itemize}

\subsection{Ridge regression}
\label{app:ridge-illustration}

\cref{fig:ridge-illustration} provides empirical support for the behaviors
described in \cref{prop:ridge-monotonicity} and
\cref{thm:ridge-asymptotics}. The top row corresponds to the underparameterized 
regime, while the bottom row corresponds to the overparameterized regime. 
Throughout, we see that the empirical estimates (dots) closely track the
asymptotic equivalents (curves). 

Moreover, we observe the following behaviors which align with the theory. The
intrinsic random-X degrees of freedom decreases monotonically with $\lambda$ in
both the underparameterized and overparameterized regimes. Interestingly, the
emergent random-X degrees of freedom can have nonmonotonic behavior in
$\lambda$. Lastly, emergent random-X degrees of freedom is consistently higher  
than intrinsic random-X degrees of freedom, confirming that the presence of bias
inflates degrees of freedom.

\begin{figure}[p]
\includegraphics[width=\textwidth]{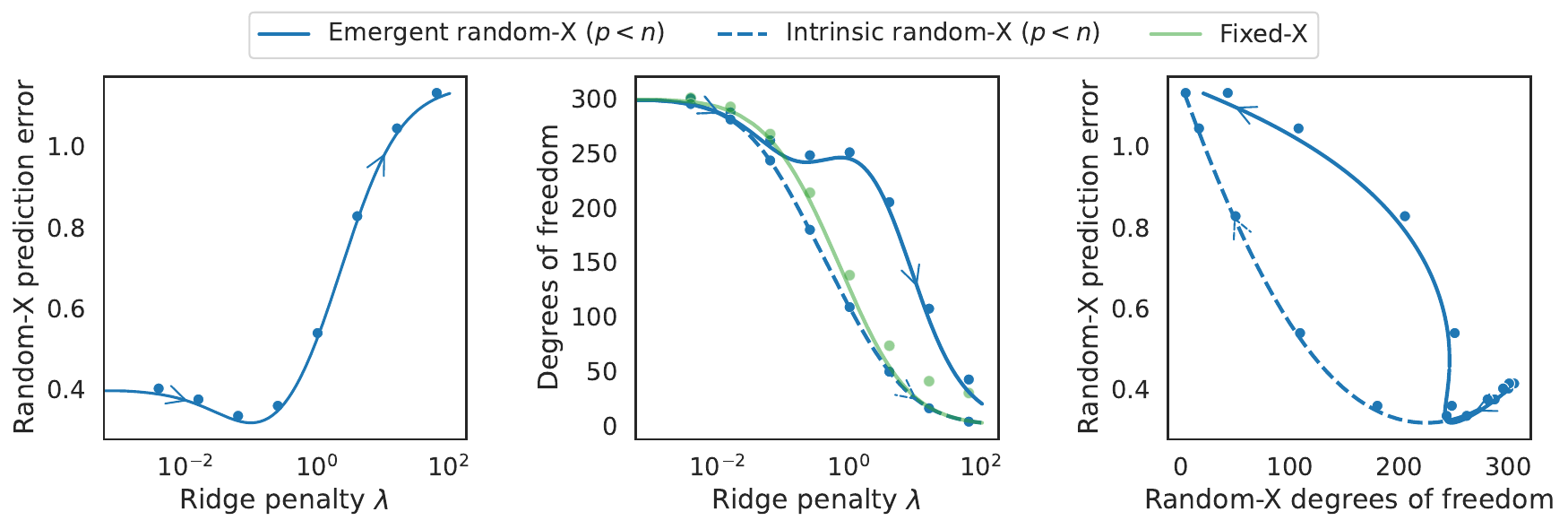}
\includegraphics[width=\textwidth]{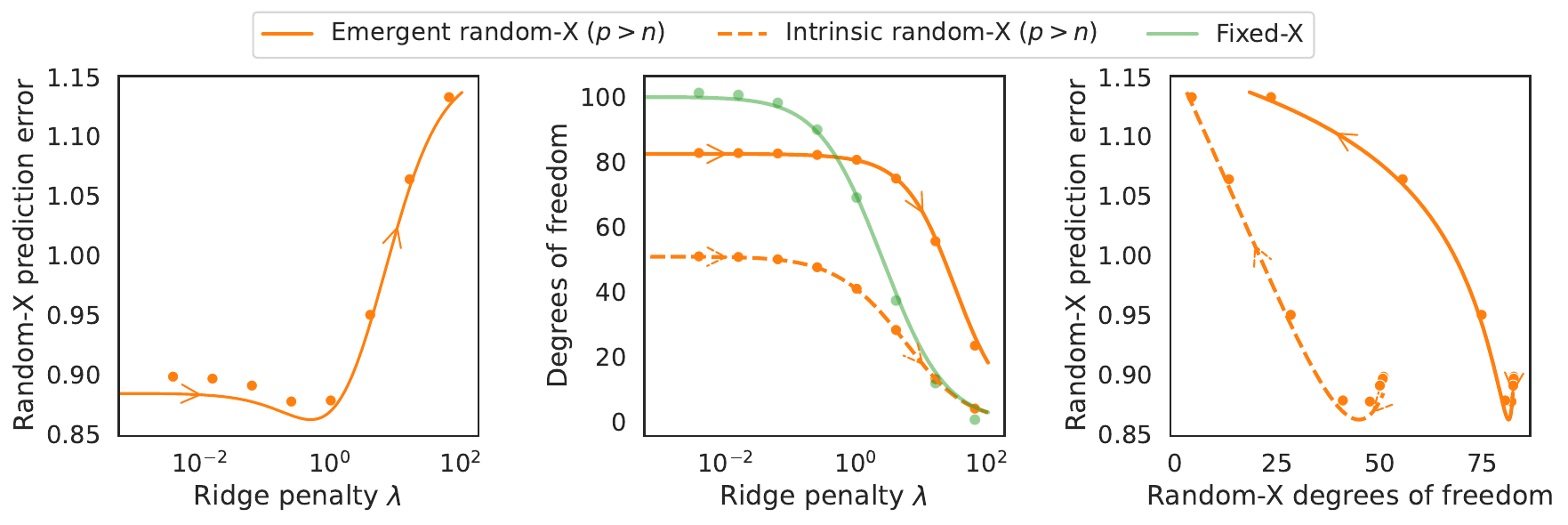}
\caption{Prediction error and degrees of freedom of ridge predictors, over
  varying $\lambda$, in a problem setting with $p=300$ features. The first
  row corresponds to the underparameterized regime, $n=500$, and the second to
  the overparameterized regime, $n=200$. The precise setup is as described in
  \cref{app:data-models}.}   
\label{fig:ridge-illustration}

\bigskip\bigskip
\includegraphics[width=\textwidth]{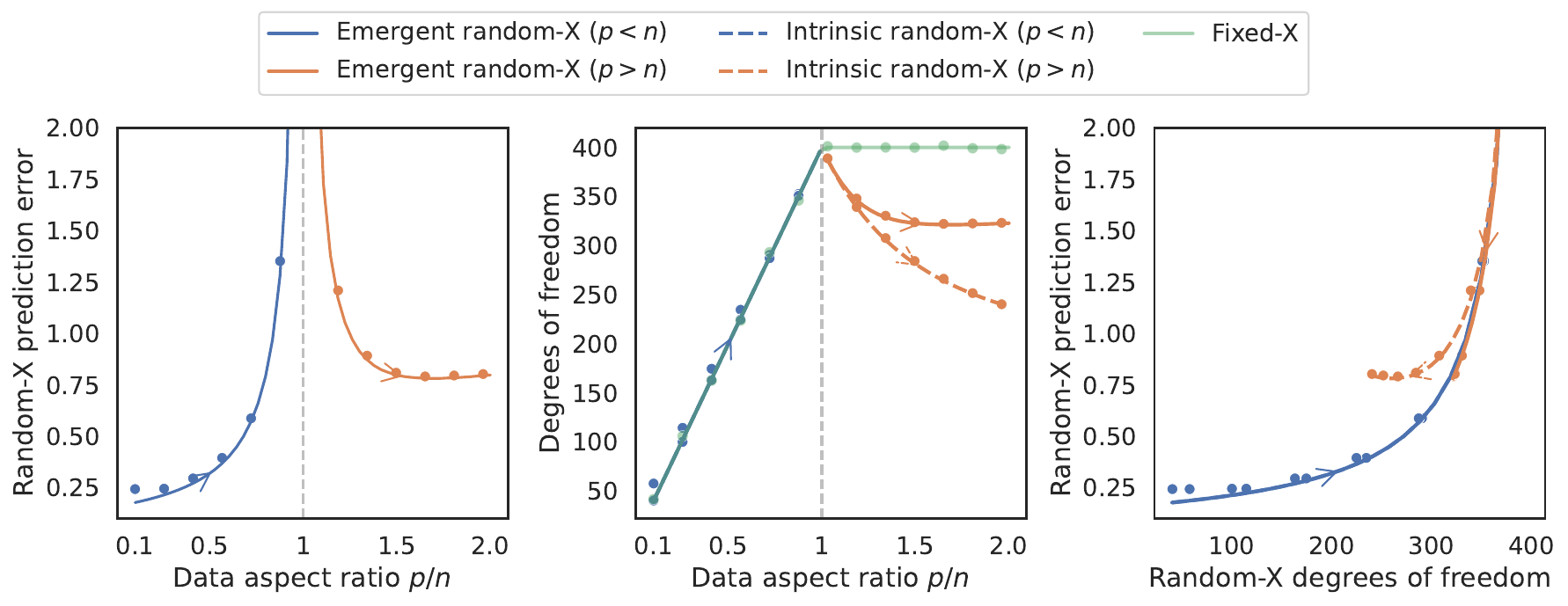}
\caption{Prediction error and degrees of freedom of ridgeless predictors with
  varying aspect ratio $\gamma_n = p/n$. The number of samples is $n=400$. The 
  precise setup is as described in \cref{app:data-models}.}   
\label{fig:ridgeless-illustration}
\end{figure}

\subsection{Ridgeless regression}
\label{app:ridgeless-illustration}

\cref{fig:ridgeless-illustration} provides empirical support for the behaviors 
described in \cref{thm:ridgeless-asymptotics} and
\cref{prop:ridgeless-monotonicity}. We see that the empirical estimates (dots)
closely track the asymptotic equivalents (curves).

Furthermore, we observe the following behaviors which align with the
theory. Both the intrinsic and emergent random-X degrees of freedom are
maximized at $\gamma_n = 1$. The intrinsic random-X degrees of freedom decreases
on both sides as $\gamma_n$ moves away from 1. Moreover, emergent random-X
degrees of freedom is always higher than intrinsic random-X degrees of freedom. 

\subsection{Lasso illustration}
\label{app:lasso-illustration}

\cref{fig:lasso-illustration} provides empirical support for the behaviors 
described in \cref{thm:lasso-asymptotics} and
\cref{prop:lasso-monotonicity-nonnegativity}. The top row corresponds to the
underparameterized regime, while the bottom row corresponds to the
overparameterized regime. Throughout, we see that the empirical estimates (dots)
closely track with the asymptotic equivalents (curves). 

We also see the following behaviors which align with the theory. The
intrinsic random-X degrees of freedom decreases monotonically with $\lambda$ in
either the underparameterized and overparameterized setting. Also, the emergent 
random-X degrees of freedom is always higher than intrinsic random-X degrees of
freedom, confirming that the presence of bias inflates degrees of freedom.   

\begin{figure}[p]
\includegraphics[width=\textwidth]{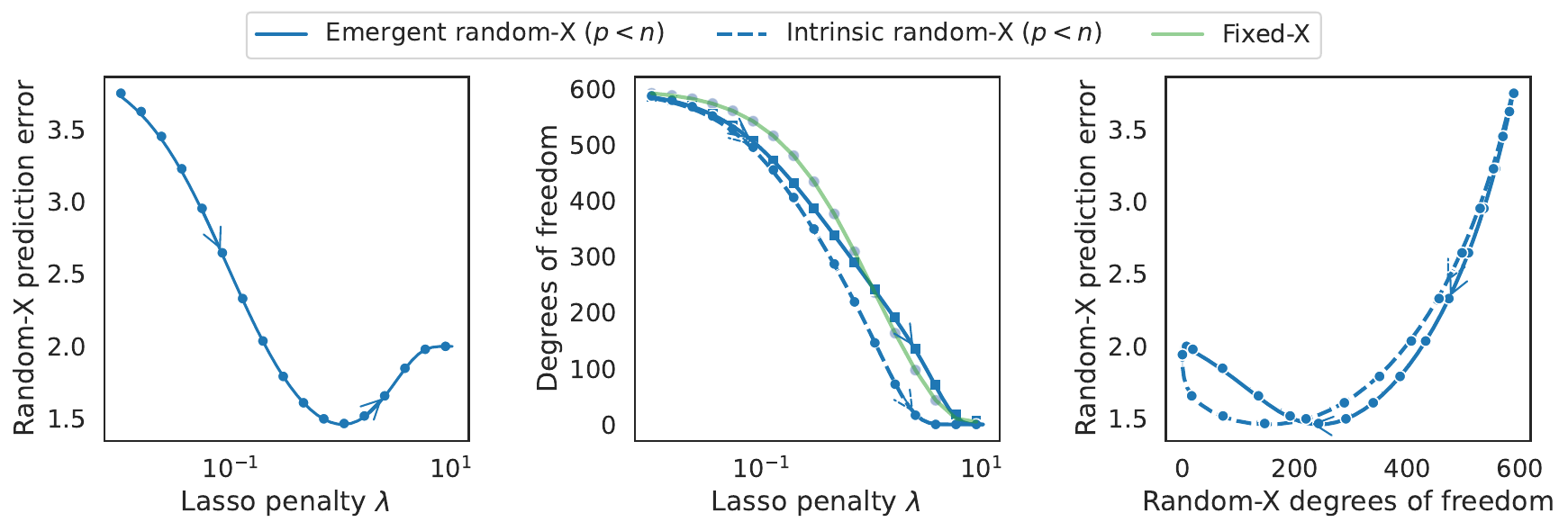}
\includegraphics[width=\textwidth]{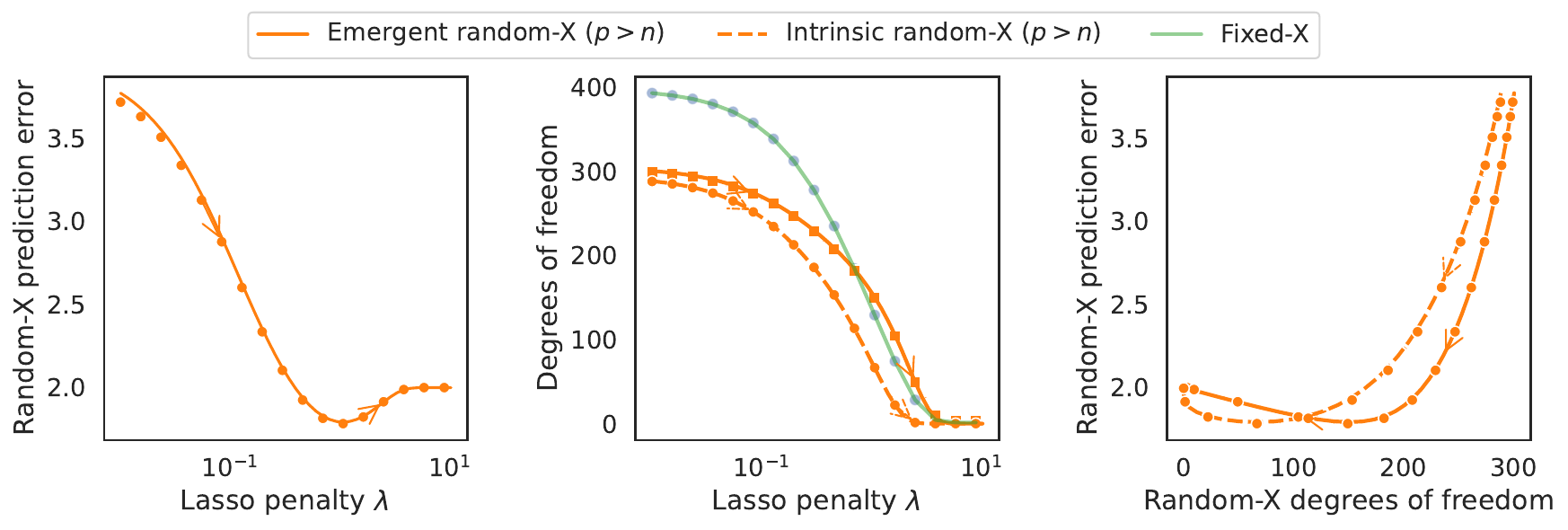}
\caption{Prediction error and degrees of freedom of lasso predictors, over
  varying $\lambda$, in a problem setting with $p=600$ features. The first
  row corresponds to the underparameterized regime, $n=800$, and the second to
  the overparameterized regime, $n=400$. The precise setup is as described in
  \cref{app:data-models} with $\delta=1/6$.}  
\label{fig:lasso-illustration}

\bigskip\bigskip
\includegraphics[width=\textwidth]{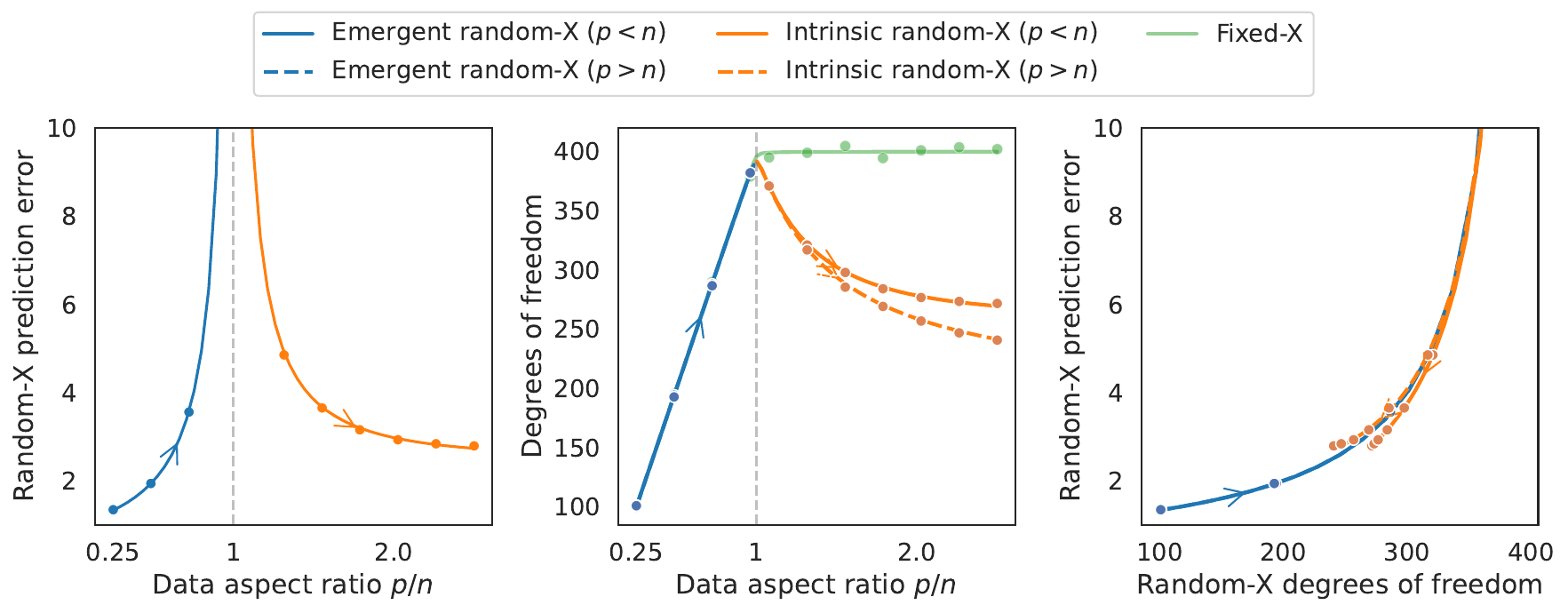}
\caption{Prediction error and degrees of freedom of lassoless predictors with
  varying aspect ratio $\gamma_n = p/n$. The number of samples is $n=400$. The
  precise is as described in \cref{app:data-models} with $\delta=1/10$.} 
\label{fig:lassoless-illustration}
\end{figure}

\subsection{Lassoless illustration}
\label{app:lassoless-illustration}

\cref{fig:lassoless-illustration} provides empirical support for the behaviors 
described in \cref{thm:lassoless-asymptotics} and
\cref{prop:lassoless-monotonicity}. We see that the empirical estimates (dots)
closely track the asymptotic equivalents (curves).

Furthermore, we observe the following behaviors which align with the
theory. Both the intrinsic and emergent random-X degrees of freedom are
maximized at $\gamma_n = 1$. The intrinsic random-X degrees of freedom decreases
on both sides as $\gamma_n$ moves away from 1. Moreover, emergent random-X
degrees of freedom is always higher than intrinsic random-X degrees of freedom. 

\section{Additional experiments for \cref{sec:experiments}}
\label{app:additional-experiments}

\begin{figure}[p]
\includegraphics[width=\textwidth]{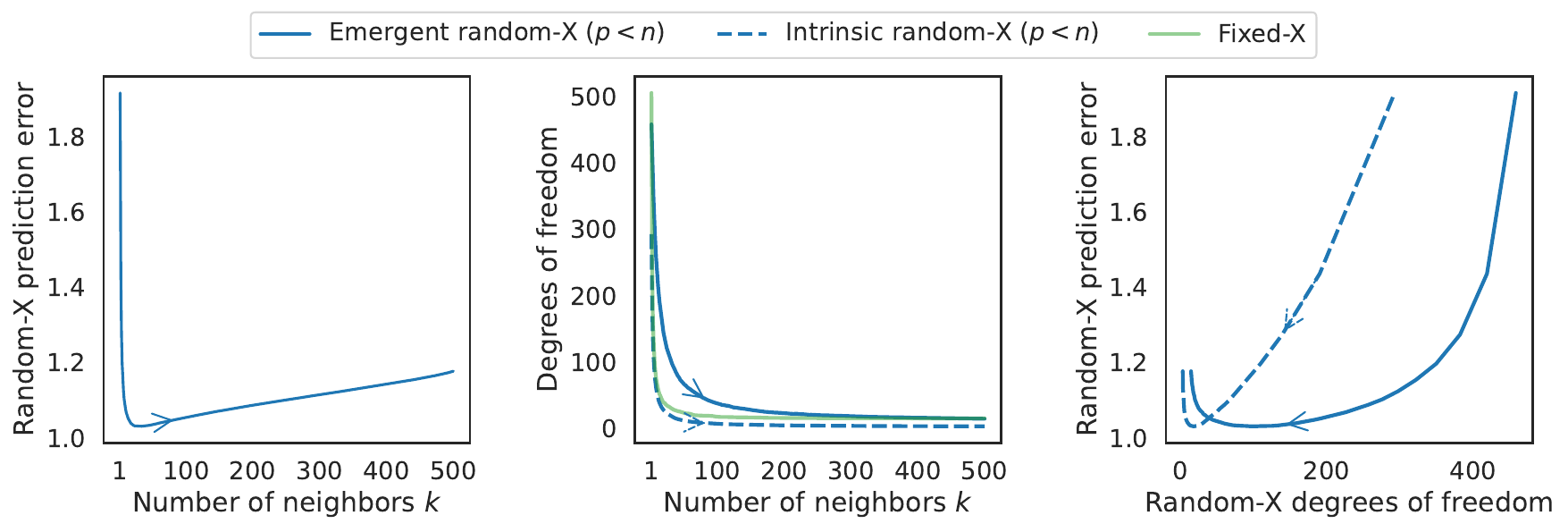}
\caption{Prediction error and degrees of freedom of \knn predictors, in a
  problem with $n=500$, $p=300$.}
\label{fig:knn-underparam}

\bigskip
\includegraphics[width=\textwidth]{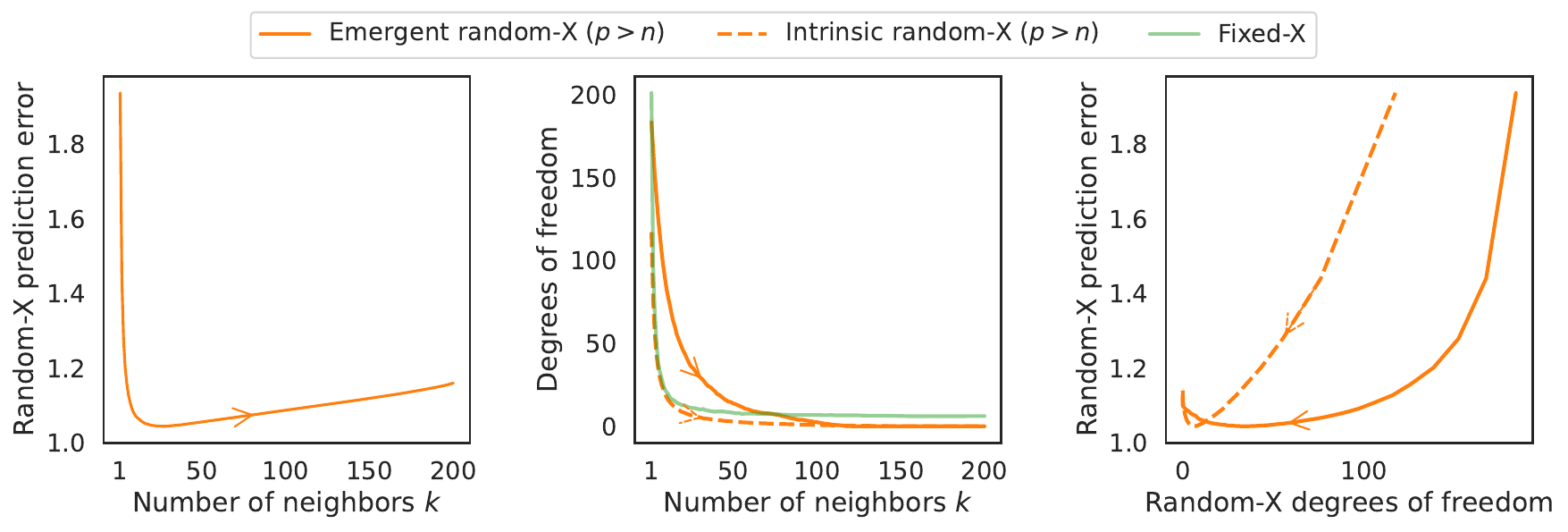}
\caption{Prediction error and degrees of freedom of \knn predictors, in a
  problem with $n=200$, $p=300$.}
\label{fig:knn-overparam}

\bigskip
\includegraphics[width=\linewidth]{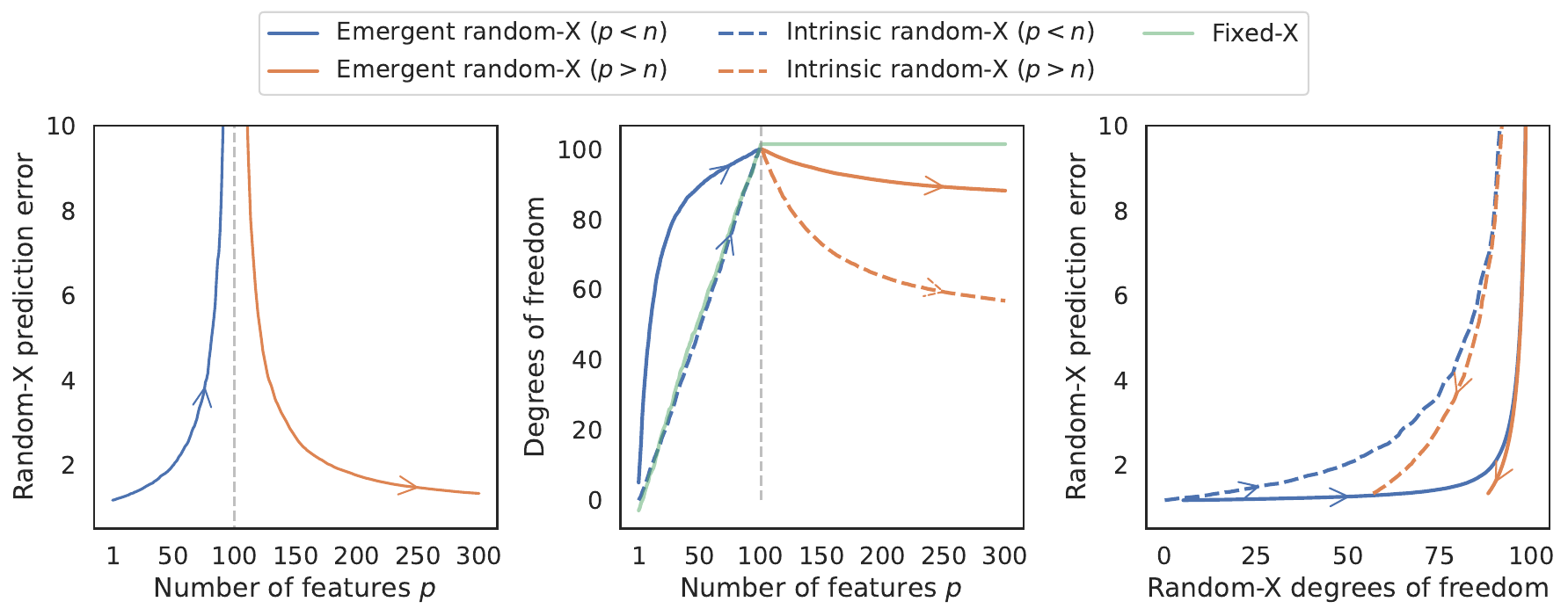}
\caption{Prediction error and degrees of freedom of ridgeless regression on
  random features, in a problem where $n=100$ and $p$ ranges from 1 to 300.} 
\label{fig:random-features}
\end{figure}

\subsection[k-nearest neighbors regression]{$k$-nearest neighbors regression} 
\label{app:knn-additional}

Here we study $k$-nearest neighbors (\knn) regression. Note that this is a
linear smoother, hence its random-X degrees of freedom is characterized by 
\cref{prop:smoother}, but it is not defined by a penalized least squares
problem, therefore it eludes the analysis in \cref{prop:B+} which characterizes  
emergent minus intrinsic degrees of freedom. 

We simulate data according to the nonlinear model described in
\cref{app:data-models}. \cref{fig:knn-underparam} displays the results for an 
underparameterized problem with $n = 500$, $p=300$, and \cref{fig:knn-overparam}
displays the results for an overparameterized problem with $n = 200$,
$p=300$. In both cases, we can see (middle panel) that the intrinsic random-X
degrees of freedom is slightly smaller than the fixed-X degrees of freedom
throughout, for all $k$; whereas the emergent random-X degrees of freedom is
somewhat larger than fixed-X degrees of freedom for small $k$, then it drops
down for larger $k$. A commonality we see here, as with all other experiments,
is that the degrees of freedom ``due to bias'' is positive. However, an
interesting difference is as follows: emergent degrees of freedom is
\emph{larger} than fixed-X degrees of freedom on the less-regularized side of
the model class (smaller $k$); with other predictors, we observe emergent
degrees of freedom being smaller than fixed-X degrees of freedom on this side of
the path (cf.\ ridge and lasso predictors for small $\lambda$ in
\cref{app:ridge-illustration,app:lasso-illustration}).

\subsection{Random features}

We examine ridgeless regression on random features. We simulate data
according to the nonlinear model in \cref{app:data-models}, with $n=100$ samples
and $P=300$ features total, then we use features \smash{$\tilde{x}_i =
  \mathrm{tanh}(Fx_i)$} for least squares (if $p \leq n$), or ridgeless
regression (if $p > n$), where $F \in \RR^{p \times P}$ has entries drawn from
\smash{$\cN(0, 1/\sqrt{P})$}, and $p$ varies from 1 to 300.  

\cref{fig:random-features} displays the results. These results are overall
similar to \cref{fig:ridgeless-intro}, except the emergent random-X degrees of
freedom is inflated before the interpolation threshold at $p = n$. 

\end{document}